\providecommand{\jntpi}{\bm{\pi}}
\providecommand{\jntact}{\bm{a}}
\providecommand{\jntactb}{\bar{\bm{a}}}
\providecommand{\curpi}{\pi}
\providecommand{\tarpi}{\bar{\pi}}
\providecommand{\jntcurpi}{\jntpi}
\providecommand{\jnttarpi}{\bm{\tarpi}}
\providecommand{\jntinterpi}[1]{\hat{\bm{\pi}}^{#1}}
\providecommand{\V}{V}
\providecommand{\T}{\mathcal{T}}
\providecommand{\J}[1]{\mathcal{J}(#1)}
\providecommand{\Lo}[2]{\mathcal{L}_{#1}(#2)}
\providecommand{\G}{\mathcal{G}_{\jntcurpi}(\jnttarpi)}
\newtheorem{predefinition}{Definition}
\newenvironment{definition}[1]
{
    \begin{predefinition}[\emph{#1}]
        }{
    \end{predefinition}
}
\newtheorem{theorem}{Theorem}
\newtheorem{preproposition}{Proposition}
\newenvironment{proof}{\emph{Proof. }}{\hfill$\square$}
\newtheorem{lemma}{Lemma}
\newtheorem{corollary}{Corollary}
\def\eqref#1{eq.~\ref{#1}}
\def\Eqref#1{Eq.~(\ref{#1})}
\def\1{\bm{1}}
\DeclareMathAlphabet{\mathsfit}{\encodingdefault}{\sfdefault}{m}{sl}
\SetMathAlphabet{\mathsfit}{bold}{\encodingdefault}{\sfdefault}{bx}{n}
\newcommand{\E}{\mathbb{E}}
\DeclareMathOperator*{\argmax}{arg\,max}
\DeclareMathOperator*{\argmin}{arg\,min}
\providecommand{\alg}{A2PO}
\providecommand{\trace}{preceding-agent off-policy correction}
\providecommand{\seqrpi}{RPISA}
\providecommand{\seqrpippo}{RPISA-PPO}
\title{Order Matters: Agent-by-agent Policy Optimization}
\author{Xihuai Wang$^{1,2}$\thanks{Work done at Digital Brain Lab. $\quad^{\dagger}$Correspondence to Zheng Tian $<$tianzheng@shanghaitech.edu.cn$>$ and Weinan Zhang $<$wnzhang@sjtu.edu.cn$>$.},\, Zheng Tian$^{3\dagger}$,\, Ziyu Wan$^{1,2}$,\, Ying Wen$^{1}$,\, Jun Wang$^{2,4}$,\, Weinan Zhang$^{1\dagger}$ \\
$^{1}$ Shanghai Jiao Tong University,\,$^{2}$ Digital Brain Lab,\, \\
$^{3}$ ShanghaiTech University,\,$^{4}$ University College London \\
}
\Crefname{preproposition}{Prop.}{Prop.}
\Crefname{proposition}{Prop.}{Prop.}
\Crefname{appendix}{Appx.}{Appx.}
\Crefname{thm}{Thm.}{Thm.}
\Crefname{equation}{Eq.}{Eq.}
\Crefname{section}{Sec.}{Sec.}
\Crefname{figure}{Fig.}{Fig.}
\Crefname{table}{Tab.}{Tab.}
\Crefname{algorithm}{Alg.}{Alg.}
\renewcommand{\mod}[1]{{\color{black}#1}}
\begin{document}

\setlength{\tabcolsep}{4pt}
\doparttoc 
\faketableofcontents 
\maketitle
\vspace{-10pt}
\begin{abstract}
    While multi-agent trust region algorithms have achieved great success empirically in solving coordination tasks, most of them,  however, suffer from a non-stationarity problem since agents update their policies simultaneously. In contrast, a sequential scheme that updates policies agent-by-agent provides another perspective and shows strong performance. However, sample inefficiency and lack of monotonic improvement guarantees for each agent are still the two significant challenges for the sequential scheme. In this paper, we propose the \textbf{A}gent-by-\textbf{a}gent \textbf{P}olicy \textbf{O}ptimization (A2PO) algorithm to improve the sample efficiency and retain the guarantees of monotonic improvement for each agent during training. We justify the tightness of the monotonic improvement bound compared with other trust region algorithms. From the perspective of sequentially updating agents, we further consider the effect of agent updating order and extend the theory of non-stationarity into the sequential update scheme. To evaluate A2PO, we conduct a comprehensive empirical study on four benchmarks: StarCraftII, Multi-agent MuJoCo, Multi-agent Particle Environment, and Google Research Football full game scenarios. A2PO consistently outperforms strong baselines.
\end{abstract}
\vspace{-15pt}

\section{Introduction}
\vspace{-5pt}
Trust region learning methods in reinforcement learning (RL)~\citep{Kakade2002} have achieved great success in solving complex tasks, from single-agent control tasks~\citep{Andrychowicz2020} to multi-agent applications~\citep{Albrecht2018,Ye2020}. The methods deliver superior and stable performances because of their theoretical guarantees of monotonic policy improvement. Recently, several works that adopt trust region learning in multi-agent reinforcement learning (MARL) have been proposed, including algorithms in which agents independently update their policies using trust region methods~\citep{Witt2020,Yu2022} and algorithms that coordinate agents' policies during the update process~\citep{Wu2021,kuba2022trust}. Most algorithms update the agents simultaneously, that is, all agents perform policy improvement at the same time and cannot observe the change of other agents, as shown in \Cref{fig:tax}c. The simultaneous update scheme brings about the non-stationarity problem, i.e., the environment dynamic changes from one agent's perspective as other agents also change their policies~\citep{HernandezLeal2017}.

\begin{wrapfigure}{r}{0.5\linewidth}
    \centering
    \vspace{-10pt}
    \includegraphics[width=\linewidth]{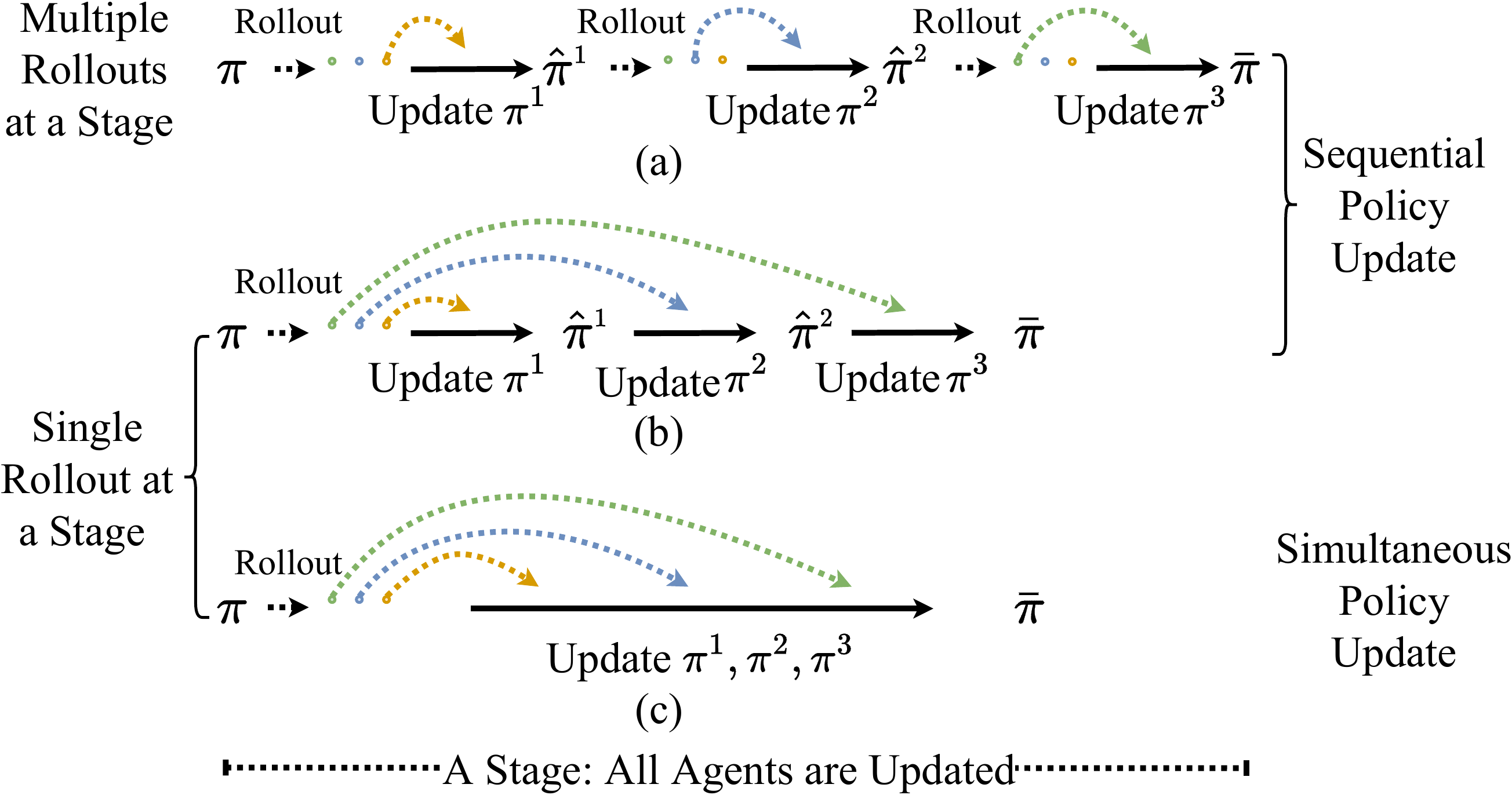}
    \caption{The taxonomy on the rollout scheme and the policy update scheme.}
    \vspace{-8pt}
    \label{fig:tax}
\end{wrapfigure}
In contrast to the simultaneous update scheme, algorithms that sequentially execute agent-by-agent updates allow agents to perceive changes made by preceding agents, presenting another perspective for analyzing inter-agent interaction \citep{Gemp2022}.
\citet{Bertsekas2021} proposed a sequential update framework, named Rollout and Policy Iteration for a Single Agent (\seqrpi{}) in this paper, which performs a rollout every time an agent updates its policy (\Cref{fig:tax}a). \seqrpi{} effectively turns non-stationary MARL problems into stationary single agent reinforcement learning (SARL) ones. It retains the theoretical properties of the chosen SARL base algorithm, such as the monotonic improvement~\citep{Kakade2002}. However, it is sample-inefficient since it only utilizes $1/n$ of the collected samples to update $n$ agents' policies. On the other hand, heterogeneous Proximal Policy Optimization (HAPPO)~\citep{kuba2022trust} sequentially updates agents based on their local advantages estimated from the same rollout samples (\Cref{fig:tax}b). Although it avoids the waste of collected samples and has a monotonic improvement on the joint policy, the policy improvement of a single agent is not theoretically guaranteed. Consequently, one agent's policy update may offset previous agents' policy improvement, reducing the overall joint policy improvement.

In this paper, we aim to combine the merits of the existing single rollout and sequential policy update schemes. Firstly, we show that naive sequential update algorithms with a single rollout can lose the monotonic improvement guarantee of PPO for a single agent's policy. To tackle this problem, we propose a surrogate objective with a novel off-policy correction method, \textit{\trace{}} (PreOPC), which retains the monotonic improvement guarantee on both the joint policy and each agent's policy. Then we further show that the joint monotonic bound built on the single agent bound is tighter than those of other simultaneous update algorithms and is tightened during updating the agents at a stage\footnote{We define a \textit{stage} as a period during which all the agents have been updated once (\Cref{fig:tax}).}.
This leads to
\textit{\textbf{A}gent-by-\textbf{a}gent \textbf{P}olicy \textbf{O}ptimization} (\alg{}), a novel sequential update algorithm with single rollout scheme (\Cref{fig:tax}b). 
Further, we study the significance of the agent update order and extend the theory of non-stationarity to the sequential update scheme. We test \alg{} on four popular cooperative multi-agent benchmarks: StarCraftII, multi-agent MuJoCo, multi-agent particle environment, and Google Research Football full game scenarios. On all benchmark tasks, \alg{} consistently outperforms strong baselines with a large margin in both performance and sample efficiency and shows an advantage in encouraging inter-agent coordination. To sum up, the main contributions of this work are as follows:
\begin{enumerate}[leftmargin=12pt]
    \item \textbf{Monotonic improvement bound}. We prove that the guarantees of monotonic improvement on each agent's policy could be retained under the single rollout scheme with the off-policy correction method PreOPC we proposed. We further prove that the monotonic bound on the joint policy achieved given theoretical guarantees of each agent is the tightest among single rollout algorithms, yielding effective policy optimization.
    \item \textbf{A2PO algorithm}. We propose \alg{}, the first agent-by-agent sequential update algorithm that retains the monotonic policy improvement on both each agent's policy and the joint policy and does not require multiple rollouts when performing policy improvement.
    \item \textbf{Agent update order}. We further investigate the connections between the sequential policy update scheme, the agent update order, and the non-stationarity problem, which motivates two novel methods: a semi-greedy agent selection rule for optimization acceleration and an adaptive clipping parameter method for alleviating the non-stationarity problem.
\end{enumerate}

\vspace{-10pt}
\section{Related Works}
\vspace{-8pt}

Trust Region Policy Optimization (TRPO) \citep{DBLP:conf/icml/SchulmanTRPO} and Proximal Policy Optimization (PPO) \citep{Schulman2017} are popular trust region algorithms with strong performances, benefiting from the guarantee of monotonic policy improvement \citep{Kakade2002}. Several recent works delve deeper into understanding these methods \citep{wang2019,Liu2019,Wang2020}.
In the multi-agent scenarios, \citet{Witt2020} and \citet{Papoudakis2020} empirically studied the performance of Independent PPO in multi-agent tasks. \citet{Yu2022} conducted a comprehensive benchmark and analyzed the factor influential to the performance of Multi-agent PPO (MAPPO), a variant of PPO with centralized critics. Coordinate PPO (CoPPO) \citep{Wu2021} integrates the value decomposition \citep{Sunehag2017} and approximately performs a joint policy improvement with monotonic improvement. Several further trials to implement trust region methods are discussed in \citet{Wen2021,Li2020,Sun2022,Ye2022}.
However, these MARL algorithms suffer from the non-stationarity problem as they update agents simultaneously. The environment dynamic changes from one agent’s perspective as others also change their policies. Consequently, agents suffer from the high variance of gradients and require more samples for convergence \citep{HernandezLeal2017}. To alleviate the non-stationarity problem, Multi-Agent Mirror descent policy algorithm with Trust region decomposition (MAMT) \citep{li2022dealing} factorizes the trust regions of the joint policy and constructs the connections among the factorized trust regions, approximately constraining the diversity of joint policy.

Rollout and Policy Iteration for a Single Agent (RPISA) \citep{Bertsekas2021} and Heterogeneous PPO (HAPPO) \citep{kuba2022trust} consider the sequential update scheme. 
RPISA suffers from sample inefficiency as it requires $n$ times of rollout for $n$ agents to complete their policies update. Additionally, their work lacks a practical algorithm for complex tasks.
In contrast, we propose a practical algorithm \alg{} that updates all agents using the same samples from a single rollout. 
HAPPO is derived from the advantage decomposition lemma, proposed as Lemma 1 in \citet{kuba2022trust}.
It does not consider the distribution shift caused by preceding agents, and has no monotonic policy improvement guarantee for each agent's policy. While \alg{} is derived without decomposing the advantage, and has a guarantee of monotonic improvement for each agent's policy. \mod{We further discuss other MARL methods in \Cref{app:others}.}

\vspace{-8pt}
\section{Trust Region Method in Sequential Policy Update Scheme}
\vspace{-5pt}
\subsection{MARL Problem Formulation}
\vspace{-5pt}
We consider formulating the sequential decision-making problem in multi-agent scenarios as a decentralized Markov decision process (DEC-MDP)~\citep{Bernstein2002}. An $n$-agent DEC-MDP can be formalized as a tuple $(\mathcal{S}, \{\mathcal{A}^{i}\}_{i \in \mathcal{N}}, r, \T, \gamma)$, where $\mathcal{N} = \{1, \ldots, n\}$ is the set of agents, $\mathcal{S}$ is the state space. $\mathcal{A}^{i}$ is the action space of agent $i$, and $\mathcal{A}=\mathcal{A}^{1}\times \cdots \times \mathcal{A}^{n}$ is the joint action space. $r: \mathcal{S}\times \mathcal{A} \mapsto  \mathbb{R}$ is the reward function, and $\T: \mathcal{S}\times \mathcal{A} \times \mathcal{S} \mapsto [0,1]$ is the dynamics function denoting the transition probability. $\gamma \in [0, 1)$ is a reward discount factor.
At time step $t$, each agent $i$ takes action $a^{i}_{t}$ from its policy $\pi^{i}(\cdot|s_t)$, simultaneously according to the state $s_t$, forming the joint action $\jntact_t=\{a_{t}^{1}, \ldots, a_{t}^{n}\}$ and the joint policy $\jntpi(\cdot|s_t)=\pi^{1}\times\ldots\times\pi^{n}$. 
The joint policy $\jntcurpi$ of these $n$ agents induces a normalized discounted state visitation distribution $d^{\jntcurpi}$, where $d^{\jntcurpi}(s)=(1-\gamma)\sum_{t=0}^{\infty}\gamma^{t}Pr(s_{t}=s|\jntcurpi)$ and $Pr(\cdot|\jntcurpi):\mathcal{S}\mapsto [0,1]$ is the probability function under $\jntcurpi$. We then define the value function $V^{\jntcurpi}(s)=\E_{\tau \sim (\T,\jntpi)} [\sum_{t=0}^\infty \gamma^{t}r(s_{t}, \bm{a_{t}})|s_{0}=s]$ and the advantage function $A^{\jntcurpi}(s,\jntact)=r(s,\jntact) + \gamma \E_{s^{\prime}\sim \mathcal{T}(\cdot|s, \jntact)}[V^{\jntcurpi}(s^{\prime})]-V^{\jntcurpi}(s)$, where $\tau = \{(s_{0}, \jntact_{0}), (s_{1}, \jntact_{1}), \ldots \}$ denotes one sampled trajectory.
The agents maximize their expected return, denoted as:
$
    \jntpi^{*} = \operatorname{argmax}_{\jntpi} \mathcal{J}(\jntpi) =\operatorname{argmax}_{\jntpi} \E_{\tau \sim (\T,\jntpi)} [\sum_{t=0}^\infty \gamma^{t}r(s_{t}, \bm{a_{t}})]~,
$.

\vspace{-5pt}
\subsection{Monotonic Improvement in Sequential Policy Update Scheme}
\vspace{-5pt}
\label{sec:monotonic_a2po}

We assume agents are updated in the order $1, 2, \ldots, n$, without loss of generality. We define $\jntcurpi$ as the joint base policy from which the agents are updated at a stage, $e^{i}=\{1, \ldots, i-1\}$ as the set of preceding agents updated before agent $i$, and $\tarpi^{i}$ as the updated policy of agent $i$. 
We denote the joint policy composed of updated policies of agents in the set $e^{i}$, the updated policy of agent $i$ and base policies of other agents as $\jntinterpi{i}=\tarpi^{1}\times\ldots\times\tarpi^{i}\times\curpi^{i+1}\times\ldots\times\curpi^{n}$, and define $\jntinterpi{0}=\jntcurpi$ and $\jntinterpi{n}=\jnttarpi$. A general sequential update scheme is shown as follows, where $\Lo{\jntinterpi{i-1}}{\jntinterpi{i}}$ is the surrogate objective for agent $i$:
\vspace{-5pt}
\begin{equation*}
    \jntcurpi =\jntinterpi{0} \xrightarrow[\text{Update } \curpi^{1}]{\max_{\curpi^{1}}{\Lo{\jntcurpi}{\jntinterpi{1}}}}
    \jntinterpi{1} \xrightarrow[]{}
    \cdots \xrightarrow[]{}
    \jntinterpi{n-1} \xrightarrow[\text{Update } \curpi^{n}]{\max_{\curpi^{n}}{\Lo{\jntinterpi{n-1}}{\jntinterpi{n}}}}
    \jntinterpi{n} = \jnttarpi.
\end{equation*}
\vspace{-5pt}

We wish our sequential update scheme retains the desired monotonic improvement guarantee while improving the sample efficiency. Before going to our method,
we first discuss \textit{\textbf{why naively updating agents sequentially with the same rollout samples will fail in monotonic improvement for each agent.}}
Since agent $i$ updates its policy from $\jntinterpi{i-1}$, an intuitive surrogate objective \citep{DBLP:conf/icml/SchulmanTRPO} used by agent $i$ could be formulated as $\mathcal{L}^{I}_{\jntinterpi{i-1}}(\jntinterpi{i}) = \J{\jntinterpi{i-1}} + \mathcal{O}_{\jntcurpi}(\jntinterpi{i})$, where $\mathcal{O}_{\jntcurpi}(\jntinterpi{i})=\frac{1}{1-\gamma}\E_{(s,\jntact) \sim (d^{\jntcurpi}, \jntinterpi{i})}[A^{\jntcurpi}(s,\jntact)]$ \mod{and the superscript $I$ means `Intuitive'}.
The expected return, however, is not guaranteed to improve with such a surrogate objective, as elaborated in the following proposition.

\vspace{-5pt}
\begin{preproposition}
    \label{prop:lost}
    For agent $i$, let $\epsilon=\max_{s, \jntact}|A^{\jntcurpi}(s, \jntact)|$, $\alpha^{j}=D_{TV}^{\max}(\curpi^{j}\|\tarpi^{j})\;\forall j \in (e^{i} \cup \{i\})$,
    where $D_{TV}(p\|q)$ is the total variation distance between distributions $p$ and $q$ and we define $D_{TV}^{\max}(\curpi\|\tarpi)=\max_{s}{D_{TV}(\curpi(\cdot|s)\|\tarpi(\cdot|s))}$,
    then we have:
    \vspace{-5pt}
    \begin{align}
        \left|
        \J{\jntinterpi{i}} - \mathcal{L}^{I}_{\jntinterpi{i-1}}(\jntinterpi{i})
        \right|
         & \leq 2 \epsilon \alpha^{i} \big(\frac{3}{1-\gamma} - \frac{2}{1-\gamma(1-\sum_{j \in (e^{i} \cup \{i\})}\alpha^{j})}\big)+
         \overbrace{\frac{2\epsilon \sum_{j\in e^{i}} \alpha^{j}}{1-\gamma}}^{\text{Uncontrollable}}
         =\beta_{i}^{I}~. 
         \label{eq:navie}
    \end{align}
\end{preproposition}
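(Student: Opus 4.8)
The plan is to rewrite the exact gap $\J{\jntinterpi{i}} - \mathcal{L}^{I}_{\jntinterpi{i-1}}(\jntinterpi{i})$ through the performance difference lemma \citep{Kakade2002}, which states $\J{\jntpi'} - \J{\jntpi} = \frac{1}{1-\gamma}\E_{(s,\jntact)\sim(d^{\jntpi'},\jntpi')}[A^{\jntpi}(s,\jntact)]$ for any two joint policies. Applying it with base $\jntcurpi$ to both $\jntinterpi{i}$ and $\jntinterpi{i-1}$, and recalling $\mathcal{O}_{\jntcurpi}(\jntinterpi{i})=\frac{1}{1-\gamma}\E_{(s,\jntact)\sim(d^{\jntcurpi},\jntinterpi{i})}[A^{\jntcurpi}(s,\jntact)]$, the gap collapses to $\frac{1}{1-\gamma}\big(\E_{d^{\jntinterpi{i}}}[g_i] - \E_{d^{\jntcurpi}}[g_i] - \E_{d^{\jntinterpi{i-1}}}[g_{i-1}]\big)$, where I write $g_k(s)=\E_{\jntact\sim\jntinterpi{k}(\cdot|s)}[A^{\jntcurpi}(s,\jntact)]$. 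The triangle inequality then splits this into a ``state-visitation shift'' piece $\E_{d^{\jntinterpi{i}}}[g_i]-\E_{d^{\jntcurpi}}[g_i]$, which I charge to agent $i$, and a baseline-mismatch piece $\E_{d^{\jntinterpi{i-1}}}[g_{i-1}]$, which involves only the preceding agents $e^i$ and becomes the uncontrollable term.

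For the uncontrollable piece I use the zero-mean identity $\E_{\jntact\sim\jntcurpi(\cdot|s)}[A^{\jntcurpi}(s,\jntact)]=0$, so that $|g_{i-1}(s)|=|\E_{\jntinterpi{i-1}}[A^{\jntcurpi}]-\E_{\jntcurpi}[A^{\jntcurpi}]|\le 2\epsilon\,D_{TV}(\jntinterpi{i-1}(\cdot|s)\|\jntcurpi(\cdot|s))$. Since $\jntinterpi{i-1}$ and $\jntcurpi$ are products differing only in the factors indexed by $e^i$, subadditivity of total variation over product distributions gives $D_{TV}(\jntinterpi{i-1}\|\jntcurpi)\le\sum_{j\in e^i}\alpha^j$, hence $\frac{1}{1-\gamma}|\E_{d^{\jntinterpi{i-1}}}[g_{i-1}]|\le \frac{2\epsilon\sum_{j\in e^i}\alpha^j}{1-\gamma}$, exactly the uncontrollable summand. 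The same zero-mean trick, now applied to the marginal change of agent $i$ alone, supplies the factor $2\epsilon\alpha^i$ that multiplies the controllable coefficient: writing $g_i=g_{i-1}+h$ with $h(s)=\E_{\jntinterpi{i}}[A^{\jntcurpi}]-\E_{\jntinterpi{i-1}}[A^{\jntcurpi}]$, subadditivity again yields $\|h\|_\infty\le 2\epsilon\,D_{TV}(\jntinterpi{i}\|\jntinterpi{i-1})\le 2\epsilon\alpha^i$.

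The main obstacle is the controllable state-visitation shift, for which the naive Lipschitz bound on $d^{\jntpi}$ is too loose to reproduce the $1-\gamma(1-\alpha)$ denominator. Here I would expand the normalized discounted visitation into its per-timestep form, $\frac{1}{1-\gamma}\E_{d^{\jntpi}}[f]=\sum_{t\ge0}\gamma^t\E_{s_t\sim\jntpi}[f(s_t)]$, and control each timestep by a coupling argument: running the chains under $\jntinterpi{i}$ and $\jntcurpi$ from a common start and letting them disagree only when their sampled joint actions differ (probability at most the joint-policy total variation $\le\alpha:=\sum_{j\in e^i\cup\{i\}}\alpha^j$ per step), the marginal state distributions satisfy $D_{TV}(\mathrm{Pr}(s_t\!=\!\cdot|\jntinterpi{i})\|\mathrm{Pr}(s_t\!=\!\cdot|\jntcurpi))\le 1-(1-\alpha)^t$. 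Summing the resulting geometric series, $\sum_{t\ge0}\gamma^t\big(1-(1-\alpha)^t\big)=\frac{1}{1-\gamma}-\frac{1}{1-\gamma(1-\alpha)}$, produces precisely the refined denominator. Combining the per-state advantage magnitudes ($2\epsilon\alpha^i$) with the direct $\frac{1}{1-\gamma}$ contributions and the shift contributions $\frac{1}{1-\gamma}-\frac{1}{1-\gamma(1-\alpha)}$, and collecting multiplicities, yields the stated coefficient $\frac{3}{1-\gamma}-\frac{2}{1-\gamma(1-\alpha)}$; a final triangle inequality adds the uncontrollable summand to give $\beta_i^{I}$. The delicate bookkeeping is to route every preceding-agent contribution into the single uncontrollable term so that the remainder stays proportional to $\alpha^i$.
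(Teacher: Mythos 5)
You set the problem up correctly: anchoring the performance difference lemma at $\jntcurpi$ gives the exact identity
\[
\J{\jntinterpi{i}}-\mathcal{L}^{I}_{\jntinterpi{i-1}}(\jntinterpi{i})
=\tfrac{1}{1-\gamma}\big(\E_{d^{\jntinterpi{i}}}[g_i]-\E_{d^{\jntcurpi}}[g_i]-\E_{d^{\jntinterpi{i-1}}}[g_{i-1}]\big),
\qquad g_k(s)=\E_{\jntact\sim\jntinterpi{k}(\cdot|s)}\big[A^{\jntcurpi}(s,\jntact)\big],
\]
and your bound $\tfrac{1}{1-\gamma}\big|\E_{d^{\jntinterpi{i-1}}}[g_{i-1}]\big|\le\tfrac{2\epsilon\sum_{j\in e^i}\alpha^j}{1-\gamma}$ is right. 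The gap is in the piece you charge to agent $i$. Its integrand is $g_i$, whose only available sup-norm control (zero mean under $\jntcurpi$ plus subadditivity of $D_{TV}$) is $\|g_i\|_\infty\le 2\epsilon\sum_{j\in e^i\cup\{i\}}\alpha^j$, \emph{not} $2\epsilon\alpha^i$; so your coupling argument can only give $4\epsilon\big(\sum_{j\in e^i\cup\{i\}}\alpha^j\big)\big(\tfrac{1}{1-\gamma}-\tfrac{1}{1-\gamma(1-\alpha)}\big)$, with $\alpha=\sum_{j\in e^i\cup\{i\}}\alpha^j$, in which the preceding agents' distances multiply the visitation-shift factor. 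This cannot be folded under the claimed $2\epsilon\alpha^i\big(\tfrac{3}{1-\gamma}-\tfrac{2}{1-\gamma(1-\alpha)}\big)$: set $\alpha^i=0$ with $\sum_{j\in e^i}\alpha^j>0$, so $\jntinterpi{i}=\jntinterpi{i-1}$ and the claimed controllable part is zero, yet your piece equals $\E_{d^{\jntinterpi{i-1}}}[g_{i-1}]-\E_{d^{\jntcurpi}}[g_{i-1}]$, which is generically nonzero. The proposition survives at $\alpha^i=0$ only because that quantity cancels against $-\E_{d^{\jntinterpi{i-1}}}[g_{i-1}]$, leaving $-\E_{d^{\jntcurpi}}[g_{i-1}]$ — a cancellation your triangle inequality discards. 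The repair you sketch ($g_i=g_{i-1}+h$) does not close this: the $h$-part indeed contributes $4\epsilon\alpha^i\big(\tfrac{1}{1-\gamma}-\tfrac{1}{1-\gamma(1-\alpha)}\big)$, but the leftover $\E_{d^{\jntinterpi{i}}}[g_{i-1}]-\E_{d^{\jntcurpi}}[g_{i-1}]$ is again of order $\epsilon\sum_{j\in e^i}\alpha^j$, and even the most favorable regrouping (pairing it with the uncontrollable piece) leaves a cross term $\tfrac{4\epsilon\gamma\alpha^i\sum_{j\in e^i}\alpha^j}{(1-\gamma)(1-\gamma(1-\alpha^i))}$ that exceeds the remaining budget $\tfrac{2\epsilon\alpha^i}{1-\gamma}$ whenever $2\gamma\sum_{j\in e^i}\alpha^j>1-\gamma+\gamma\alpha^i$.

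The missing idea is to switch the advantage baseline rather than keep $A^{\jntcurpi}$ inside the shift term. The paper anchors the performance difference lemma at $\jntinterpi{i-1}$, writing $\J{\jntinterpi{i}}-\J{\jntinterpi{i-1}}=\tfrac{1}{1-\gamma}\E_{(s,\jntact)\sim(d^{\jntinterpi{i}},\jntinterpi{i})}[A^{\jntinterpi{i-1}}]$, and then splits into (i) a visitation-shift term ($d^{\jntinterpi{i}}$ versus $d^{\jntcurpi}$, fixed integrand $\E_{\jntact\sim\jntinterpi{i}}[A^{\jntinterpi{i-1}}(s,\jntact)]$), whose sup-norm is at most $2\max_{s,\jntact}|A^{\jntinterpi{i-1}}|\cdot\alpha^i$ because $A^{\jntinterpi{i-1}}$ has zero mean under $\jntinterpi{i-1}$ and $\jntinterpi{i-1},\jntinterpi{i}$ differ only in agent $i$'s factor — this is exactly what makes the shift term proportional to $\alpha^i$, and it is unavailable to you because $g_i$ is centered at $\jntcurpi$; and (ii) an advantage-mismatch term $\tfrac{1}{1-\gamma}\E_{(s,\jntact)\sim(d^{\jntcurpi},\jntinterpi{i})}[A^{\jntinterpi{i-1}}-A^{\jntcurpi}]$ at a fixed sampling distribution, bounded pointwise by two zero-mean arguments as $\tfrac{1}{1-\gamma}\big(4\epsilon\alpha^i+2\epsilon\sum_{j\in e^i}\alpha^j\big)$, which is the sole source of the uncontrollable summand. (Even then the constants are delicate: the paper's own appendix derivation yields $\tfrac{4}{1-\gamma}$ where the proposition states $\tfrac{3}{1-\gamma}$, and it silently identifies $\max|A^{\jntinterpi{i-1}}|$ with $\epsilon$; but that two-baseline decomposition is the structural step any proof of this bound needs, and it is the step your argument lacks.)
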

The proof can be found in \Cref{app:proof_ini}. 

\textbf{Remark}. From \Cref{eq:navie} and the definition of $\mathcal{L}^{I}_{\jntinterpi{i-1}}$, we know $\J{\jntinterpi{i}}-\J{\jntinterpi{i-1}} > \mathcal{O}_{\jntcurpi}(\jntinterpi{i}) - \beta_{i}^{I}$. Thus $\J{\jntinterpi{i}} > \J{\jntinterpi{i-1}}$ when $\mathcal{O}_{\jntcurpi}(\jntinterpi{i}) > \beta_{i}^{I}$, which can be satisfied by constraining $\beta_{i}^{I}$ and optimizing $\mathcal{O}_{\jntcurpi}(\jntinterpi{i})$. However, in $\beta_{i}^{I}$, the term $2\epsilon \sum_{j\in e^{i}} \alpha^{j}/(1-\gamma)$, is uncontrollable by agent $i$. Consequently, the upper bound $\beta_{i}^{I}$ may be large and the expected performance $\J{\jntinterpi{i}}$ may not be improved after optimizing $\mathcal{O}_{\jntcurpi}(\jntinterpi{i})$  when $\mathcal{O}_{\jntcurpi}(\jntinterpi{i}) < \beta_{i}^{I}$ even if $\alpha^{i}$ is well constrained. Although one can still prove a monotonic guarantee for the joint policy by summing \Eqref{eq:navie} for all the agents, we will show that the monotonic improvement on every single agent, if guaranteed, brings a tighter monotonic bound on the joint policy and incrementally tightens the monotonic bound on the joint policy when updating agents during a stage. Uncontrollable terms also appear when similarly analyzing HAPPO and cause the loss of monotonic improvement for a single agent\footnote{More discussions about why HAPPO fails to guarantee monotonic improvement for a single agent's policy can be found in \Cref{app:mono_baselines}.}.
 
\vspace{-8pt}
\subsection{Preceding-agent Off-policy Correction}
\vspace{-5pt}
\label{sec:preopc}

The uncontrollable term in \Cref{prop:lost} is caused by one ignoring how the updating of its preceding agents' policies influences its advantage function. We investigate reducing the uncontrollable term in policy evaluation. Since agent $i$ is updated from $\jntinterpi{i-1}$, the advantage function $A^{\jntinterpi{i-1}}$ should be used in agent $i$'s surrogate objective rather than $A^{\jntcurpi}$. However, $A^{\jntinterpi{i-1}}$ is impractical to estimate using samples collected under $\jntcurpi$ due to the off-policyness \citep{Munos2016} of these samples. Nevertheless, we can approximate $A^{\jntinterpi{i-1}}$ by correcting the discrepancy between $\jntinterpi{i-1}$ and $\jntcurpi$ at each time step \citep{Harutyunyan2016}. To retain the monotonic improvement properties, we propose \textit{\trace{}} (PreOPC), which approximates $A^{\jntinterpi{i-1}}$ using samples collected under $\jntcurpi$ by correcting the state probability at each step with truncated product weights:
\vspace{-5pt}
\begin{equation}\label{eq:trace}
    A^{\jntcurpi, \jntinterpi{i-1}}(s_{t}, \jntact_{t}) = \delta_{t} + \sum_{k\geq 1}\gamma^{k}
    \big(
    \prod_{j=1}^{k}\lambda\min\big(1.0, \frac{\jntinterpi{i-1}(\jntact_{t+j}|s_{t+j})}{\jntcurpi(\jntact_{t+j}|s_{t+j})}\big)
    \big)
    \delta_{t+k}~,
\end{equation}
where $\delta_{t} = r(s_{t},\jntact_{t}) + \gamma \V(s_{t+1}) - \V(s_{t})$ is the temporal difference for $V(s_{t})$, $\lambda$ is a parameter controlling the bias and variance, as used in \citet{Schulman2016}. $\min(1.0, \frac{\jntinterpi{i-1}(\jntact_{t+j}|s_{t+j})}{\jntcurpi(\jntact_{t+j}|s_{t+j})}) \;\forall j \in \{1,\ldots,k\}$ are truncated importance sampling weights, approximating the probability of $s_{t+k}$ at time step $t+k$ under $\jntinterpi{i-1}$. The derivation of \Cref{eq:trace} can be found in \Cref{app:preopc}.
With PreOPC, the surrogate objective of agent $i$ becomes $\Lo{\jntinterpi{i-1}}{\jntinterpi{i}}=\J{\jntinterpi{i-1}}+\frac{1}{1-\gamma}\E_{(s, \jntact)\sim (d^{\jntcurpi}, \jntinterpi{i})}[A^{\jntcurpi, \jntinterpi{i-1}}(s,\jntact)]$
, and we summarize the surrogate objective of updating all agents as follows:
\vspace{-5pt}
\begin{align}\label{eq:objective}
    \G 
    &=\J{\jntcurpi} + \frac{1}{1-\gamma}\sum_{i=1}^{n}\E_{(s, \jntact) \sim (d^{\jntcurpi}, \jntinterpi{i})}[A^{\jntcurpi, \jntinterpi{i-1}}(s,\jntact)]~.
\vspace{-3pt}
\end{align}
Note that \Eqref{eq:objective} takes the sum of expectations of the global advantage function approximated under different joint policies, different from the advantage decomposition lemma in \citet{kuba2022trust} which decomposes the global advantage function into local ones. 

We can now prove that the monotonic policy improvement guarantee of both updating one agent's policy and updating the joint policy is retained by using \Eqref{eq:objective} as the surrogate objective. The detailed proofs can be found in \Cref{app:mono_a2po}.
\vspace{-5pt}
\begin{restatable}[Single Agent Monotonic Bound]{thm}{single}
    \label{thm:single_agent_bound}
    For agent $i$, let $\epsilon^{i}=\max_{s, \jntact}|A^{\jntinterpi{i-1}}(s, \jntact)|$, $\xi^{i} = \max_{s, \jntact}|A^{\jntcurpi, \jntinterpi{i-1}}(s,\jntact)-A^{\jntinterpi{i-1}}(s,\jntact)|$, 
    $\alpha^{j}=D_{TV}^{\max}(\curpi^{j}\|\tarpi^{j})\;\forall j \in (e^{i} \cup \{i\})$, 
    then we have:
    \begin{align}
        \left|
        \J{\jntinterpi{i}} - \Lo{\jntinterpi{i-1}}{\jntinterpi{i}}
        \right|
         & \leq 4 \epsilon^{i} \alpha^{i} \big(\frac{1}{1-\gamma} - \frac{1}{1-\gamma(1-\sum_{j \in (e^{i} \cup \{i\})}\alpha^{j})}\big) + \frac{\xi^{i}}{1-\gamma} \nonumber \\
         & \leq \frac{4\gamma \epsilon^{i} }{(1-\gamma)^{2}}\big(\alpha^{i}\sum_{j \in (e^{i} \cup \{i\})}\alpha^{j}\big) + \frac{\xi^{i}}{1-\gamma}
        \label{eq:sa_bound}~.
    \end{align}    
\vspace{-5pt}
\end{restatable}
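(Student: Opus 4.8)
The plan is to decompose the target quantity so that it reduces to the classical trust-region distribution-shift argument applied to the two \emph{consecutive} joint policies $\jntinterpi{i-1}$ and $\jntinterpi{i}$, with the PreOPC estimation error peeled off as a separate term. First I would introduce the intermediate surrogate $\tilde{L}_{\jntinterpi{i-1}}(\jntinterpi{i}) = \J{\jntinterpi{i-1}} + \frac{1}{1-\gamma}\E_{(s,\jntact)\sim(d^{\jntcurpi},\jntinterpi{i})}[A^{\jntinterpi{i-1}}(s,\jntact)]$, which is $\Lo{\jntinterpi{i-1}}{\jntinterpi{i}}$ with the PreOPC estimate $A^{\jntcurpi,\jntinterpi{i-1}}$ replaced by the exact advantage $A^{\jntinterpi{i-1}}$. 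The triangle inequality then splits the left-hand side of \Cref{eq:sa_bound} into an estimation error $|\tilde{L}_{\jntinterpi{i-1}}(\jntinterpi{i}) - \Lo{\jntinterpi{i-1}}{\jntinterpi{i}}|$ and a distribution-shift error $|\J{\jntinterpi{i}} - \tilde{L}_{\jntinterpi{i-1}}(\jntinterpi{i})|$. The estimation error is immediate: the two surrogates integrate against the \emph{same} measure $(d^{\jntcurpi},\jntinterpi{i})$, so their difference equals $\frac{1}{1-\gamma}\E_{(s,\jntact)\sim(d^{\jntcurpi},\jntinterpi{i})}[A^{\jntcurpi,\jntinterpi{i-1}} - A^{\jntinterpi{i-1}}]$, which is at most $\frac{\xi^i}{1-\gamma}$ by the definition of $\xi^i$. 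This produces the additive $\frac{\xi^i}{1-\gamma}$ term.

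For the distribution-shift error I would apply the exact performance-difference lemma to $\jntinterpi{i-1}$ and $\jntinterpi{i}$, giving $\J{\jntinterpi{i}} = \J{\jntinterpi{i-1}} + \frac{1}{1-\gamma}\E_{(s,\jntact)\sim(d^{\jntinterpi{i}},\jntinterpi{i})}[A^{\jntinterpi{i-1}}(s,\jntact)]$. Subtracting $\tilde{L}$, the action law stays $\jntinterpi{i}$ and only the state-visitation measure changes, so with $g(s) = \E_{\jntact\sim\jntinterpi{i}(\cdot|s)}[A^{\jntinterpi{i-1}}(s,\jntact)]$ the error is $\frac{1}{1-\gamma}\sum_s(d^{\jntinterpi{i}}(s)-d^{\jntcurpi}(s))g(s)$, bounded by $\frac{1}{1-\gamma}(\max_s|g(s)|)\,\|d^{\jntinterpi{i}}-d^{\jntcurpi}\|_1$. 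The refinement that produces the factor $\alpha^i$ rather than a crude $\epsilon^i$ is the zero-mean identity $\E_{\jntact\sim\jntinterpi{i-1}(\cdot|s)}[A^{\jntinterpi{i-1}}(s,\jntact)] = 0$: subtracting it lets me write $g(s) = \sum_{\jntact}(\jntinterpi{i}(\jntact|s)-\jntinterpi{i-1}(\jntact|s))A^{\jntinterpi{i-1}}(s,\jntact)$, and since $\jntinterpi{i}$ and $\jntinterpi{i-1}$ differ only in agent $i$'s factor, the product structure collapses the joint-policy TV to $D_{TV}(\tarpi^i\|\curpi^i)\le\alpha^i$, yielding $|g(s)|\le 2\epsilon^i\alpha^i$.

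The remaining and hardest step is to bound the state-visitation discrepancy $\|d^{\jntinterpi{i}}-d^{\jntcurpi}\|_1 = 2D_{TV}(d^{\jntinterpi{i}},d^{\jntcurpi})$ by the exact closed form $2\big(1 - \frac{1-\gamma}{1-\gamma(1-\alpha)}\big)$ with $\alpha = \sum_{j\in(e^i\cup\{i\})}\alpha^j$. I expect this to be the main obstacle, and I would handle it through a coupling of the two Markov chains induced by $\jntinterpi{i}$ and $\jntcurpi=\jntinterpi{0}$. Because the dynamics $\T$ are shared, the one-step kernels obey $D_{TV}(P^{\jntinterpi{i}}(\cdot|s),P^{\jntcurpi}(\cdot|s))\le D_{TV}(\jntinterpi{i}(\cdot|s),\jntcurpi(\cdot|s))\le\sum_{j=1}^{i}\alpha^j=\alpha$, the last inequality using subadditivity of TV across the $i$ changed factors. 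Coupling the chains to coincide whenever possible gives, for the per-step state TV $\Delta_t = D_{TV}(Pr(s_t|\jntinterpi{i}),Pr(s_t|\jntcurpi))$, the recursion $\Delta_{t+1}\le 1-(1-\Delta_t)(1-\alpha)$ with $\Delta_0=0$; induction then yields $\Delta_t\le 1-(1-\alpha)^t$. Summing $D_{TV}(d^{\jntinterpi{i}},d^{\jntcurpi})\le(1-\gamma)\sum_{t\ge0}\gamma^t\Delta_t$ against the geometric series $\sum_t\gamma^t$ and $\sum_t(\gamma(1-\alpha))^t$ telescopes to $1-\frac{1-\gamma}{1-\gamma(1-\alpha)}$.

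Assembling the pieces, the distribution-shift error is at most $\frac{1}{1-\gamma}\cdot 2\epsilon^i\alpha^i\cdot 2\big(1-\frac{1-\gamma}{1-\gamma(1-\alpha)}\big) = 4\epsilon^i\alpha^i\big(\frac{1}{1-\gamma}-\frac{1}{1-\gamma(1-\alpha)}\big)$, and adding the estimation term $\frac{\xi^i}{1-\gamma}$ gives the first inequality of \Cref{eq:sa_bound}. The second, looser inequality is elementary: $\frac{1}{1-\gamma}-\frac{1}{1-\gamma(1-\alpha)} = \frac{\gamma\alpha}{(1-\gamma)(1-\gamma(1-\alpha))}$, and since $1-\gamma(1-\alpha)\ge 1-\gamma$ this bracket is at most $\frac{\gamma\alpha}{(1-\gamma)^2}$, so the distribution-shift term is bounded by $\frac{4\gamma\epsilon^i}{(1-\gamma)^2}\big(\alpha^i\sum_{j\in(e^i\cup\{i\})}\alpha^j\big)$, completing the bound.
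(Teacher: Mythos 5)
Your proposal is correct and follows essentially the same route as the paper's own proof: the identical triangle-inequality split into a PreOPC estimation error (bounded by $\xi^{i}/(1-\gamma)$) and a distribution-shift term, the same zero-mean advantage trick exploiting that $\jntinterpi{i}$ and $\jntinterpi{i-1}$ differ only in agent $i$'s factor (yielding $2\epsilon^{i}\alpha^{i}$), the same trajectory coupling giving the $1-(1-\sum_{j\in(e^{i}\cup\{i\})}\alpha^{j})^{t}$ factor, and the same geometric summation. The only cosmetic difference is that you package the coupling as a bound on $\|d^{\jntinterpi{i}}-d^{\jntcurpi}\|_{1}$ followed by H\"older, whereas the paper conditions on the coupling event per time step inside its advantage-discrepancy lemma (\Cref{lemma:genralized_adv_discrepancy_bound}); the two organizations produce identical constants.
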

The single agent monotonic bound depends on $\epsilon^{i}$, $\xi^{i}$, and $\alpha^{i}$ and the total variation distances of preceding agents.
Unlike \Eqref{eq:navie}, we can effectively constrain the monotonic bound by controlling $\alpha^{i}$ since $\xi^{i}$ decreases as agent $i$ updating its value function \citep{Munos2016} and does not lead to an unsatisfiable bound when $\alpha^{i}$ is well constrained, providing the guarantee for monotonic improvement when updating a single agent.
Given the above bound, we can prove the monotonic improvement of the joint policy.

\begin{table}[tbp]
    \centering
    \caption{Comparisons of trust region MARL algorithms. The proofs of the monotonic bounds can be found in \Cref{app:proof}. Note that we also provide the monotonic bound of \seqrpi{}-PPO, which implements RPISA with PPO as the base algorithm. We separate RPISA-PPO from other methods as it has low sample efficiency and thus does not constitute a fair comparison.}
    \resizebox{\linewidth}{!}{
        \begin{tabular}{ccccl}
            \toprule
            Algorithm & Rollout  & Update     & Sample Efficiency     &  Monotonic Bound                                                                                                                                       \\
            \midrule
            \multirow{2}{*}{RPISA-PPO} & \multirow{2}{*}{Multiple} & \multirow{2}{*}{Sequential} & \multirow{2}{*}{Low}  & $4 \epsilon \sum_{i=1}^{n} \alpha^{i} (\frac{1}{1-\gamma} - \frac{1}{1-\gamma(1-\alpha^{i})})$                                                                          \\
            &                           &                             &                       & Single Agent: $4 \epsilon \alpha^{i} (\frac{1}{1-\gamma} - \frac{1}{1-\gamma(1-\alpha^{i})})$                                                                           \\
            \midrule
            MAPPO                      & Single                    & Simultaneous                & High                  & $4\epsilon \sum_{i=1}^{n}\frac{\alpha^{i}}{1-\gamma}$                                                                                                                   \\
            CoPPO                      & Single                    & Simultaneous                & High                  & $4\epsilon \sum_{i=1}^{n}\alpha^{i} (\frac{1}{1-\gamma}-\frac{1}{1-\gamma(1-\sum_{j=1}^{n}\alpha^{j})})$                                                                \\
            \multirow{2}{*}{HAPPO}     & \multirow{2}{*}{Single}   & \multirow{2}{*}{Sequential} & \multirow{2}{*}{High} & $4\epsilon \sum_{i=1}^{n}\alpha^{i} (\frac{1}{1-\gamma}-\frac{1}{1-\gamma(1-\sum_{j=1}^{n}\alpha^{j})})$                                                                \\
            &                           &                             &                       & Single Agent: No Guarantee                                                                                                                                                       \\
            \multirow{2}{*}{A2PO (ours)}      & \multirow{2}{*}{Single}   & \multirow{2}{*}{Sequential} & \multirow{2}{*}{High} & $4 \epsilon \sum_{i=1}^{n} \alpha^{i} (\frac{1}{1-\gamma} - \frac{1}{1-\gamma(1-\sum_{j \in (e^{i} \cup \{i\})}\alpha^{j})}) +  \frac{\sum_{i=1}^{n}\xi^{i}}{1-\gamma}$ \\
            &                           &                             &                       & Single Agent: $4 \epsilon^{i} \alpha^{i} (\frac{1}{1-\gamma} - \frac{1}{1-\gamma(1-\sum_{j \in (e^{i} \cup \{i\})}\alpha^{j})}) + \frac{\xi^{i}}{1-\gamma}$             \\
            \bottomrule
        \end{tabular}
        }
        \label{tab:bound}
\vspace{-10pt}
\end{table}

\begin{restatable}[Joint Monotonic Bound]{thm}{joint}
    \label{thm:joint_bound}
    For each agent $i\in \mathcal{N}$, let $\epsilon^{i}=\max_{s, \jntact}|A^{\jntinterpi{i-1}}(s, \jntact)|$ , $\alpha^{i}=D_{TV}^{\max}(\curpi^{i}\|\tarpi^{i})$, $\xi^{i} = \max_{s, \jntact}|A^{\jntcurpi, \jntinterpi{i-1}}(s,\jntact)-A^{\jntinterpi{i-1}}(s,\jntact)|$, and $\epsilon = \max_{i} \epsilon^{i}$, 
    then we have:
    \begin{align}\label{eq:joint_bound}
        \left|
        \J{\jnttarpi}-\G
        \right| \nonumber
         & \leq 4 \epsilon \sum_{i=1}^{n} \alpha^{i} \big(\frac{1}{1-\gamma} - \frac{1}{1-\gamma(1-\sum_{j \in (e^{i} \cup \{i\})}\alpha^{j})}\big) +  \frac{\sum_{i=1}^{n}\xi^{i}}{1-\gamma} \nonumber \\
         & \leq \frac{4 \gamma \epsilon}{(1-\gamma)^{2}} \sum_{i=1}^{n} \big(\alpha^{i}\sum_{j \in (e^{i} \cup \{i\})}\alpha^{j} \big) +  \frac{\sum_{i=1}^{n}\xi^{i}}{1-\gamma}~.
    \end{align}
    \vspace{-15pt}
\end{restatable}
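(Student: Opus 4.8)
The plan is to reduce the joint bound to a telescoping sum of single-agent discrepancies and then invoke \Cref{thm:single_agent_bound} termwise. First I would rewrite the global surrogate $\G$ in terms of the per-agent surrogates. Reading off the definition $\Lo{\jntinterpi{i-1}}{\jntinterpi{i}}=\J{\jntinterpi{i-1}}+\frac{1}{1-\gamma}\E_{(s,\jntact)\sim(d^{\jntcurpi},\jntinterpi{i})}[A^{\jntcurpi,\jntinterpi{i-1}}(s,\jntact)]$, each summand in \Eqref{eq:objective} satisfies $\frac{1}{1-\gamma}\E_{(s,\jntact)\sim(d^{\jntcurpi},\jntinterpi{i})}[A^{\jntcurpi,\jntinterpi{i-1}}(s,\jntact)]=\Lo{\jntinterpi{i-1}}{\jntinterpi{i}}-\J{\jntinterpi{i-1}}$, so that $\G=\J{\jntcurpi}+\sum_{i=1}^{n}\big(\Lo{\jntinterpi{i-1}}{\jntinterpi{i}}-\J{\jntinterpi{i-1}}\big)$. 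This is the key structural observation: the global objective is exactly the base return plus the sum of per-agent surrogate increments, which is what makes the single-agent bounds composable.

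Next I would telescope the true return. Since $\jntinterpi{0}=\jntcurpi$ and $\jntinterpi{n}=\jnttarpi$, we have $\J{\jnttarpi}-\J{\jntcurpi}=\sum_{i=1}^{n}\big(\J{\jntinterpi{i}}-\J{\jntinterpi{i-1}}\big)$. Subtracting the two identities, the base return $\J{\jntcurpi}$ cancels and the intermediate returns $\J{\jntinterpi{i-1}}$ cancel pairwise, leaving
\[
\J{\jnttarpi}-\G=\sum_{i=1}^{n}\big(\J{\jntinterpi{i}}-\Lo{\jntinterpi{i-1}}{\jntinterpi{i}}\big).
\]
Applying the triangle inequality then gives $|\J{\jnttarpi}-\G|\le\sum_{i=1}^{n}|\J{\jntinterpi{i}}-\Lo{\jntinterpi{i-1}}{\jntinterpi{i}}|$, and I would bound each term by \Cref{thm:single_agent_bound}.

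Finally I would assemble the stated bound. Summing the single-agent bound over $i$ yields $\sum_{i=1}^{n}4\epsilon^{i}\alpha^{i}\big(\frac{1}{1-\gamma}-\frac{1}{1-\gamma(1-\sum_{j\in(e^{i}\cup\{i\})}\alpha^{j})}\big)+\frac{\sum_{i=1}^{n}\xi^{i}}{1-\gamma}$. Writing $S_{i}=\sum_{j\in(e^{i}\cup\{i\})}\alpha^{j}$, each bracketed difference equals $\frac{\gamma S_{i}}{(1-\gamma)(1-\gamma+\gamma S_{i})}\ge 0$, and since $\alpha^{i}\ge 0$ the coefficient multiplying each $\epsilon^{i}$ is nonnegative; hence I may replace every $\epsilon^{i}$ by $\epsilon=\max_{i}\epsilon^{i}$ without decreasing the right-hand side, producing the first inequality of \Eqref{eq:joint_bound}. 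The second inequality is then the same elementary relaxation already used inside \Cref{thm:single_agent_bound}: bounding $1-\gamma+\gamma S_{i}\ge 1-\gamma$ in the denominator gives the bracketed difference $\le\frac{\gamma S_{i}}{(1-\gamma)^{2}}$, and summing the resulting termwise bounds yields $\frac{4\gamma\epsilon}{(1-\gamma)^{2}}\sum_{i=1}^{n}\big(\alpha^{i}\sum_{j\in(e^{i}\cup\{i\})}\alpha^{j}\big)+\frac{\sum_{i=1}^{n}\xi^{i}}{1-\gamma}$.

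Since the heavy lifting, namely controlling a single agent's discrepancy through the visitation-distribution shift and the PreOPC approximation error $\xi^{i}$, is already done in \Cref{thm:single_agent_bound}, the only real obstacle here is bookkeeping. I must verify that the summands of $\G$ match the per-agent surrogates \emph{exactly}, in particular that both use the same sampling distribution $(d^{\jntcurpi},\jntinterpi{i})$ and the same approximate advantage $A^{\jntcurpi,\jntinterpi{i-1}}$, so that the telescoping is exact rather than merely approximate, and I must check the sign conditions ($\alpha^{i}\ge 0$ and the bracketed difference nonnegative) that justify the $\epsilon^{i}\le\epsilon$ replacement.
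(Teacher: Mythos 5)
Your proposal is correct and follows essentially the same route as the paper's own proof: both telescope $\J{\jnttarpi}-\J{\jntcurpi}$ across the intermediate joint policies $\jntinterpi{i}$, apply the triangle inequality to reduce the joint discrepancy to the sum of single-agent discrepancies $|\J{\jntinterpi{i}}-\Lo{\jntinterpi{i-1}}{\jntinterpi{i}}|$, and then invoke \Cref{thm:single_agent_bound} termwise with $\epsilon^{i}\le\epsilon$. Your additional verification that the coefficient of $\epsilon^{i}$ is nonnegative (justifying the replacement by $\epsilon$) and the explicit algebra for the second relaxation are details the paper leaves implicit, but they do not change the argument.
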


\Cref{eq:joint_bound} suggests a condition for monotonic improvement of the joint policy, similar to that in the remark under \Cref{prop:lost}. We further prove that the joint monotonic bound is incrementally tightened when performing the policy optimization agent-by-agent during a stage due to the single agent monotonic bound, i.e., the condition for improving $\J{\jnttarpi}$ is relaxed and more likely to be satisfied. The details can be found in \Cref{app:incre}. We present the monotonic bounds of other algorithms in \Cref{tab:bound}. Since $-\frac{1}{1-\gamma(1-\sum_{j \in (e^{i} \cup \{i\})}\alpha^{j})} < -\frac{1}{1-\gamma(1-\sum_{j=1}^{n}\alpha^{j})}$, \Eqref{eq:joint_bound} achieves the tightest bound compared to other single rollout algorithms, with $\xi^{i}\;\forall i \in \mathcal{N}$ small enough. The assumption about $\xi^{i}$ is valid since \trace{} is a contraction operator, which is a corollary of Theorem 1 in \citet{Munos2016}. A tighter bound improves expected performance by optimizing the surrogate objective more effectively \citep{Li2022}.

\vspace{-10pt}
\section{Agent-by-agent Policy Optimization}
\vspace{-5pt}
\label{sec:a2po}

We first give a practical implementation for optimizing the surrogate objective $\G$. When updating agent $i$, the monotonic bound in \Eqref{eq:sa_bound} consists of the total variation distances related to the preceding agents and agent $i$, i.e., $\alpha^{i}\sum_{j \in (e^{i} \cup \{i\})}\alpha^{j}$. It suggests that we can control the monotonic bound by controlling total variation distances $\alpha^{j}\; \forall j \in (e^{i} \cup \{i\})$, to effectively improve the expected performance. We consider applying the clipping mechanism to control the total variation distances $\alpha^{j}\; \forall j \in (e^{i} \cup \{i\})$ \citep{Queeney2021,Sun2022}. In the surrogate objective of agent $i$, i.e., $\J{\jntinterpi{i-1}}+\frac{1}{1-\gamma}\E_{(s, \jntact)\sim (d^{\jntcurpi}, \jntcurpi)}[\frac{\tarpi^{i}\prod_{j\in e^{i}}\tarpi^{j}}{\curpi^{i}\prod_{j\in e^{i}}\curpi^{j}}A^{\jntcurpi, \jntinterpi{i-1}}(s,\jntact)]$, $\J{\jntinterpi{i-1}}$ has no dependence to agent $i$, while the joint policy ratio $\frac{\tarpi^{i}\prod_{j\in e^{i}}\tarpi^{j}}{\curpi^{i}\prod_{j\in e^{i}}\curpi^{j}}$ in the advantage estimation is appropriate for applying the clipping mechanism. We further consider reducing the instability in estimating agent $i$'s policy gradient by clipping the joint policy ratio of preceding agents first, with a narrower clipping range \citep{Wu2021}. Thus we apply the clipping mechanism on the joint policy ratio twice: once on the joint policy ratio of preceding agents and once on the policy ratio of agent $i$. Finally, the practical objective for updating agent $i$ becomes:
\begin{equation}\label{eq:obj}
    \tilde{\mathcal{L}}_{\jntinterpi{i-1}}(\jntinterpi{i}) = \E_{(s, \jntact) \sim (d^{\jntcurpi}, \jntcurpi)}\big[
        \min
        \big(
        l(s,\jntact)A^{\jntcurpi, \jntinterpi{i-1}},
        \operatorname{clip}\big(l(s,\jntact), 1\pm\epsilon^{i} \big)A^{\jntcurpi, \jntinterpi{i-1}}
        \big)
        \big]~,
\end{equation}
where $l(s,\jntact)=\frac{\tarpi^{i}(a^{i}|s)}{\curpi^{i}(a^{i}|s)} g(s, \jntact)$, and $g(s, \jntact) = \operatorname{clip}(\frac{\prod_{j \in e^{i}}\tarpi^{j}(a^{j}|s)}{\prod_{j \in e^{i}}\curpi^{j}(a^{j}|s)}, 1\pm\frac{\epsilon^{i}}{2})$. The clipping parameter $\epsilon^{i}$ is selected as $\epsilon^{i}=\mathcal{C}(\epsilon, i)$, where $\epsilon$ is the base clipping parameter \mod{and $\mathcal{C}(\cdot,\cdot)$ is the clipping parameter adapting function}. We summarize our proposed \textit{\textbf{A}gent-by-\textbf{a}gent \textbf{P}olicy \textbf{O}ptimization} (\alg{}) in \Cref{alg:a2poframework}. Note that in \Cref{alg_line:selection}, the agent for the next update iteration is selected according to the agent selection rule $\mathcal{R}(\cdot)$.

\vspace{-5pt}
\begin{algorithm}[h]
    \caption{Agent-by-agent Policy Optimization (A2PO)}
    \label{alg:a2poframework}
    Initialize the joint policy $\jntcurpi_{0}=\{\curpi^{1}_{0}, \ldots, \curpi^{n}_{0}\}$, and the global value function $\V$.

    \For{iteration $m=1,2,\ldots$} {
    Collect data using $\jntcurpi_{m-1}=\{\curpi^{1}_{m-1}, \ldots, \curpi^{n}_{m-1}\}$.

    \For{Order $k=1,\ldots,n$} {

    Select an agent according to the selection rule as $i=\mathcal{R}(k)$. \\
    Policy $\curpi_{m}^{i}=\curpi^{i}_{m-1}$, preceding agents $e^{i}=\{\mathcal{R}(1), \ldots, \mathcal{R}(k-1)\}$. \label{alg_line:selection}

    Joint policy $\jntinterpi{i}=\{\curpi_{m}^{i}, {\curpi_{m}^{j \in e^{k}}}, {\curpi^{j \in {\mathcal{N} - e^{k}}}_{m-1}}\}$.\\
    Compute the advantage approximation as $A^{\jntcurpi, \jntinterpi{i-1}}(s, \jntact)$ via \Eqref{eq:trace}.

    Compute the value target $v(s_t) = A^{\jntcurpi, \jntinterpi{i-1}}(s, \jntact) + \V(s)$.

    \For{$P$ epochs} {
    $\curpi^{i}_{m} = \argmax_{\curpi^{i}_{m}} \tilde{\mathcal{L}}_{\jntinterpi{i-1}}(\jntinterpi{i})$ as in \Eqref{eq:obj}.

    $\V = \argmin_{\V} \E_{s\sim d^{\jntcurpi}}\|v(s)-V(s)\|^{2}$.\\
    }

    }

    }
\end{algorithm}
\vspace{-5pt}
\Eqref{eq:obj} approximates the surrogate objective of a single agent. \mod{We remark that the monotonic improvement guarantee of a single agent reveals how the update of a single agent affects the overall objective.} We will further discuss $\mathcal{R}(\cdot)$ and $\mathcal{C}(\cdot,\cdot)$ 
from the perspective of how to benefit the optimization of the overall surrogate objective by coordinating the policy updates of each agent.

\textbf{Semi-greedy Agent Selection Rule}. With the monotonic policy improvement guarantee on the joint policy, as shown in \Cref{thm:joint_bound}, we can effectively improve the expected performance $\J{\jnttarpi}$ by optimizing the surrogate objective of all agents $\G=\J{\jntcurpi} + \sum_{i=1}^{n}\Lo{\jntinterpi{i-1}}{\jntinterpi{i}}$. Since the policies except $\curpi^{i}$ are fixed when maximizing $\Lo{\jntinterpi{i-1}}{\jntinterpi{i}}$, we recognize maximizing $\sum_{i=1}^{n} \tilde{\mathcal{L}}_{\jntinterpi{i-1}}(\jntinterpi{i})$ as performing a block coordinate ascent, i.e., iteratively seeking to update a block of chosen coordinates (agents) while other blocks (agents) are fixed. As a special case of the coordinate selection rule, the agent selection rule becomes crucial for convergence. On the one hand, intuitively, updating agent with a bigger absolute value of the advantage function contributes more to optimizing $\G$. Inspired by the Gauss-Southwell rule \citep{Gordon2015}, we propose the greedy agent selection rule, under which an agent with a bigger absolute value of the expected advantage function is updated with a higher priority. 
We will verify that the agents with small absolute values of the advantage function also benefit from the greedy selelction rule in \Cref{app:add_exp_ablation}. 
On the other hand, purely greedy selection may lead to early convergence which harms the performance. Therefore, we introduce randomness into the agent selection rule to avoid converging too early \citep{Lu2018}. Combining the merits, we propose the semi-greedy agent selection rule as $\begin{cases}
        \mathcal{R}(k) = \argmax_{i \in (\mathcal{N} - e)} \E_{s,a^{i}}[|A^{\jntcurpi, \jntinterpi{\mathcal{R}(k-1)}}|], & k \mod 2 = 0 \\
        \mathcal{R}(k) \sim \mathcal{U}(\mathcal{N} - e),                                                   & k \mod 2 = 1
    \end{cases}$, where $e = \{\mathcal{R}(1), \ldots, \mathcal{R}(k-1)\}$ and $\mathcal{U}$ is a uniform distribution. We verify that the semi-greedy agent selection rule contributes to the performance of A2PO in \Cref{subs:ablation}.

\textbf{Adaptive Clipping Parameter}. We improve the sample efficiency by updating all agents using the samples collected under the base joint policy $\jntcurpi$. However, when updating agent $i$ by optimizing $\frac{1}{1-\gamma}\E_{(s, \jntact)\sim (d^{\jntcurpi}, \jntinterpi{i})}[A^{\jntcurpi, \jntinterpi{i-1}}(s,\jntact)]$, the expectation of advantage function is estimated using the states sampled under $\jntcurpi$ instead of $\jntinterpi{i-1}$, which reintroduces the non-stationarity since agent $i$ can not perceive the change of the preceding agents. With the non-stationarity modeled by the state transition shift \citep{Sun2022}, we define the state transition shift encountered when updating agent $i$ as $\Delta_{\curpi^{1}, \ldots, \curpi^{n}}^{\tarpi^{1}, \ldots, \tarpi^{i-1}, \curpi^{i}, \ldots, \curpi^{n}}(s^{\prime}|s) = \sum_{\jntact}[\T(s^{\prime}|s,\jntact)(\jntinterpi{i-1}(\jntact|s)-\jntcurpi(\jntact|s))]$. The state transition shift has the following property.

\vspace{-5pt}
\begin{preproposition}
    \label{prop:shift}
    The state transition shift $\Delta_{\curpi^{1}, \ldots, \curpi^{n}}^{\tarpi^{1}, \ldots, \tarpi^{i-1}, \curpi^{i}, \ldots, \curpi^{n}}(s^{\prime}|s)$ can be decomposed as follows.
    \[
        \Delta_{\curpi^{1}, \ldots, \curpi^{n}}^{\tarpi^{1}, \ldots, \tarpi^{i-1}, \curpi^{i}, \ldots, \curpi^{n}} =
        \Delta_{\pi^{1}, \ldots, \pi^{n}}^{\tarpi^{1}, \pi^{2}, \ldots, \pi^{n}}
        + \Delta_{\tarpi^{1}, \pi^{2}, \ldots, \pi^n}^{\tarpi^{1}, \tarpi^{2}, \curpi^{3}, \ldots, \pi^n}
        + \cdots
        + \Delta_{\tarpi^{1}, \ldots, \tarpi^{i-2}, \curpi^{i-1}, \ldots, \pi^n}^{\tarpi^{1}, \ldots, \tarpi^{i-1}, \curpi^{i}, \ldots, \pi^{n}}
        \]
\end{preproposition}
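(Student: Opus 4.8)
The plan is to prove the identity by a straightforward telescoping argument, exploiting the fact that the transition kernel $\T(\sp|s,\jntact)$ does not depend on any policy and therefore factors out of each shift term. First I would observe that each summand on the right-hand side is itself an instance of the state transition shift defined just above the proposition. By the definition of the intermediate joint policies $\jntinterpi{k}=\tarpi^{1}\times\cdots\times\tarpi^{k}\times\curpi^{k+1}\times\cdots\times\curpi^{n}$, consecutive intermediate policies $\jntinterpi{k-1}$ and $\jntinterpi{k}$ differ only in the policy of agent $k$, so the $k$-th labeled term is exactly the shift from $\jntinterpi{k-1}$ to $\jntinterpi{k}$, namely
\begin{equation*}
  \Delta_{\jntinterpi{k-1}}^{\jntinterpi{k}}(\sp|s)=\sum_{\jntact}\T(\sp|s,\jntact)\big(\jntinterpi{k}(\jntact|s)-\jntinterpi{k-1}(\jntact|s)\big).
\end{equation*}

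Next I would sum these terms over $k=1,\ldots,i-1$ and interchange the two summations, pulling the common factor $\sum_{\jntact}\T(\sp|s,\jntact)$ outside. This leaves the inner sum $\sum_{k=1}^{i-1}\big(\jntinterpi{k}(\jntact|s)-\jntinterpi{k-1}(\jntact|s)\big)$, which telescopes to $\jntinterpi{i-1}(\jntact|s)-\jntinterpi{0}(\jntact|s)$. Invoking the convention $\jntinterpi{0}=\jntcurpi$ established before the general sequential update scheme, the collapsed expression equals $\sum_{\jntact}\T(\sp|s,\jntact)\big(\jntinterpi{i-1}(\jntact|s)-\jntcurpi(\jntact|s)\big)$, which is precisely the definition of $\Delta_{\curpi^{1},\ldots,\curpi^{n}}^{\tarpi^{1},\ldots,\tarpi^{i-1},\curpi^{i},\ldots,\curpi^{n}}(\sp|s)$, completing the argument.

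There is essentially no analytic obstacle here; the statement is a bookkeeping identity driven entirely by the linearity of $\sum_{\jntact}\T(\sp|s,\jntact)(\cdot)$ in the policy argument. The only point requiring care is matching the notation, that is, verifying that each labeled term in the stated sum really is the single-agent increment $\jntinterpi{k}-\jntinterpi{k-1}$ so that no agent's policy change is double-counted or omitted, and confirming that the two endpoints of the telescope are indeed $\jntinterpi{0}=\jntcurpi$ and $\jntinterpi{i-1}$. Once the summands are identified with consecutive differences of the $\jntinterpi{k}$, the telescoping cancellation finishes the proof immediately.
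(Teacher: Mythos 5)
Your proof is correct: each labeled term on the right-hand side is exactly the single-agent shift $\Delta_{\jntinterpi{k-1}}^{\jntinterpi{k}}(\sp|s)=\sum_{\jntact}\T(\sp|s,\jntact)\big(\jntinterpi{k}(\jntact|s)-\jntinterpi{k-1}(\jntact|s)\big)$, and summing over $k=1,\ldots,i-1$ telescopes (by linearity of $\sum_{\jntact}\T(\sp|s,\jntact)(\cdot)$ in the policy argument) to the shift from $\jntinterpi{0}=\jntcurpi$ to $\jntinterpi{i-1}$, which is the left-hand side. The paper states this proposition without a written proof precisely because it is this immediate bookkeeping identity, so your argument is the intended one.
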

\vspace{-5pt}

\Cref{prop:shift} shows that the total state transition shift encountered by agent $i$ can be decomposed into the sum of state transition shift caused by each agent whose policy has been updated. Shifts caused by agents with higher priorities will be encountered by more following agents and thus contribute more to the non-stationarity problem.
Recall that the state transition shift effectively measures the total variation distance between policies. Therefore, in order to reduce the non-stationarity brought by the agents' policy updates, we can adaptively clip each agent's surrogate objective according to their update priorities. We propose a simple yet effective method, named adaptive clipping parameter, to adjust the clipping parameters according to the updating order: $\mathcal{C}(\epsilon, k) = \epsilon \cdot c_{\epsilon} + \epsilon \cdot (1-c_{\epsilon}) \cdot k / n$, where $c_{\epsilon}$ is a hyper-parameter. We demonstrate how the agents with higher priorities affect the following agents in \Cref{fig:cpt_ana}. Under the clipping mechanism, the influence of the agents with higher priority could be reflected in the clipping ranges of the joint policy ratio. The policy changes of the preceding agents may constrain the following agents to optimize the surrogate objective within insufficient clipping ranges, as shown on the left side of \Cref{fig:cpt_ana}. The right side of \Cref{fig:cpt_ana} demonstrates that the adaptive clipping parameter method leads to balanced and sufficient clipping ranges.

\begin{figure}[htbp]
    \centering
    \includegraphics[width=0.49\linewidth]{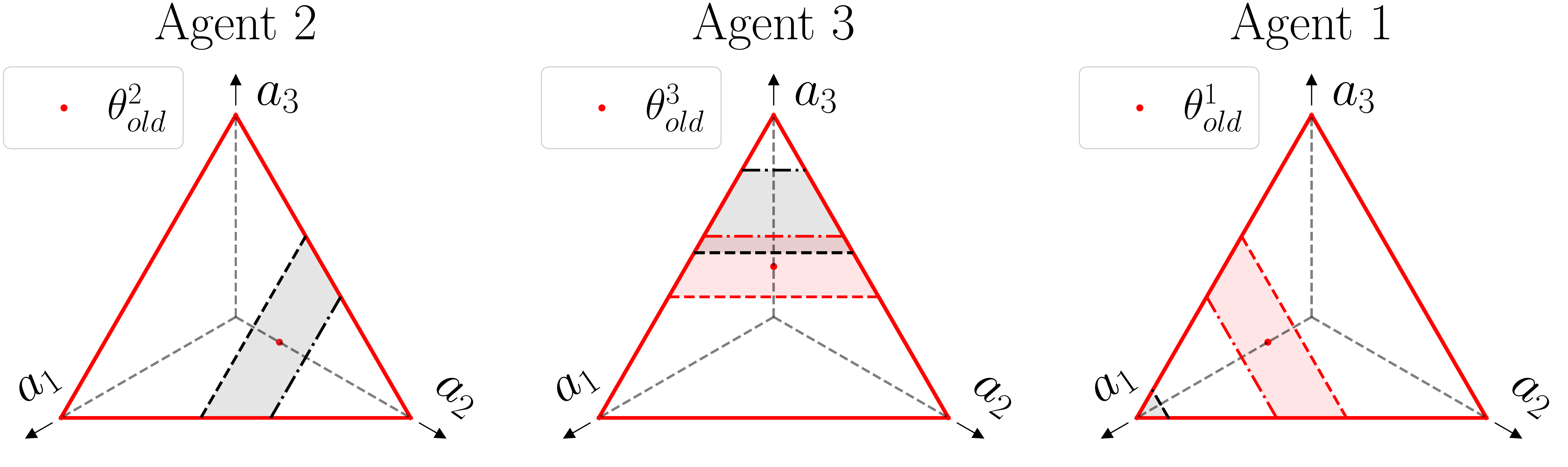}
    \hfill
    \tikz{\draw[-,black, densely dashed, thick](0,-1.0) -- (0,1.0);}
    \hfill
    \includegraphics[width=0.49\linewidth]{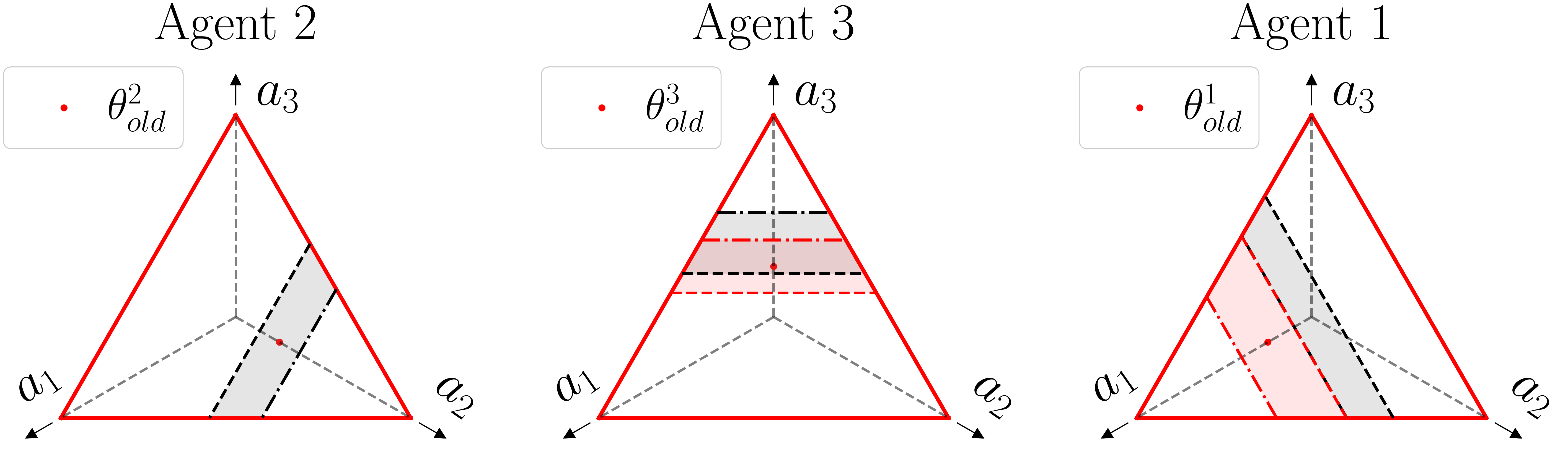}
    \caption{The clipping ranges of three agents. The surface $a_{1}+a_{2}+a_{3}=1$ demonstrates the policy space of three discrete actions. The agents are updated in the order of $2,3,1$. The areas in gray/pink are the clipping ranges with/without considering the joint policy ratio of preceding agents. \textbf{Left}: The agents have the same clipping parameters. The clipping range of agent \mod{$1$} is insufficient due to the large variation in the policies of \mod{agent $2$ and agent $3$}. \textbf{Right}: The clipping ranges are more balanced and sufficient with the adaptive clipping parameter method.}
    \label{fig:cpt_ana}
\end{figure}
\vspace{-10pt}
\section{Experiments}
\vspace{-5pt}

In this section, we empirically evaluate and analyze \alg{} in the widely adopted cooperative multi-agent benchmarks, including the StarCraftII Multi-agent Challenge (SMAC) \citep{Samvelyan2019}, Multi-agent MuJoCo (MA-MuJoCo) \citep{Witt2020}, Multi-agent Particle Environment (MPE) \citep{Lowe2017}\footnote{We evaluate \alg{} in fully cooperative and general-sum MPE tasks respectively, showing the potential of extending \alg{} to general-sum games, see \Cref{app:exp_setup_result_mpe} for full results.}, and more challenging Google Research Football (GRF) full-game scenarios \citep{Kurach2020}. Experimental results demonstrate that \textbf{1)} \textit{\alg{} achieves performance and efficiency superior to those of state-of-the-art MARL Trust Region methods}, \textbf{2)} \textit{\alg{} has strength in encouraging coordination behaviors to complete complex cooperative tasks}, and \textit{\textbf{3)} the PreOPC, the semi-greedy agent selection rule, and the adaptive clipping parameter methods significantly contribute to the performance improvement}. \footnote{Code is available at \url{https://anonymous.4open.science/r/A2PO}.}

We compare \alg{} with advanced MARL trust-region methods: MAPPO \citep{Yu2022}, CoPPO \citep{Wu2021} and HAPPO \citep{kuba2022trust}. We implement all the algorithms as parameter sharing in SMAC and MPE, and as parameter-independent in MA-MuJoCo and GRF, according to the homogeneity and heterogeneity of agents. We divide the agents into blocks for tasks with numerous agents to control the training time of \alg{} comparable to other algorithms. Full experimental details can be found in \Cref{app:exp}.
\vspace{-5pt}
\subsection{Performance and Efficiency}
\vspace{-5pt}

We evaluate the algorithms in 9 maps of SMAC with various difficulties, 14 tasks of 6 scenarios in MA-MuJoCo, and the 5-vs-5 and 11-vs-11 full game scenarios in GRF. Results in \Cref{tab:smac}, \Cref{fig:mujoco}, and \Cref{fig:grf} show that \alg{} consistently outperforms the baselines and achieves higher sample efficiency in all benchmarks. More results and the experimental setups can be found in \Cref{app:exp_setup_result}.

\textbf{StarCraftII Multi-agent Challenge (SMAC)}. As shown in \Cref{tab:smac}, \alg{} achieves (nearly) 100\% win rates in 6 out of 9 maps and significantly outperforms other baselines in most maps. \mod{In \Cref{tab:smac}, we additionally compare the performance with that of Qmix \citep{rashid2018qmix}, a well known baseline in SMAC}. We also observe that CoPPO and \alg{} have better stability as they consider clipping joint policy ratios. 
\begin{table}[htbp]
\vspace{-5pt}
    \centering
    \caption{Median win rates and standard deviations on SMAC tasks.}
    \small
    \begin{tabular}{llllll|l}
        \toprule
        Map            & Difficulty & MAPPO w/ PS         & CoPPO w/ PS          & HAPPO w/ PS         & A2PO w/ PS   & Qmix w/ PS     \\
        \midrule
        MMM            & Easy       & 96.9\tiny(0.988)         & 96.9\tiny(1.25)           & 95.3\tiny(2.48)          & \textbf{100\tiny(1.07)}  & 95.3\tiny(5.2) \\
        3s5z           & Hard       & 84.4\tiny(4.39)          & 92.2\tiny(2.35)           & 92.2\tiny(1.74)          & \textbf{98.4\tiny(1.04)} & 88.3\tiny(2.9) \\
        5m\_vs\_6m     & Hard       & 84.4\tiny(2.77)          & 84.4\tiny(2.12)           & 87.5\tiny(2.51)          & \textbf{90.6\tiny(3.06)} & 75.8\tiny(3.7) \\
        8m\_vs\_9m     & Hard       & 84.4\tiny(2.39)          & 84.4\tiny(2.04)           & 96.9\tiny(3.78)          & \textbf{100\tiny(1.04)} & 92.2\tiny(2.0)  \\
        10m\_vs\_11m   & Hard       & 93.8\tiny(18.7)          & 96.9\tiny(2.6)            & 98.4\tiny(2.99)          & \textbf{100\tiny(0.521)} & 95.3\tiny(1.0) \\
        6h\_vs\_8z     & Super Hard & 87.5\tiny(1.53)          & \textbf{90.6\tiny(0.765)} & 87.5\tiny(1.49)          & \textbf{90.6\tiny(1.32)} & 9.4\tiny(2.0)\\
        3s5z\_vs\_3s6z & Super Hard & 82.8\tiny(19.2)          & 84.4\tiny(2.9)            & 37.5\tiny(13.2)          & \textbf{93.8\tiny(19.8)} & 82.8\tiny(5.3) \\
        MMM2           & Super Hard & 90.6\tiny(8.89)          & 90.6\tiny(6.93)           & 51.6\tiny(9.01)          & \textbf{98.4\tiny(1.25)} & 87.5\tiny(2.6) \\
        27m\_vs\_30m   & Super Hard & 93.8\tiny(3.75)          & 93.8\tiny(2.2)            & 90.6\tiny(4.77)          & \textbf{100\tiny(1.55)} & 39.1\tiny(9.8)  \\
        \midrule
        Overall      & /            & 88.7\tiny(6.96)  & 90.5\tiny(2.57)  & 81.9\tiny(4.67)    & \textbf{96.9\tiny(3.41)} & 74.0\tiny(3.83) \\
        \bottomrule
    \end{tabular}
    \label{tab:smac}
\end{table}
\textbf{Multi-agent MuJoCo environment (MA-MuJoCo)}. We investigate whether \alg{} can scale to more complex continuous control multi-agent tasks in MA-MuJoCo.
We calculate the normalized score $\frac{\text{return} - \text{minimum return}}{\text{maximum return} - \text{minimum return}}$ over all the 14 tasks in the left of \Cref{fig:mujoco}. 
We also present part of results in the right of \Cref{fig:mujoco}, where the control complexity and observation dimension, depending on the number of the robot's joints, increases from left to right. We observe that \alg{} generally shows an increasing advantage over the baselines with increasing task complexity.

\begin{figure}[htbp]
    \centering
    \includegraphics[height=0.2\linewidth]{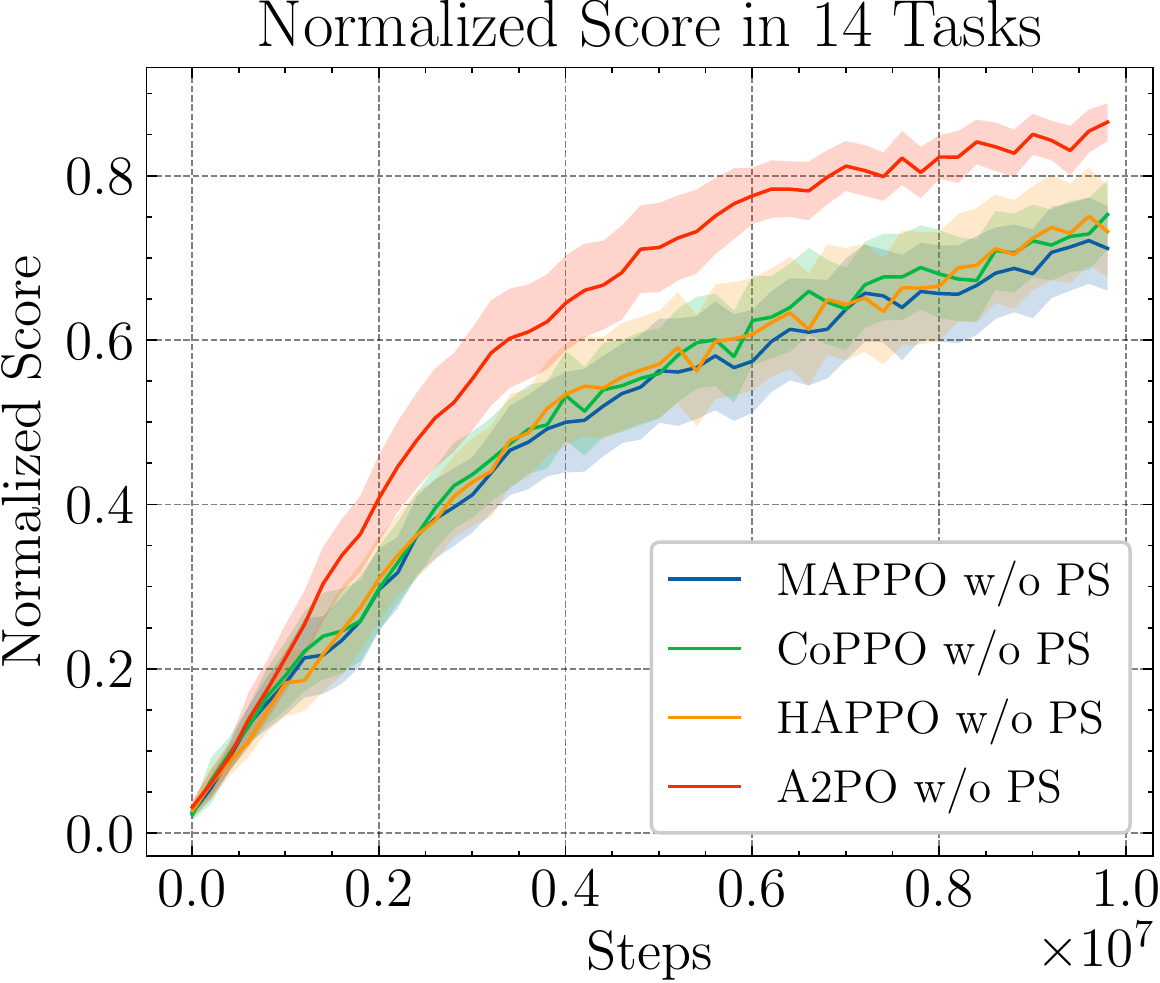}
    \hfill
    \tikz{\draw[-,black, solid, thick](0,-0.8) -- (0,1.9);}
    \hfill
    \includegraphics[height=0.2\linewidth]{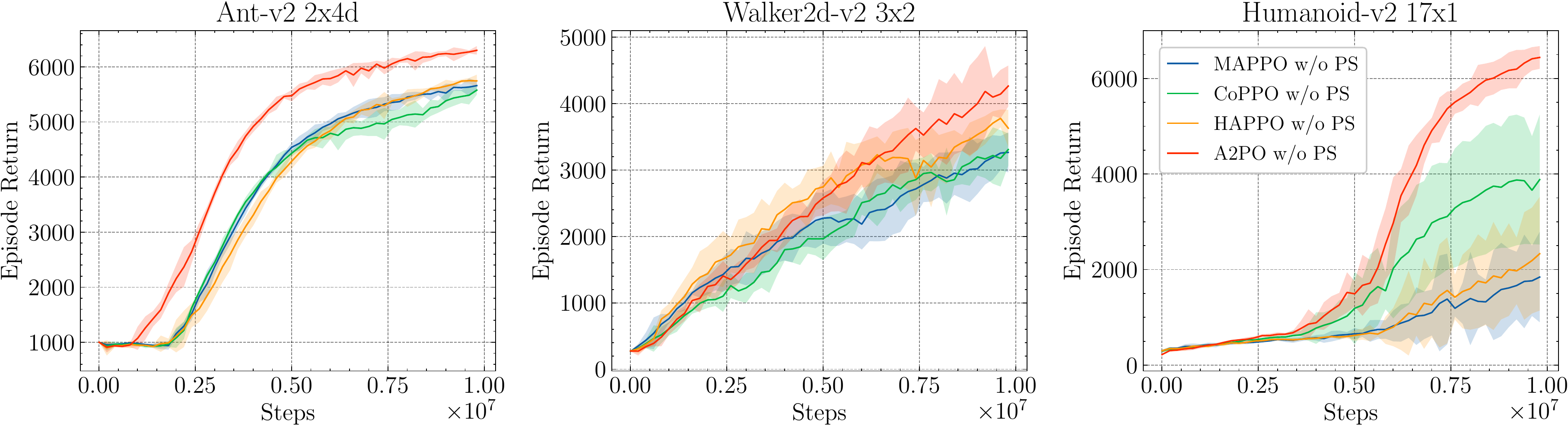}
    \caption{Experiments in MA-MuJoCo. \textbf{Left}: Normalized scores on all the 14 tasks. \textbf{Right}: Comparisons of averaged return on selected tasks. The number of robot joints increases from left to right.}
    \label{fig:mujoco}
\end{figure}
\vspace{-5pt}
\textbf{Google Research Football (GRF)}. 
We evaluate \alg{} in GRF full-game scenarios, where agents have difficulty discovering complex coordination behaviors. 
\alg{} obtains nearly 100\% win rate in the 5-vs-5 scenario. In both scenarios, we attribute the performance gain of \alg{} to the learned coordination behavior. 
We analyze the experiments in GRF to verify that \alg{} encourages agents to learn coordination behaviors in complex tasks. 
In \Cref{tab:grf_coordination}, an `Assist' is attributed to the player who passes the ball to the teammate that makes a score, a `Pass' is counted when the passing-and-receiving process is finished, `Pass Rate' is the proportion of success passes over the pass attempts.
\alg{} have an advantage in passing-and-receiving coordination, leading to more assists and scores. 

\begin{figure}[htbp]
    \centering
    \begin{minipage}[t]{0.48\linewidth}%
        \vspace{0pt}
        \centering
        \includegraphics[width=\linewidth]{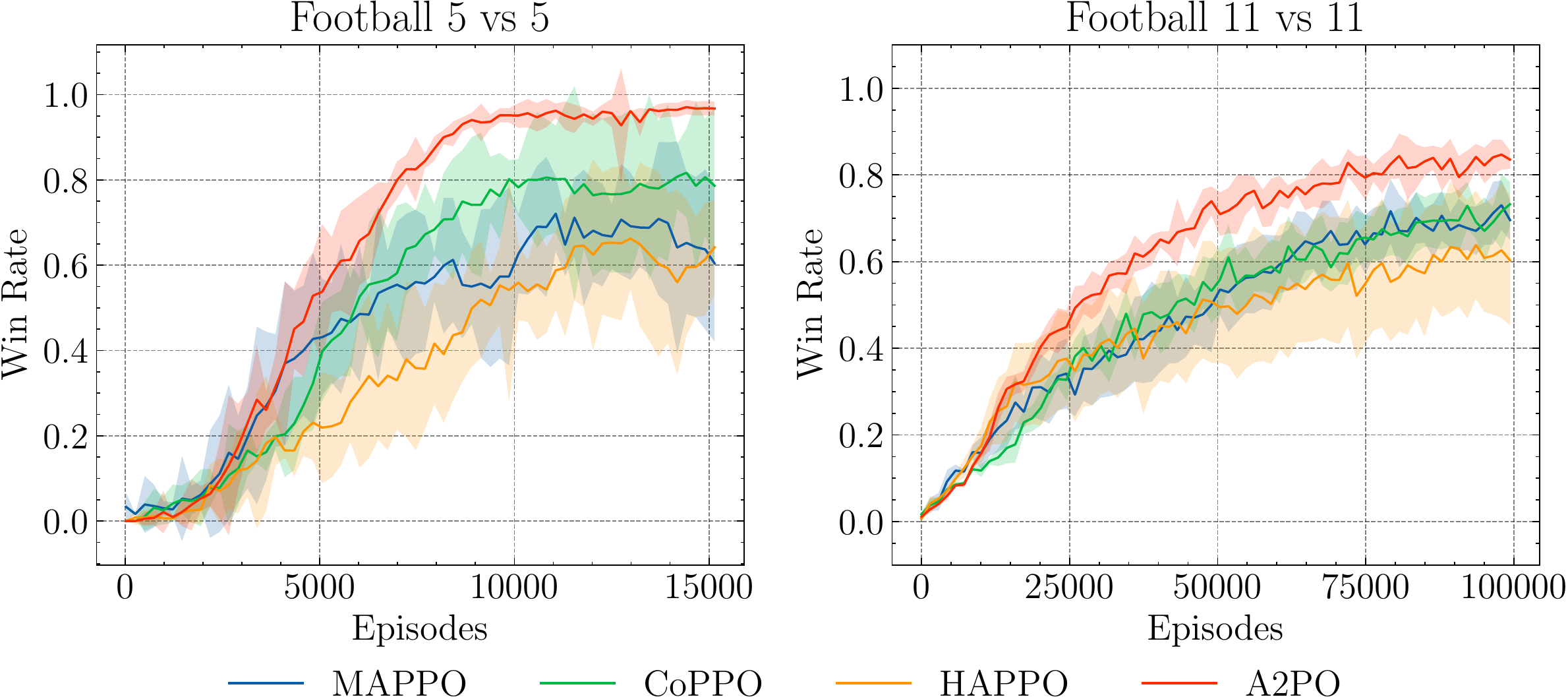}
        \caption{Averaged win rate on the Google Research Football full-game scenarios.}
        \label{fig:grf}
    \end{minipage}
    \hfill
    \begin{minipage}[t]{0.48\linewidth}
        \centering
        \vspace{0pt}
        \captionof{table}{Learned behaviors on the Google Research Football 5-vs-5 scenario. Bigger values are better except fot the `Lost' metric.}
        \label{tab:grf_coordination}
        \resizebox{\linewidth}{!}{
        \renewcommand\arraystretch{1.0}  
        \begin{tabular}{lllll}
        \toprule
         Metric         & MAPPO      & CoPPO      & HAPPO      & A2PO       \\
        \midrule
         Assist            & 0.04\tiny(0.02) & 0.19\tiny(0.08) & 0.07\tiny(0.05) & \textbf{0.56\tiny(0.20)} \\
         Goal              & 1.95\tiny(1.17)     & 4.42\tiny(2.08)    & 2.68\tiny(0.86)    & \textbf{9.01\tiny(0.95)}  \\
         Lost               & \textbf{0.49\tiny(0.11)}     & 0.74\tiny(0.33)    & 1.04\tiny(0.12)     & 0.78\tiny(0.15)   \\
         Pass      & 1.52\tiny(0.13)     & 3.44\tiny(1.04)    & 4.03\tiny(1.97)     & \textbf{6.42\tiny(2.23) }  \\
         Pass Rate & 19.3\tiny(10.0)       & 35.0\tiny(10.3)      & 48.9\tiny(25.7)     & \textbf{67.1\tiny(11.7)}   \\
        \bottomrule
        \end{tabular}
        }
    \end{minipage}
\end{figure}

\vspace{-5pt}
\subsection{Ablation Study}
\label{subs:ablation}

This section studies how PreOPC, the semi-greedy agent selection rule, and the adaptive clipping parameter affect the performance.
Full ablation details can be found in \Cref{app:add_exp_ablation}

\begin{wrapfigure}{r}{0.5\linewidth}
    \centering
    \vspace{-10pt}
    \includegraphics[width=1.0\linewidth]{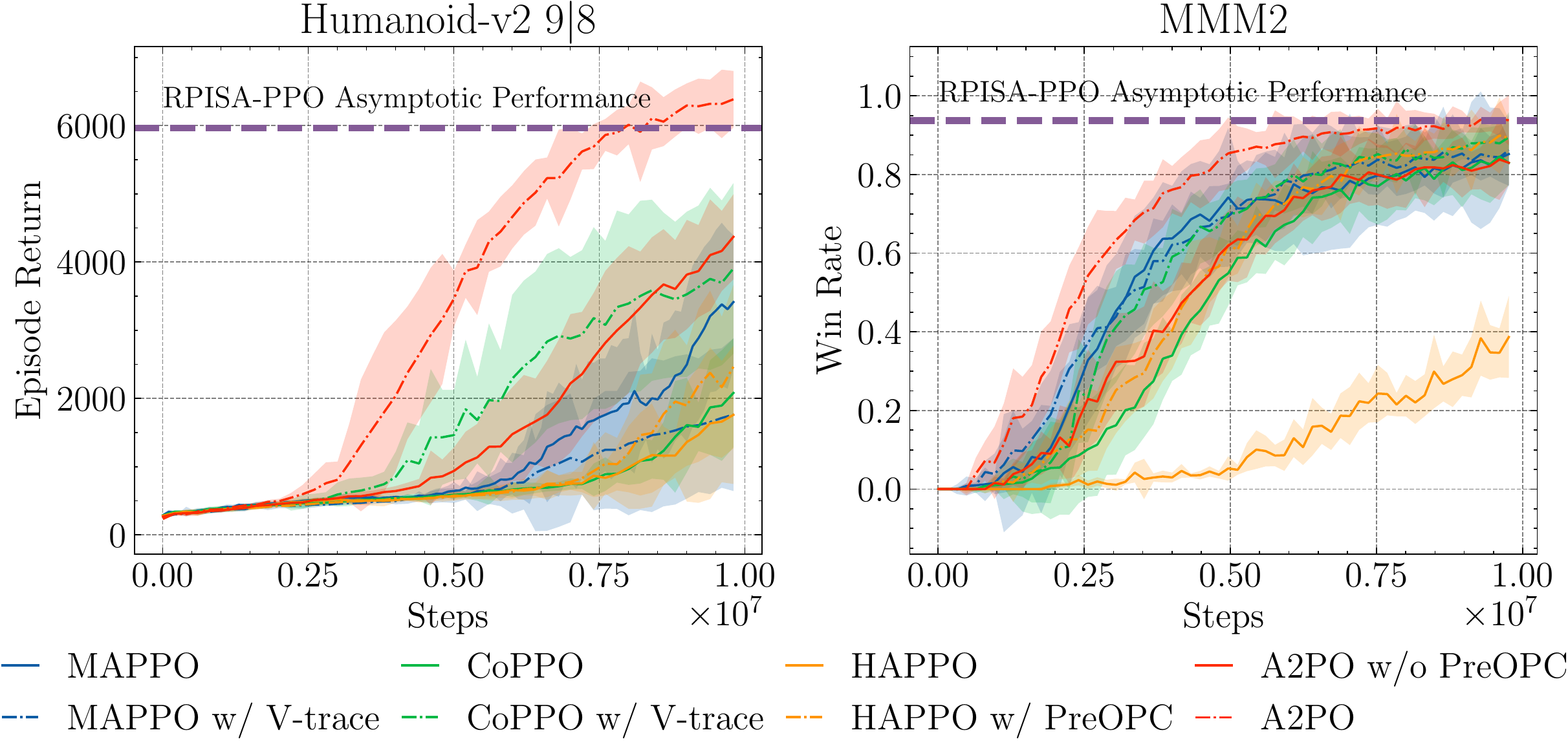}
    \caption{Ablation experiments on \trace{}.}
    \label{fig:ablation_trace}
    \vspace{-5pt}
\end{wrapfigure}
\textbf{PreOPC}. \Cref{fig:ablation_trace} shows the effects of utilizing off-policy correction in two cases: 1) Correction on all agents' policies for simultaneous update algorithms, i.e., MAPPO w/ V-trace \mod{\citep{Espeholt2018}} and CoPPO w/ V-trace, and 2) Correction on the preceding agents' policies for sequential update algorithms, i.e., HAPPO w/ PreOC and \alg{}. V-trace brings no general improvement to MAPPO and CoPPO, while PreOPC significantly improves the sequential update cases. PreOPC improves the performance of HAPPO significantly, while A2PO still outperforms HAPPO w/ PreOPC. The performance gap lies in that \alg{} clips the joint policy ratios, which matches the monotonic bound in \Cref{thm:single_agent_bound}. 
The results verify that \alg{} reaches or outperforms the asymptotic performance of \seqrpippo{} using an approximated advantage function and updating all the agents with the same rollout samples. 
Additionally, \trace{} does not increase the sensitivity of the hyper-parameter $\lambda$, as shown in \Cref{app:add_exp_ablation}.

\textbf{Agent Selection Rule}. We provide comparisons of different agent selection rules in \Cref{fig:ablation_seq_strategy}. The `Cyclic' rule means select agents in the order $1,\ldots, n$, and other rules have been introduced in \cref{sec:a2po}. The semi-greedy rule considers the optimization acceleration and the performance balance among agents and thus performs the best in all tasks.

\textbf{Adaptive Clipping Parameter}. We propose the adaptive clipping parameter method for balanced and sufficient clipping ranges of agents. As shown in \Cref{fig:ablation_cpt}, the adaptive clipping parameter contributes to the performance gain of \alg{}.

\begin{figure}[htbp]
    \centering
    \begin{minipage}[t]{0.48\linewidth}
        \centering
        \includegraphics[width=1.0\linewidth]{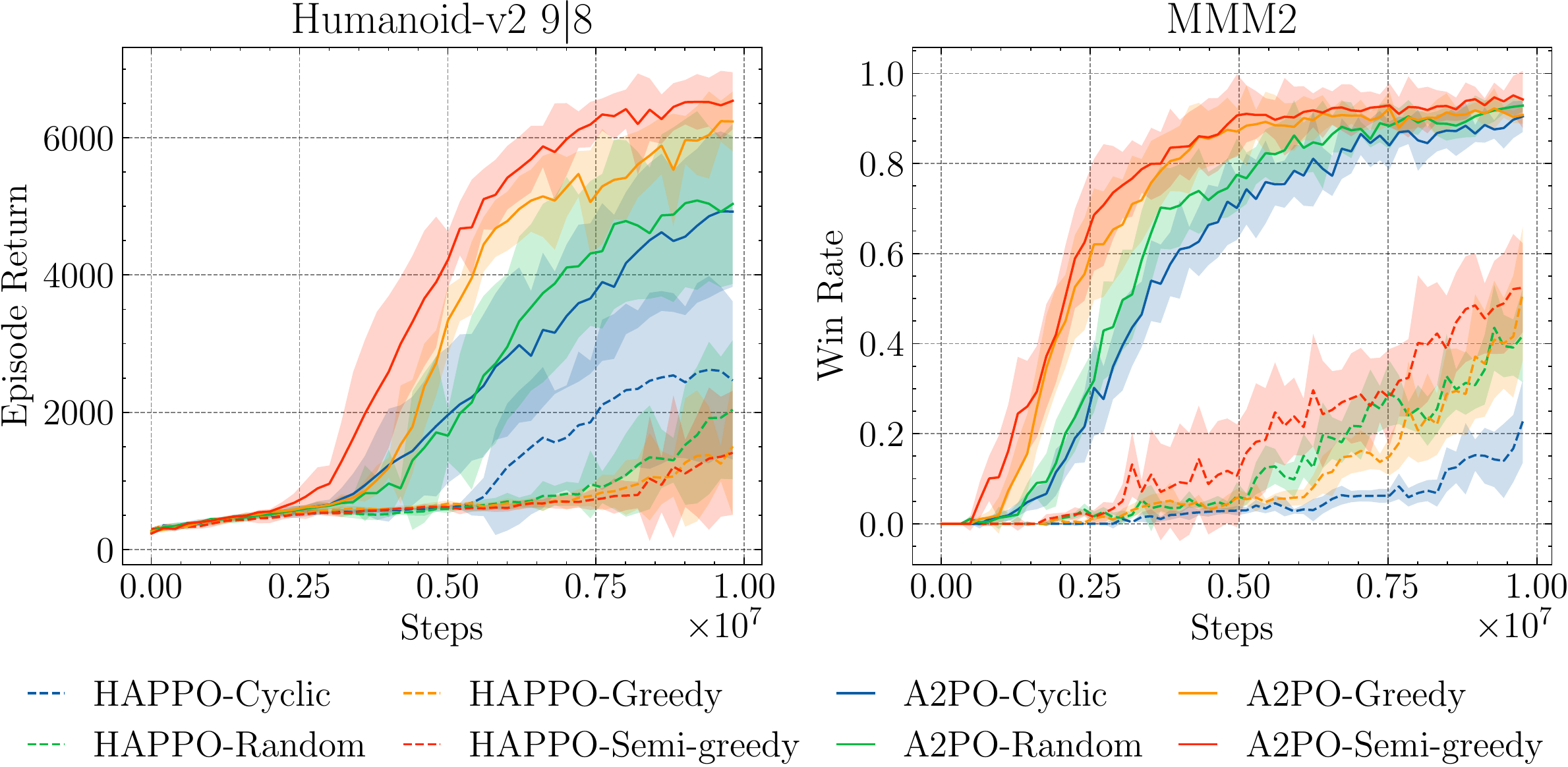}
        \caption{Ablation experiments on the agent selection rules.}
        \label{fig:ablation_seq_strategy}
    \end{minipage}
    \hfill
    \begin{minipage}[t]{0.48\linewidth}%
        \centering
        \includegraphics[width=1.0\linewidth]{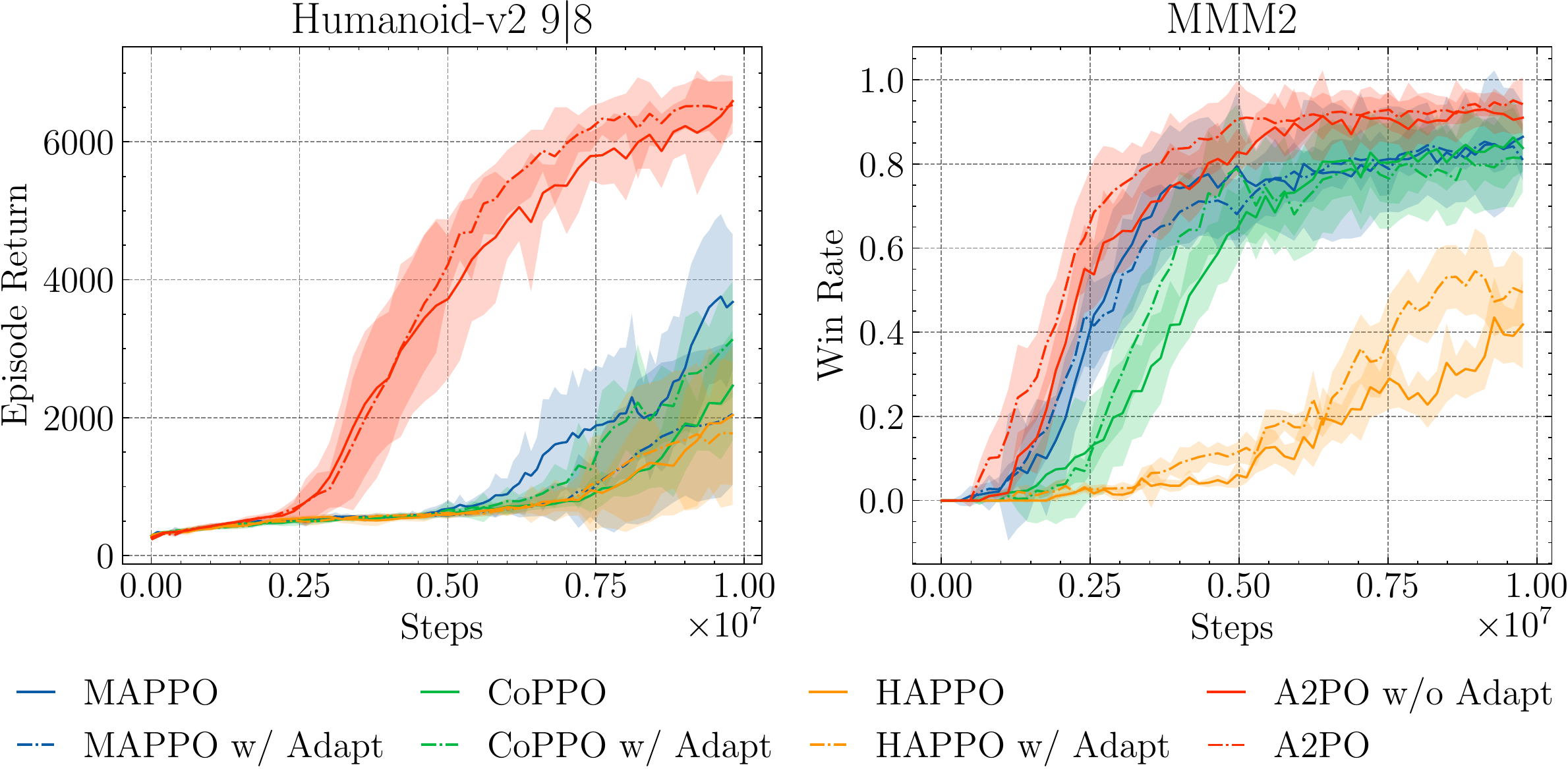}
        \caption{Ablation experiments on the adaptive clipping parameter method.}
        \label{fig:ablation_cpt}
    \end{minipage}
\end{figure}
\vspace{-5pt}
\section{Conclusion}
In this paper, we investigate the potential of the sequential update scheme in coordination tasks. We introduce \alg{}, a sequential algorithm using a single rollout at a stage, which guarantees monotonic improvement on both the joint policy and each agent's policy. We also justify that the monotonic bound achieved by \alg{} is the tightest among existing trust region MARL algorithms under single rollout scheme. Furthermore, \alg{} integrates the proposed semi-greedy agent selection rule and adaptive clipping parameter method. Experiments in various benchmarks demonstrate that \alg{} consistently outperforms state-of-the-art methods in performance and sample efficiency and encourages coordination behaviors for completing complex tasks. For future work, we plan to analyze the theoretical underpinnings of the agent selection rules and study the learnable methods to select agents and clipping parameters.

\clearpage
\newpage

\textbf{Acknowledgements.} The SJTU team is partially supported by ``New Generation of AI 2030'' Major Project (2018AAA0100900), the Shanghai Municipal Science and Technology Major Project (2021SHZDZX0102), the Shanghai Sailing Program (21YF1421900), the National Natural Science Foundation of China (62076161, 62106141). Xihuai Wang and Ziyu Wan are supported by Wu Wen Jun Honorary Scholarship, AI Institute, Shanghai Jiao Tong University. We thank Yan Song and He Jiang for their help in the football experiments.

\textbf{Ethics Statement.}
Our method and algorithm do not involve any adversarial attack, and will not endanger human security.
All our experiments are performed in the simulation environment, which does not involve ethical and fair issues.

\textbf{Reproducibility Statement.}
The source code of this paper is available at \url{https://anonymous.4open.science/r/A2PO}. 
We provide proofs in \cref{app:proof}, including the proofs of intuitive sequential update, monotonic policy improvement of A2PO, incrementally tightened bound of A2PO and monotonic policy improvement of MAPPO, CoPPO and HAPPO.
We specify all the experiments implementation details, the experiments setup, and the additional results in the \cref{app:exp}.
The related works of coordinate descent are shown in \cref{app:rw}.
\bibliography{conference}
\bibliographystyle{iclr2023_conference}

\renewcommand \thepart{}
\renewcommand \partname{}
\newpage

\appendix
\addcontentsline{toc}{section}{Appendix} 
\part{Supplementary Material} 
\parttoc 
\newpage
\section{Proofs}
\label{app:proof}

\subsection{Notations}

We list the main notations used in \Cref{tab:notation}.

\begin{table}[h]
    \centering
    \caption{The notations and symbols used in this paper.}
    \small
    \begin{tabular}{c|l}
        \toprule
        Notation & Definition \\
        \midrule
        $\mathcal{S}$ & The state space \\
        $\mathcal{N}$ & The set of agents \\
        $n$ & The number of agents \\
        $i$ & The agent index \\
        $\mathcal{A}^{i}$ & The action space of agent $i$ \\
        $r$ & The reward function \\
        $\mathcal{T}$ & The transition function \\
        $\gamma$ & The discount factor \\
        $t$ & The time-step \\
        $s_{t}$ & The state at time-step $t$ \\
        $a^{i}_{t}$ & The action of agent $i$ at time-step $t$ \\
        $\bm{a}_{t}$ & The joint action at time-step $t$ \\
        $d^{\jntcurpi}$ & The discounted state visitation distribution \\
        $Pr$ & The state probability function \\
        $V$ & The value function \\
        $A$ & The advantage function \\
        $\tau$ & The trajectory of an episode \\
        $e$ & A set of preceding agents \\
        $e^{i}$ & The set of preceding agents updated before agent $i$ \\
        $\pi^{i}$ & The policy of agent $i$ \\
        $\tarpi^{i}$ & The updated policy of agent $i$ \\
        $\bm{\pi}$ & The joint policy \\
        $\lambda$ & The bias and variance balance parameter \\
        $\jnttarpi$ & The joint target policy \\
        $\jntinterpi{i}$ & The joint policy after updating agent $i$ \\
        $\mathcal{J}(\jntcurpi)$ & The expected return / performance of the joint policy $\jntcurpi$ \\
        $\mathcal{L}_{\jntinterpi{i-1}}(\jntinterpi{i})$ & The surrogate objective of agent $i$ \\
        $\mathcal{L}_{\jntinterpi{i-1}}^{I}(\jntinterpi{i})$ & An intuitive surrogate objective of agent $i$ \\
        $\mathcal{G}_{\jntcurpi}(\jnttarpi)$ & The surrogate objective of all agents \\
        $\epsilon$ & The upper bound of an advantage function \\
        $D_{TV}$ & The total variation distance function \\ 
        $\alpha$ & The total variation distance between 2 policies \\
        $\xi^{i}$ & The off policy correction error of $\jntinterpi{i-1}$ \\
        $\mathcal{C}$ & The clipping parameter adaptation function \\
        $\mathcal{R}$ & The agent selection function \\
        \bottomrule
    \end{tabular}
    \label{tab:notation}
\end{table}

\subsection{Useful Lemmas}
\begin{lemma}
    \label{lemma:pdl_joint}
    (Multi-agent Policy Performance Difference Lemma). Given any joint policies $\jnttarpi$ and $\jntcurpi$, the difference between the performance of two joint policies can be expressed as:
    \[
        \J{\jnttarpi} - \J{\jntcurpi} =  \frac{1}{1-\gamma}\E_{(s, \jntact)\sim (d^{\jnttarpi}, \jnttarpi)}\left[A^{\jntcurpi}(s, \jntact)\right]~,
    \]
    where $d^{\curpi}=(1-\gamma)\sum_{t=0}^{\infty}\gamma^{t}Pr(s_{t}=s|\curpi)$ is the normalized discounted state visitation distribution.
\end{lemma}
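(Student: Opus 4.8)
The plan is to recognize that this statement is the classical Performance Difference Lemma of \citet{Kakade2002} lifted verbatim to the setting where the single policy and single action are replaced by a joint policy $\jntcurpi$ (resp.\ $\jnttarpi$) and a joint action $\jntact$. Since the joint action at each state is simply a larger composite action drawn from the product policy, the multi-agent structure plays no role in the argument, and I would work throughout with $(s,\jntact)$ as the atomic state-action pair. The strategy is to expand the right-hand side and show that it telescopes to the left-hand side.

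First I would unfold the definition $d^{\jnttarpi}(s)=(1-\gamma)\sum_{t=0}^{\infty}\gamma^{t}Pr(s_{t}=s\mid\jnttarpi)$, which absorbs the leading factor $\tfrac{1}{1-\gamma}$ and rewrites the state-visitation-weighted expectation as an expectation over whole trajectories sampled from $\jnttarpi$:
\[
\frac{1}{1-\gamma}\E_{(s,\jntact)\sim(d^{\jnttarpi},\jnttarpi)}\!\left[A^{\jntcurpi}(s,\jntact)\right]
=\E_{\tau\sim(\T,\jnttarpi)}\!\left[\sum_{t=0}^{\infty}\gamma^{t}A^{\jntcurpi}(s_{t},\jntact_{t})\right].
\]
Next I would substitute $A^{\jntcurpi}(s,\jntact)=r(s,\jntact)+\gamma\,\E_{\sp\sim\T(\cdot\mid s,\jntact)}[V^{\jntcurpi}(\sp)]-V^{\jntcurpi}(s)$. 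The key move is the tower property: under $\tau\sim(\T,\jnttarpi)$ the successor state $s_{t+1}$ is itself drawn from $\T(\cdot\mid s_{t},\jntact_{t})$, so $\E_{\tau}[\gamma V^{\jntcurpi}(s_{t+1})]=\E_{\tau}[\gamma\,\E_{\sp\sim\T(\cdot\mid s_{t},\jntact_{t})}[V^{\jntcurpi}(\sp)]]$, which lets me replace the one-step lookahead expectation by the actual next state along the trajectory. The reward terms then assemble into $\E_{\tau\sim(\T,\jnttarpi)}[\sum_{t}\gamma^{t}r(s_{t},\jntact_{t})]=\J{\jnttarpi}$, while the value terms form the telescoping sum $\sum_{t=0}^{\infty}(\gamma^{t+1}V^{\jntcurpi}(s_{t+1})-\gamma^{t}V^{\jntcurpi}(s_{t}))=-V^{\jntcurpi}(s_{0})$, whose expectation over the initial state equals $-\J{\jntcurpi}$. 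Combining the two pieces yields $\J{\jnttarpi}-\J{\jntcurpi}$.

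The main obstacle is purely technical rather than conceptual: I must justify the interchange of the infinite sum with the expectation (Fubini/dominated convergence) and the vanishing of the telescoping tail $\gamma^{t}V^{\jntcurpi}(s_{t})\to0$. Both follow from $\gamma\in[0,1)$ together with boundedness of the reward (hence of $V^{\jntcurpi}$), so the series converge absolutely and the rearrangement into a reward sum plus a telescoping value sum is legitimate. No multi-agent-specific reasoning is needed beyond treating $\jntact$ as a single composite action, so the identity is inherited directly from the single-agent case.
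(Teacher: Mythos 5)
Your proof is correct and takes essentially the same approach as the paper: the paper's entire proof is to observe that the statement is "a corollary of the Policy Performance Difference Lemma" (Lemma 1.16 in \citet{Agarwal2022}), i.e.\ precisely your reduction of treating the joint policy $\jnttarpi$ and joint action $\jntact$ as a single composite policy and action. The only difference is that you inline the standard telescoping proof of that classical lemma (including the Fubini/tail-convergence justifications), whereas the paper delegates it to the citation.
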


\begin{proof}
    A corollary of the Policy Performance Difference Lemma, see Lemma 1.16 in \citet{Agarwal2022}.
\end{proof}

For convenience, we give some properties and definitions of coupling\footnote{The definition of coupling and the properties can be found in any textbook containing Markov Chains.} and the definition of $\alpha$-coupled policy pair \citep{DBLP:conf/icml/SchulmanTRPO} here.

\begin{definition}{Coupling}
    A coupling of two probability distributions $\mu$ and $\nu$ is a pair of random variables $(X, Y)$ such that the marginal distribution of $X$ is $\mu$ and the marginal distribution of $Y$ is $\nu$. A coupling $(X, Y)$ satisfies the following constraints: $Pr(X = x) = \mu(x)$ and $Pr(Y = y) = \nu(y)$.
\end{definition}

\begin{preproposition}
    For any coupling $(X, Y)$ that $D_{TV}(\mu\|\nu) \leq Pr(X\neq Y)$.
\end{preproposition}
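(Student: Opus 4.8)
The plan is to prove this standard coupling inequality using the variational (event-based) characterization of total variation distance, namely $D_{TV}(\mu\|\nu) = \sup_{A}|\mu(A) - \nu(A)|$ with the supremum taken over all (measurable) events $A$. First I would fix an arbitrary event $A$ and invoke the defining constraints of the coupling, $Pr(X = x) = \mu(x)$ and $Pr(Y = y) = \nu(y)$, to rewrite the two masses as $\mu(A) = Pr(X \in A)$ and $\nu(A) = Pr(Y \in A)$. This is the only place where the coupling structure enters, and it reduces the whole claim to a comparison of two events on a common probability space.

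The key algebraic step is then to decompose each of these probabilities according to whether the companion variable also lands in $A$. Writing $Pr(X \in A) = Pr(X \in A, Y \in A) + Pr(X \in A, Y \notin A)$ and $Pr(Y \in A) = Pr(X \in A, Y \in A) + Pr(X \notin A, Y \in A)$, the common term $Pr(X \in A, Y \in A)$ cancels, yielding
\[
\mu(A) - \nu(A) = Pr(X \in A, Y \notin A) - Pr(X \notin A, Y \in A) \leq Pr(X \in A, Y \notin A).
\]
Since the event $\{X \in A,\, Y \notin A\}$ forces $X \neq Y$, its probability is at most $Pr(X \neq Y)$. Swapping the roles of $\mu$ and $\nu$ (equivalently, applying the same decomposition to the complement $A^{c}$) gives the identical bound for $\nu(A) - \mu(A)$, so that $|\mu(A) - \nu(A)| \leq Pr(X \neq Y)$ holds for \emph{every} event $A$.

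Finally I would take the supremum over all events $A$ on the left-hand side, which by the variational characterization equals $D_{TV}(\mu\|\nu)$, while the right-hand side $Pr(X \neq Y)$ is a single quantity independent of $A$; this directly gives $D_{TV}(\mu\|\nu) \leq Pr(X \neq Y)$ and completes the argument. There is no genuine obstacle here, as this is a textbook fact; the only points requiring care are the bookkeeping in the probability decomposition and ensuring the absolute value is dispatched by the symmetric argument. The one modelling choice worth flagging is that this route presupposes the sup-over-events form of $D_{TV}$: if one instead begins from the $\ell_{1}$ form $\tfrac{1}{2}\sum_{x}|\mu(x)-\nu(x)|$, the same conclusion follows once the (equally standard) equivalence of the two definitions is established.
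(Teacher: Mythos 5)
Your proof is correct, and it is the standard argument: decompose $Pr(X\in A)$ and $Pr(Y\in A)$ over the companion variable, cancel the common term, bound the difference by $Pr(X\in A,\,Y\notin A)\leq Pr(X\neq Y)$, symmetrize, and take the supremum over events. Note, however, that the paper does not prove this proposition at all: it states it as a known coupling property and defers to the literature (the footnote remarks that the definition of coupling and its properties ``can be found in any textbook containing Markov Chains''), so there is no in-paper proof to compare against; your argument simply supplies the omitted textbook derivation. The only point worth keeping in mind is the one you already flag: elsewhere (the proof of the corollary bounding $D_{TV}(\jntcurpi\|\jnttarpi)$ by the sum of per-agent distances) the paper works with the $\tfrac{1}{2}\sum_{x}|\mu(x)-\nu(x)|$ form of total variation, so your appeal to the sup-over-events characterization implicitly uses the standard equivalence of the two definitions, which is harmless but should be stated if one wants the argument to be fully self-contained in the paper's notation.
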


\begin{preproposition}\label{prop:0}
    There exists a coupling $(X, Y)$ that $D_{TV}(\mu\|\nu) = Pr(X\neq Y)$.
\end{preproposition}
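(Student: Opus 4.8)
The plan is to exhibit the \emph{maximal coupling} of $\mu$ and $\nu$, namely the explicit construction that forces $X$ and $Y$ to agree on exactly the overlapping mass of the two distributions. Combined with the preceding proposition, which already gives $D_{TV}(\mu\|\nu) \leq Pr(X\neq Y)$ for \emph{every} coupling, it suffices to produce a single coupling attaining equality; there is no optimization to perform, only a construction to verify.

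First I would introduce the overlap measure $p(x) = \min(\mu(x), \nu(x))$ and record its total mass. Since $\sum_x (\mu(x)-\nu(x)) = 0$, the positive and negative parts of $\mu - \nu$ carry equal mass, each equal to $\frac{1}{2}\sum_x |\mu(x)-\nu(x)| = D_{TV}(\mu\|\nu)$. Hence $\sum_x p(x) = 1 - D_{TV}(\mu\|\nu)$, and the two \emph{excess} measures $\mu - p$ and $\nu - p$ each have total mass $D_{TV}(\mu\|\nu)$ and are supported, respectively, on $\{x : \mu(x) > \nu(x)\}$ and $\{x : \nu(x) > \mu(x)\}$, which are disjoint.

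Next I would define $(X,Y)$ as a mixture. Writing $\delta = D_{TV}(\mu\|\nu)$ and assuming $\delta \in (0,1)$ (the case $\delta = 0$ gives $\mu = \nu$, where $X = Y$ drawn from $\mu$ works trivially), with probability $1-\delta$ I draw a single value $Z$ from the normalized overlap $p/(1-\delta)$ and set $X = Y = Z$; with probability $\delta$ I draw $X$ and $Y$ \emph{independently} from the normalized excess distributions $(\mu-p)/\delta$ and $(\nu-p)/\delta$. The marginal check is the direct computation $(1-\delta)\tfrac{p(x)}{1-\delta} + \delta\tfrac{\mu(x)-p(x)}{\delta} = \mu(x)$, and symmetrically $Y$ has marginal $\nu$, confirming this is a legitimate coupling.

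Finally I would read off the disagreement probability. On the first branch $X = Y$ by construction; on the second branch $X$ and $Y$ fall in the disjoint supports of the two excess measures, so $X \neq Y$ with probability one. Therefore $Pr(X \neq Y) = \delta = D_{TV}(\mu\|\nu)$, the desired equality. The only delicate point is the disjoint-support bookkeeping, which is exactly what rules out an accidental agreement on the second branch; everything else reduces to the marginal verification above.
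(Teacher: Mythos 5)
Your proof is correct. Note that the paper itself never proves this proposition: it is listed among "useful lemmas" as a known property of couplings, with a footnote deferring to "any textbook containing Markov Chains." What you have written out is precisely the standard argument those textbooks give — the maximal coupling. You split $\mu$ and $\nu$ into the overlap $p(x)=\min(\mu(x),\nu(x))$, whose total mass is $1-D_{TV}(\mu\|\nu)$, plus two excess measures $\mu-p$ and $\nu-p$ of mass $D_{TV}(\mu\|\nu)$ each, supported on the disjoint sets $\{\mu>\nu\}$ and $\{\nu>\mu\}$; the mixture coupling then agrees with probability $1-\delta$ and disagrees almost surely on the independent-excess branch, giving $Pr(X\neq Y)=\delta$ exactly. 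The marginal check and the disjoint-support bookkeeping are both right, so you have supplied the proof the paper omits rather than diverged from it. The only loose end is the boundary case $\delta=1$, which your restriction $\delta\in(0,1)$ silently excludes: there the overlap branch has probability zero, and independent draws from the disjointly supported $\mu$ and $\nu$ already give $Pr(X\neq Y)=1$, so the construction degenerates gracefully; it is worth one sentence to say so.
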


\begin{corollary}\label{cor:1}
    For all $s$, there exists a coupling ($\jntcurpi(\cdot|s), \jnttarpi(\cdot|s)$), that $Pr(\jntact = \jntactb) \geq 1 - D_{TV}^{max}(\jntcurpi\|\jnttarpi)$, for $\jntact \sim \jntcurpi(\cdot|s)$, $\jntactb \sim \jnttarpi(\cdot|s)$.
\end{corollary}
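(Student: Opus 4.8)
The plan is to apply the equality-achieving (maximal) coupling of \Cref{prop:0} state-by-state. First I would fix an arbitrary state $s \in \mathcal{S}$ and regard $\mu = \jntcurpi(\cdot|s)$ and $\nu = \jnttarpi(\cdot|s)$ as two fixed probability distributions over the joint action space $\mathcal{A}$. Since \Cref{prop:0} guarantees that for any pair of distributions there exists a coupling attaining equality in the total-variation bound, I would instantiate it at $(\mu,\nu)$ to obtain a coupling $(\jntact, \jntactb)$ with marginals $\jntact \sim \jntcurpi(\cdot|s)$ and $\jntactb \sim \jnttarpi(\cdot|s)$ satisfying
\[
    D_{TV}(\jntcurpi(\cdot|s)\,\|\,\jnttarpi(\cdot|s)) = Pr(\jntact \neq \jntactb).
\]

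The next step is a one-line rearrangement: taking complements gives
\[
    Pr(\jntact = \jntactb) = 1 - Pr(\jntact \neq \jntactb) = 1 - D_{TV}(\jntcurpi(\cdot|s)\,\|\,\jnttarpi(\cdot|s)).
\]
Finally I would bound the state-dependent total variation distance by its worst case over states. By the definition $D_{TV}^{\max}(\jntcurpi\|\jnttarpi) = \max_{s'} D_{TV}(\jntcurpi(\cdot|s')\,\|\,\jnttarpi(\cdot|s'))$ we have $D_{TV}(\jntcurpi(\cdot|s)\,\|\,\jnttarpi(\cdot|s)) \leq D_{TV}^{\max}(\jntcurpi\|\jnttarpi)$, and substituting into the previous display yields
\[
    Pr(\jntact = \jntactb) \geq 1 - D_{TV}^{\max}(\jntcurpi\|\jnttarpi),
\]
which is the claim. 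Since $s$ was arbitrary, such a coupling exists for every state.

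There is essentially no obstacle here: the entire content is carried by \Cref{prop:0}, whose proof (the maximal-coupling construction) is the genuinely non-trivial ingredient but is assumed available. The only points to be careful about are conceptual rather than computational. First, the equality-achieving coupling is instantiated separately for each fixed $s$, so the corollary really produces a \emph{family} of couplings indexed by the state rather than a single global coupling; the quantifier ``for all $s$'' should be read in that pointwise sense. Second, the inequality enters only at the very last step, when the pointwise total variation distance is relaxed to its maximum $D_{TV}^{\max}$; this is precisely why the conclusion is stated as $\geq$ and not as an equality, and it is the form needed downstream (e.g.\ in the performance-difference arguments) where a single state-independent bound on the disagreement probability is required.
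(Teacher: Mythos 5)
Your proposal is correct and matches the paper's own proof exactly: both instantiate the equality-achieving coupling of \Cref{prop:0} at each fixed state $s$, take complements to turn $Pr(\jntact \neq \jntactb) = D_{TV}(\jntcurpi(\cdot|s)\|\jnttarpi(\cdot|s))$ into a statement about $Pr(\jntact = \jntactb)$, and then relax the pointwise total variation distance to $D_{TV}^{\max}(\jntcurpi\|\jnttarpi)$. Your additional remarks (the coupling is a state-indexed family, and the inequality arises only from the final relaxation to the maximum) are accurate clarifications of the same argument.
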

\begin{proof}
    By \cref{prop:0} there exists a coupling ($\jntcurpi(\cdot|s), \jnttarpi(\cdot|s)$), s.t.
    \begin{equation*}
        1 - Pr(\jntact = \jntactb) = Pr(\jntact \ne \jntactb) = D_{TV}(\jntcurpi, \jnttarpi) \le D_{TV}^{max}(\jntcurpi \| \jnttarpi)
    \end{equation*}
\end{proof}
\begin{corollary}\label{cor:2}
    For all $s$, $D_{TV}(\jntcurpi(\cdot|s)\|\jnttarpi(\cdot|s)) \leq \sum_{i=1}^{n} D_{TV}(\curpi^{i}(\cdot|s)\|\tarpi^{i}(\cdot|s))$.
\end{corollary}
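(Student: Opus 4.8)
The plan is to use the coupling characterization of total variation distance together with the product structure of the joint policy. Fix a state $s$. Since the joint policy factorizes as $\jntcurpi(\cdot|s) = \curpi^{1}(\cdot|s)\times\cdots\times\curpi^{n}(\cdot|s)$ and likewise $\jnttarpi(\cdot|s) = \tarpi^{1}(\cdot|s)\times\cdots\times\tarpi^{n}(\cdot|s)$, I can build a coupling of the two joint distributions out of couplings of the per-agent marginals and then bound the disagreement probability by a union bound.

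First I would, for each agent $i$, invoke \Cref{prop:0} to obtain a coupling $(X^{i}, Y^{i})$ of $\curpi^{i}(\cdot|s)$ and $\tarpi^{i}(\cdot|s)$ that achieves equality, i.e. $Pr(X^{i}\neq Y^{i}) = D_{TV}(\curpi^{i}(\cdot|s)\|\tarpi^{i}(\cdot|s))$. Next I would take these $n$ couplings to be mutually independent and assemble the product coupling $(X, Y)$ with $X=(X^{1},\ldots,X^{n})$ and $Y=(Y^{1},\ldots,Y^{n})$. Because each agent's action is drawn independently under the joint policy, the marginal of $X$ is exactly $\jntcurpi(\cdot|s)$ and the marginal of $Y$ is exactly $\jnttarpi(\cdot|s)$, so $(X, Y)$ is a legitimate coupling of the two joint distributions.

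The event $\{X\neq Y\}$ is the union $\bigcup_{i=1}^{n}\{X^{i}\neq Y^{i}\}$, so a union bound gives $Pr(X\neq Y)\leq\sum_{i=1}^{n}Pr(X^{i}\neq Y^{i})=\sum_{i=1}^{n}D_{TV}(\curpi^{i}(\cdot|s)\|\tarpi^{i}(\cdot|s))$. Finally, applying the earlier proposition stating that every coupling upper-bounds the total variation distance, namely $D_{TV}(\jntcurpi(\cdot|s)\|\jnttarpi(\cdot|s))\leq Pr(X\neq Y)$, and chaining the two inequalities yields the claim.

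The only step requiring care is verifying that the independent product of the per-agent optimal couplings is itself a valid coupling of the joint policies; this is where the product form $\jntpi=\curpi^{1}\times\cdots\times\curpi^{n}$ is essential, since the marginals of the product coupling coincide with the desired product distributions precisely because the agents act independently conditioned on $s$. Everything else — the existence of optimal per-agent couplings, the union bound over disagreement events, and the coupling upper bound — is immediate from the propositions already established.
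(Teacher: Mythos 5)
Your proof is correct, but it takes a genuinely different route from the paper's. The paper proves \Cref{cor:2} by brute force on the definition of total variation: it writes $D_{TV}(\jntcurpi(\cdot|s)\|\jnttarpi(\cdot|s))$ as half the $\ell_1$-distance between the two product distributions, inserts the telescoping term $\curpi^{1}(a^{1})\prod_{i=2}^{n}\tarpi^{i}(a^{i})$, applies the triangle inequality to peel off one agent, and iterates, so each step trades one factor of $\curpi^{i}$ for $\tarpi^{i}$ at the cost of one per-agent $D_{TV}$ term. You instead use the coupling machinery the paper sets up just before the corollary: optimal per-agent couplings from \Cref{prop:0}, an independent product coupling (whose marginals are exactly $\jntcurpi(\cdot|s)$ and $\jnttarpi(\cdot|s)$ precisely because the joint policies factorize), a union bound over the per-agent disagreement events, and the proposition that any coupling's disagreement probability upper-bounds the total variation distance. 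Your key verification step is the right one to flag: independence of the per-agent couplings is what makes the product coupling legitimate. What each approach buys: the paper's argument is elementary and self-contained, needing nothing beyond the discrete definition of $D_{TV}$ and the triangle inequality; yours is shorter, conceptually cleaner, extends verbatim to non-discrete action spaces (no sum over joint actions is needed), and is more in the spirit of the surrounding text, since the paper immediately uses the same coupling propositions to establish $\alpha$-coupled policy pairs in \Cref{cor:1} and after \Cref{cor:2} — in effect you have unified the two corollaries under one technique.
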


\begin{proof}
    We denote $\pi(\cdot|s)$ as $\pi(\cdot)$ for brevity.
    \begin{align*}
             & D_{TV}(\jntcurpi(\cdot|s)\|\jnttarpi(\cdot|s))                                                                                                                                                                                                                                                        \\
        =    & \frac{1}{2}\sum_{a^{1},a^{2},\ldots,a^{n}}\left|\prod_{i=1}^{n}\pi^{i}(a^{i})-\prod_{i=1}^{n}\tarpi^{i}(a^{i})\right|                                                                                                                                                                                 \\
        =    & \frac{1}{2}\sum_{a^{1},a^{2},\ldots,a^{n}}\left|\prod_{i=1}^{n}\pi^{i}(a^{i}) - \pi^{1}(a^{1})\prod_{i=2}^{n}\tarpi^{i}(a^{i})+\pi^{1}(a^{1})\prod_{i=2}^{n}\tarpi^{i}(a^{i})-\prod_{i=1}^{n}\tarpi^{i}(a^{i})\right|                                                                                 \\
        \leq & \frac{1}{2} \sum_{a^{1}}\left|\pi^{1}(a^{1})\right|\sum_{a^{2},\ldots,a^{n}}\left|\prod_{i=2}^{n}\pi^{i}(a^{i})-\prod_{i=2}^{n}\tarpi^{i}(a^{i})\right| + \frac{1}{2} \sum_{a^{1}}\left|\pi^{1}(a^{1})-\tarpi^{1}(a^{1})\right|\sum_{a^{2},\ldots,a^{n}}\left|\prod_{i=2}^{n}\tarpi^{i}(a^{i})\right| \\
        =    & \frac{1}{2} \sum_{a^{2},\ldots,a^{n}}\left|\prod_{i=2}^{n}\pi^{i}(a^{i})-\prod_{i=2}^{n}\tarpi^{i}(a^{i})\right| + \frac{1}{2}\sum_{a^{1}}\left|\pi^{1}(a^{1})-\tarpi^{1}(a^{1})\right|                                                                                                               \\
             & \cdots                                                                                                                                                                                                                                                                                                \\
        \leq & \frac{1}{2} \sum_{i=1}^{n} \sum_{a^{i}}|\pi^{i}(a^{i})-\tarpi^{i}(a^{i})|                                                                                                                                                                                                                             \\
        =    & \sum_{i=1}^{n} D_{TV}(\curpi^{i}(\cdot|s)\|\tarpi^{i}(\cdot|s))
    \end{align*}
\end{proof}

\begin{definition}{$\alpha$-coupled policy pair}
    If $(\jntcurpi, \jnttarpi)$ is an $\alpha$-coupled policy pair, then $(\jntact, \jntactb|s)$ satisfies $Pr(\jntact\neq \jntactb|s) \leq \alpha$ for all $s$, and $\jntact \sim \jntcurpi(\cdot|s)$, $\jntactb \sim \jnttarpi(\cdot|s)$.
\end{definition}

From \Cref{cor:1,cor:2}, we know that given any joint policy pair $\jntcurpi$ and $\jnttarpi$, select $\alpha=D_{TV}^{max}(\jntcurpi(\cdot|s)\|\jnttarpi(\cdot|s))$, then ($\jntcurpi,\jnttarpi$) is an $\alpha$-coupled policy pair that for all $s$, $Pr(\jntact\neq \jntactb|s) \leq D_{TV}^{max}(\jntcurpi(\cdot|s)\|\jnttarpi(\cdot|s)) \leq \sum_{i=1}^{n}\alpha^{i}$, where $\alpha^{i}=D_{TV}^{max}(\curpi^{i}\|\tarpi^{i})$.

\begin{lemma}
    \label{lemma:adv_up_bound}
    Given any joint policies $\jnttarpi$ and $\jntcurpi$, the following inequality holds:
    \[
        |\E_{\jntact \sim \jnttarpi}\left[A^{\jntcurpi}(s,\jntact)\right]|
        \leq 2\epsilon \sum_{i=1}^{n} \alpha^{i}~,
    \]
    where $\alpha^{i}=D_{TV}^{max}(\tarpi^{i}\|\curpi^{i})$ and $\epsilon = \max_{s, \jntact} |A^{\jntcurpi}(s,\jntact)|$.
\end{lemma}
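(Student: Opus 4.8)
The plan is to reduce the one-sided expectation on the left-hand side to a \emph{difference} of expectations under the two joint policies and then to control that difference with the coupling machinery already established in the excerpt. The key starting observation is that the advantage function satisfies $\E_{\jntact \sim \jntcurpi}[A^{\jntcurpi}(s, \jntact)] = 0$ for every fixed state $s$: since $A^{\jntcurpi}(s,\jntact) = r(s,\jntact) + \gamma\,\E_{\sp \sim \T(\cdot|s,\jntact)}[V^{\jntcurpi}(\sp)] - V^{\jntcurpi}(s)$, taking the expectation of the first two terms under $\jntcurpi(\cdot|s)$ recovers exactly $V^{\jntcurpi}(s)$, which cancels the last term. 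Subtracting this zero quantity, and renaming the dummy drawn from $\jnttarpi$ to $\jntactb$ to match the coupling convention, I would write
\[
\E_{\jntactb \sim \jnttarpi}\big[A^{\jntcurpi}(s, \jntactb)\big]
= \E_{\jntactb \sim \jnttarpi}\big[A^{\jntcurpi}(s, \jntactb)\big] - \E_{\jntact \sim \jntcurpi}\big[A^{\jntcurpi}(s, \jntact)\big]~.
\]

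Next I would invoke the coupling results. By \Cref{prop:0} together with \Cref{cor:1} and \Cref{cor:2}, for each fixed $s$ there is a coupling $(\jntact, \jntactb)$ with $\jntact \sim \jntcurpi(\cdot|s)$ and $\jntactb \sim \jnttarpi(\cdot|s)$ such that $Pr(\jntact \neq \jntactb) \leq D_{TV}^{\max}(\jntcurpi \| \jnttarpi) \leq \sum_{i=1}^{n}\alpha^{i}$. Under this joint law the difference of marginal expectations collapses to a single expectation,
\[
\E_{\jntactb \sim \jnttarpi}\big[A^{\jntcurpi}(s, \jntactb)\big] - \E_{\jntact \sim \jntcurpi}\big[A^{\jntcurpi}(s, \jntact)\big]
= \E_{(\jntact, \jntactb)}\big[A^{\jntcurpi}(s, \jntactb) - A^{\jntcurpi}(s, \jntact)\big]~,
\]
whose integrand vanishes on the event $\{\jntact = \jntactb\}$.

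Finally I would bound the contribution on $\{\jntact \neq \jntactb\}$: because $|A^{\jntcurpi}| \leq \epsilon$ everywhere, the integrand is at most $2\epsilon$ in absolute value there, giving
\[
\Big|\E_{(\jntact, \jntactb)}\big[A^{\jntcurpi}(s, \jntactb) - A^{\jntcurpi}(s, \jntact)\big]\Big|
\leq 2\epsilon\, Pr(\jntact \neq \jntactb) \leq 2\epsilon \sum_{i=1}^{n}\alpha^{i}~,
\]
which is the claimed inequality and holds uniformly in $s$. This is the multi-agent analogue of the standard TRPO estimate, so most of it is routine; the one step I would treat carefully — and which I regard as the crux — is that the coupling probability must be expressed through the \emph{per-agent} total variation distances $\alpha^{i}$ rather than the joint distance. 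That conversion is exactly what \Cref{cor:2} supplies via the telescoping factorization of the product policies, and it is what makes the bound usable in the subsequent single-agent and joint monotonic-improvement theorems.
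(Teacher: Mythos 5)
Your proof is correct and follows essentially the same route as the paper's: subtract the zero-mean term $\E_{\jntact \sim \jntcurpi}[A^{\jntcurpi}(s,\jntact)]$, pass to a coupling whose disagreement probability is bounded by $\sum_{i=1}^{n}\alpha^{i}$ via the per-agent total-variation corollary, and bound the integrand by $2\epsilon$ on the disagreement event. If anything, your handling of the disagreement event (restricting the expectation with an indicator rather than writing it as a probability times an unconditional expectation) is slightly cleaner than the paper's own display.
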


\begin{proof}
    Note that $\E_{\jntact \sim \jntcurpi}[A^{\jntcurpi}(s,\jntact)] = 0$.
    We have
    \begin{align*}
        \left|\E_{\jntact \sim \jnttarpi}\left[A^{\jntcurpi}(s,\jntact)\right]\right| & = \left|\E_{\jntactb \sim \jnttarpi}\left[A^{\jntcurpi}(s,\jntactb)\right] - \E_{\jntact \sim \jntcurpi}\left[A^{\jntcurpi}(s,\jntact)\right] \right|            \\
                                                                                      & = \left|\E_{(\jntactb, \jntact)\sim (\jnttarpi, \jntcurpi)}\left[A^{\jntcurpi}(s,\jntactb) - A^{\jntcurpi}(s,\jntact)\right] \right|                             \\
                                                                                      & = \left| Pr(\jntactb \neq \jntact|s)\E_{(\jntactb, \jntact)\sim (\jnttarpi, \jntcurpi)}\left[A^{\jntcurpi}(s,\jntactb) - A^{\jntcurpi}(s,\jntact)\right] \right| \\
                                                                                      & \leq \sum_{i=1}^{n}\alpha^{i}\E_{(\jntactb, \jntact)\sim (\jnttarpi, \jntcurpi)}\left[\left|A^{\jntcurpi}(s,\jntactb) - A^{\jntcurpi}(s,\jntact)\right|\right]   \\
                                                                                      & \leq \sum_{i=1}^{n}\alpha^{i} \cdot 2 \max_{s, \jntact} |A^{\jntcurpi}(s,\jntact)|
    \end{align*}
\end{proof}

\begin{lemma}\label{lemma:genralized_adv_discrepancy_bound}
    (Multi-agent Advantage Discrepancy Lemma). Given any joint policies $\jntcurpi^{1}$, $\jntcurpi^{2}$ and $\jntcurpi^{3}$, the following inequality holds:
    \begin{align*}
             & \left|\mathbb{E}_{(s_{t}, \jntact_{t})\sim (Pr^{\jntcurpi^{2}}, \jntcurpi^{2})}\left[A^{\jntcurpi^{1}} \right] - \mathbb{E}_{(s_{t}, \jntactb_{t})\sim (Pr^{\jntcurpi^{3}}, \jntcurpi^{2})}\left[A^{\jntcurpi^{1}} \right]\right| \\
        \leq &
        4\epsilon^{\jntcurpi^{1}} \cdot D_{TV}^{max}(\jntcurpi^{1}\|\jntcurpi^{2}) \cdot (1 - {(1-D_{TV}^{max}(\jntcurpi^{2}\|\jntcurpi^{3}))}^{t})~,
    \end{align*}
    where $\epsilon^{\jntcurpi^{1}} = \max_{s,\jntact} \|A^{\jntcurpi^{1}}(s,\jntact)\|$ and we denote $A(s, \jntact)$ as $A$ for brevity.
\end{lemma}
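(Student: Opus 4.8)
The plan is to reduce the claim to two separate estimates: a bound on the advantage averaged over a single state, and a bound on how far apart the two time-$t$ state marginals can drift. Since both expectations average the same function $A^{\jntcurpi^{1}}$ against the same action policy $\jntcurpi^{2}$, only the state distributions differ. First I would set $f(s)=\E_{\jntact\sim\jntcurpi^{2}}[A^{\jntcurpi^{1}}(s,\jntact)]$ and rewrite the left-hand side as $\big|\sum_{s}\big(Pr^{\jntcurpi^{2}}(s_{t}=s)-Pr^{\jntcurpi^{3}}(s_{t}=s)\big)f(s)\big|$. Hölder's inequality then bounds this by $\max_{s}|f(s)|\cdot\|Pr^{\jntcurpi^{2}}(s_{t}=\cdot)-Pr^{\jntcurpi^{3}}(s_{t}=\cdot)\|_{1}$, and the $\ell_{1}$ distance equals $2D_{TV}$ of the two state marginals.

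For the first factor, I would run the same coupling argument as in \Cref{lemma:adv_up_bound}, but keep the disagreement probability exact rather than loosening it to a sum of per-agent distances. Using $\E_{\jntact\sim\jntcurpi^{1}}[A^{\jntcurpi^{1}}(s,\jntact)]=0$ together with the maximal coupling of $\jntcurpi^{1}(\cdot|s)$ and $\jntcurpi^{2}(\cdot|s)$ from \cref{prop:0}, the averaged advantage only picks up mass on the event that the coupled actions disagree, whose probability is $D_{TV}(\jntcurpi^{1}(\cdot|s)\|\jntcurpi^{2}(\cdot|s))\le D_{TV}^{\max}(\jntcurpi^{1}\|\jntcurpi^{2})$. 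This yields $\max_{s}|f(s)|\le 2\epsilon^{\jntcurpi^{1}}D_{TV}^{\max}(\jntcurpi^{1}\|\jntcurpi^{2})$.

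The heart of the argument is the second factor: I would show $D_{TV}\big(Pr^{\jntcurpi^{2}}(s_{t}=\cdot)\,\|\,Pr^{\jntcurpi^{3}}(s_{t}=\cdot)\big)\le 1-(1-\alpha)^{t}$, where $\alpha=D_{TV}^{\max}(\jntcurpi^{2}\|\jntcurpi^{3})$. To do this I would build a synchronous coupling of the two Markov chains on states induced by $\jntcurpi^{2}$ and $\jntcurpi^{3}$ (each composed with $\T$), starting from a common initial state. Whenever both chains occupy the same state $s$, I couple their actions maximally via \cref{prop:0}, so they agree with probability at least $1-\alpha$; when the actions agree the two chains transition through the identical kernel $\T(\cdot|s,\jntact)$ and can be kept equal, so they stay coupled with probability at least $1-\alpha$ at that step. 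Letting $C_{t}$ be the event that the chains coincide through time $t$, induction gives $Pr(C_{t})\ge(1-\alpha)^{t}$, hence $Pr(s_{t}^{(2)}\neq s_{t}^{(3)})\le 1-(1-\alpha)^{t}$, and the coupling inequality bounds the state TV by this quantity. Multiplying the two factors and using $\|\cdot\|_{1}=2D_{TV}$ reproduces exactly the $4\epsilon^{\jntcurpi^{1}}D_{TV}^{\max}(\jntcurpi^{1}\|\jntcurpi^{2})\big(1-(1-D_{TV}^{\max}(\jntcurpi^{2}\|\jntcurpi^{3}))^{t}\big)$ in the statement.

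I expect the coupling/induction step to be the main obstacle. A naive triangle-inequality plus data-processing argument (splitting the one-step evolution $Pr^{\jntcurpi^{2}}P^{2}$ versus $Pr^{\jntcurpi^{3}}P^{3}$ into a kernel-contraction term and a one-step policy-shift term) only delivers the looser linear bound $t\alpha$; extracting the exact geometric factor $1-(1-\alpha)^{t}$ requires the persistence-of-coupling argument, where the care lies in verifying that agreement of the coupled actions lets the next states be made equal and that the "stay coupled" probability is uniformly at least $1-\alpha$ regardless of the current shared state.
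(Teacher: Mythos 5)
Your proposal is correct and is essentially the paper's own argument: both rest on the same two ingredients, namely the per-state bound $|\E_{\jntact\sim\jntcurpi^{2}}[A^{\jntcurpi^{1}}(s,\jntact)]|\le 2\epsilon^{\jntcurpi^{1}}D_{TV}^{\max}(\jntcurpi^{1}\|\jntcurpi^{2})$ (zero-mean advantage plus maximal coupling of actions) and a synchronous coupling of the two state processes whose per-step agreement probability $1-D_{TV}^{\max}(\jntcurpi^{2}\|\jntcurpi^{3})$ compounds to the geometric factor $1-(1-D_{TV}^{\max}(\jntcurpi^{2}\|\jntcurpi^{3}))^{t}$. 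The only difference is organizational: you factor the left-hand side through H\"older and the total variation distance of the time-$t$ state marginals, while the paper conditions directly on the event $n_{t}=0$ that the coupled trajectories have not yet disagreed (whence the conditional expectations cancel) — these are the same estimate, with your explicit induction making rigorous the step the paper leaves terse.
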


\begin{proof}
    Let $n_{t}$ represent the times $\jntact \neq \jntactb$ ($\jntcurpi^{1}$ disagrees with $\jntcurpi^{3}$) before timestamp $t$.
    \begin{align*}
                            & \left|\mathbb{E}_{(s_{t}, \jntact_{t})\sim (Pr^{\jntcurpi^{2}}, \jntcurpi^{2})}\left[A^{\jntcurpi^{1}} \right] - \mathbb{E}_{(s_{t}, \jntactb_{t})\sim (Pr^{\jntcurpi^{3}}, \jntcurpi^{2})}\left[A^{\jntcurpi^{1}} \right]\right|                                         \\
        =                   & Pr(n_{t} > 0) \cdot \left|\mathbb{E}_{(s_{t}, \jntact_{t})\sim (Pr^{\jntcurpi^{2}}, \jntcurpi^{2})|n_{t} > 0}\left[A^{\jntcurpi^{1}} \right] - \mathbb{E}_{(s_{t}, \jntactb_{t})\sim (Pr^{\jntcurpi^{3}}, \jntcurpi^{2})|n_{t} > 0}\left[A^{\jntcurpi^{1}} \right]\right| \\
        \overset{(a)}{=}    & (1-Pr(n_{t}=0)) \cdot E                                                                                                                                                                                                                                                   \\
        \leq                & (1 - \prod_{k=1}^{t}Pr(\jntact_{k}=\jntactb_{k}|\jntact_{k}\sim \jntcurpi^{2}(\cdot|s_k), \jntactb_{k}\sim \jntcurpi^{3}(\cdot|s_k))) \cdot E                                                                                                                             \\
        \overset{(b)}{\leq} & (1 - \prod_{k=1}^{t}(1-D_{TV}^{max}(\jntcurpi^{2}\|\jntcurpi^{3}))) \cdot E                                                                                                                                                                                               \\
        =                   & (1 - {(1-D_{TV}^{max}(\jntcurpi^{2}\|\jntcurpi^{3}))}^{t}) \cdot E                                                                                                                                                                                                        \\
        \leq                & (1 - {(1-D_{TV}^{max}(\jntcurpi^{2}\|\jntcurpi^{3}))}^{t}) \cdot 2 \cdot 2 \cdot D_{TV}^{max}(\jntcurpi^{1}\|\jntcurpi^{2}) \cdot \epsilon^{\jntcurpi^{1}}                                                                                                                \\
        =                   & 4\epsilon^{\jntcurpi^{1}} \cdot D_{TV}^{max}(\jntcurpi^{1}\|\jntcurpi^{2}) \cdot (1 - {(1-D_{TV}^{max}(\jntcurpi^{2}\|\jntcurpi^{3}))}^{t})
    \end{align*}
    In (a), we denote $|\mathbb{E}_{(s_{t}, \jntact_{t})\sim (Pr^{\jntcurpi^{2}}, \jntcurpi^{2})|n_{t} > 0}[A^{\jntcurpi^{1}} ] - \mathbb{E}_{(s_{t}, \jntactb_{t})\sim (Pr^{\jntcurpi^{3}}, \jntcurpi^{2})|n_{t} > 0}[A^{\jntcurpi^{1}} ]|$ as $E$ for brevity. (b) follows the definition of $\alpha$-coupled policy pair.
\end{proof}

We provide a useful equation of the normalized discounted state visitation distribution here.
\begin{preproposition}
    \label{prop:d_pr}
    \begin{align*}
        \E_{(s, \jntact) \sim (d^{\jntcurpi^{1}}, \jntcurpi^{2})}\left[f(s, \jntact)\right]
         & = (1-\gamma)\sum_{s}\sum_{t=0}^{\infty}\gamma^{t}Pr(s_{t}=s|\jntcurpi^{1})\sum_{\jntact}\jntcurpi^{2}(\jntact|s)f(s,\jntact)       \\
         & = (1-\gamma) \sum_{t=0}^{\infty} \gamma^{t} \sum_{s}Pr(s_{t}=s|\jntcurpi^{1})\sum_{\jntact}\jntcurpi^{2}(\jntact|s)f(s,\jntact)    \\
         & = (1-\gamma) \sum_{t=0}^{\infty} \gamma^{t} \E_{(s_{t}, \jntact_{t}) \sim (Pr^{\jntcurpi^{1}}, \jntcurpi^2)}[f(s_{t},\jntact_{t})]
    \end{align*}
\end{preproposition}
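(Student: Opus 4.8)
The plan is to unfold both sides into explicit sums and match them term by term; the only nontrivial step is an interchange in the order of summation. First I would expand the left-hand side using the definition of expectation under the product measure $(d^{\jntcurpi^{1}}, \jntcurpi^{2})$: since the state is drawn from $d^{\jntcurpi^{1}}$ and, conditioned on $s$, the joint action is drawn from $\jntcurpi^{2}(\cdot|s)$, we have $\E_{(s,\jntact)\sim(d^{\jntcurpi^{1}},\jntcurpi^{2})}[f(s,\jntact)] = \sum_{s} d^{\jntcurpi^{1}}(s)\sum_{\jntact}\jntcurpi^{2}(\jntact|s)f(s,\jntact)$.

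Next I would substitute the definition of the normalized discounted state visitation distribution, $d^{\jntcurpi^{1}}(s) = (1-\gamma)\sum_{t=0}^{\infty}\gamma^{t}Pr(s_{t}=s|\jntcurpi^{1})$, and pull the constant $(1-\gamma)$ to the front. This substitution immediately yields the first displayed equality, with the nested sums $\sum_{s}\sum_{t=0}^{\infty}$ appearing in that order.

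The second equality is then just Fubini/Tonelli: I would swap $\sum_{s}$ with $\sum_{t=0}^{\infty}$. To justify this I would note that, with $f$ bounded (say $|f|\leq M$, which holds for the advantage and value quantities used throughout the paper), the double series is absolutely convergent, being dominated by $M\sum_{t=0}^{\infty}\gamma^{t} = M/(1-\gamma) < \infty$ since $\gamma\in[0,1)$; hence the rearrangement is legitimate. Finally, for each fixed $t$ I would recognize the inner double sum $\sum_{s}Pr(s_{t}=s|\jntcurpi^{1})\sum_{\jntact}\jntcurpi^{2}(\jntact|s)f(s,\jntact)$ as exactly the expectation $\E_{(s_{t},\jntact_{t})\sim(Pr^{\jntcurpi^{1}},\jntcurpi^{2})}[f(s_{t},\jntact_{t})]$, which gives the third equality and completes the argument. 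The only genuine obstacle is the summation interchange, and that is routine here thanks to the boundedness of $f$ together with the geometric discount factor.
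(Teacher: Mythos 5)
Your proposal is correct and follows essentially the same route as the paper, which simply states this chain of equalities as the proof itself: unfold the expectation, substitute the definition of $d^{\jntcurpi^{1}}$, interchange the sums over $s$ and $t$, and recognize the per-timestep expectation. The only addition is your explicit Fubini/Tonelli justification for the interchange (boundedness of $f$ plus the geometric factor $\gamma^{t}$), which the paper leaves implicit but is indeed the right condition to cite.
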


\subsection{Proofs of Intuitive Sequential Update}
\label{app:proof_ini}

\begin{align*}
    & \left|\J{\jntinterpi{i}}-\J{\jntinterpi{i-1}}-\frac{1}{1-\gamma}\E_{(s,\jntact)\sim (d^{\jntcurpi}, \jntinterpi{i})}\left[A^{\jntcurpi}\right]\right|                                                                                                             \\
\leq   & \frac{1}{1-\gamma}\left| \E_{(s,\jntact)\sim (d^{\jntinterpi{i}}, \jntinterpi{i})}\left[A^{\jntinterpi{i-1}}\right] -\E_{(s,\jntact)\sim (d^{\jntcurpi}, \jntinterpi{i})}\left[A^{\jntcurpi}\right] \right|                                                     \\
\leq    & \frac{1}{1-\gamma}\left| \E_{(s,\jntact)\sim (d^{\jntinterpi{i}}, \jntinterpi{i})}\left[A^{\jntinterpi{i-1}}\right] - \E_{(s,\jntact)\sim (d^{\jntcurpi}, \jntinterpi{i})}\left[A^{\jntinterpi{i-1}}\right]\right|                                                                  \\
    & + \frac{1}{1-\gamma}\left|\E_{(s,\jntact)\sim (d^{\jntcurpi}, \jntinterpi{i})}\left[A^{\jntinterpi{i-1}}\right] -\E_{(s,\jntact)\sim (d^{\jntcurpi}, \jntinterpi{i})}\left[A^{\jntcurpi}\right] \right|                                                       \\
\leq & 4 \epsilon^{\jntinterpi{i-1}} \alpha^{i}\sum_{t=0}^{\infty} \gamma^{t} (1-(1-\sum_{j \in (e^{i} \cup \{i\})}\alpha^{j})^{t})  \\
& + \frac{1}{1-\gamma}\E_{(s,\jntact)\sim (d^{\jntcurpi}, \jntinterpi{i})}\left[\left|A^{\jntinterpi{i-1}}-A^{\jntcurpi}\right|\right] \\
\leq & 4 \epsilon^{\jntinterpi{i-1}} \alpha^{i} (\frac{1}{1-\gamma} - \frac{1}{1-\gamma(1-\sum_{j \in (e^{i} \cup \{i\})}\alpha^{j})})   + \frac{1}{1-\gamma} \left[4\alpha^{i}\epsilon^{\jntinterpi{i-1}}+2\sum_{j \in e^{i}}\alpha^{j}\epsilon^{\jntcurpi}\right] \\
\end{align*}

\subsection{Proofs of Monotonic Policy Improvement of A2PO}
\label{app:mono_a2po}

\single*

\begin{proof}
    Using \Cref{lemma:genralized_adv_discrepancy_bound} and \Cref{prop:d_pr}, we get
    \begin{align*}
             & \left|\J{\jntinterpi{i}}-\J{\jntinterpi{i-1}}-\frac{1}{1-\gamma}\E_{(s,\jntact)\sim (d^{\jntcurpi}, \jntinterpi{i})}\left[A^{\jntcurpi, \jntinterpi{i-1}}\right]\right|                                                                                                               \\
        =    & \frac{1}{1-\gamma}\left| \E_{(s,\jntact)\sim (d^{\jntinterpi{i}}, \jntinterpi{i})}\left[A^{\jntinterpi{i-1}}\right] -\E_{(s,\jntact)\sim (d^{\jntcurpi}, \jntinterpi{i})}\left[A^{\jntcurpi, \jntinterpi{i-1}}\right] \right|                                                         \\
        \leq & \frac{1}{1-\gamma}\left| \E_{(s,\jntact)\sim (d^{\jntinterpi{i}}, \jntinterpi{i})}\left[A^{\jntinterpi{i-1}}\right] - \E_{(s,\jntact)\sim (d^{\jntcurpi}, \jntinterpi{i})}\left[A^{\jntinterpi{i-1}}\right]\right|                                                                    \\
             & + \frac{1}{1-\gamma}\left|\E_{(s,\jntact)\sim (d^{\jntcurpi}, \jntinterpi{i})}\left[A^{\jntinterpi{i-1}}\right] -\E_{(s,\jntact)\sim (d^{\jntcurpi}, \jntinterpi{i})}\left[A^{\jntcurpi, \jntinterpi{i-1}}\right] \right|                                                             \\
        \leq & 4 \epsilon^{\jntinterpi{i-1}} \alpha^{i}\sum_{t=0}^{\infty} \gamma^{t} (1-(1-\sum_{j \in (e^{i} \cup \{i\})}\alpha^{j})^{t})   + \frac{1}{1-\gamma}\E_{(s,\jntact)\sim (d^{\jntcurpi}, \jntinterpi{i})}\left[\left|A^{\jntinterpi{i-1}}-A^{\jntcurpi, \jntinterpi{i-1}}\right|\right] \\
        \leq & 4 \epsilon^{\jntinterpi{i-1}} \alpha^{i} (\frac{1}{1-\gamma} - \frac{1}{1-\gamma(1-\sum_{j \in (e^{i} \cup \{i\})}\alpha^{j})})   + \frac{1}{1-\gamma} \xi^{i}                                                                                                                        \\
    \end{align*}
\end{proof}

\joint*

\begin{proof}
    \begin{align*}
             & \left|
        \J{\jnttarpi}-\G
        \right| \nonumber
        \\
        =    & \left|
        \J{\jnttarpi}-\J{\jntcurpi}-\sum_{i=1}^{n}\E_{(s, \jntact)\sim (d^{\jntcurpi}, \jntinterpi{i})}
        \left[A^{\jntcurpi, \jntinterpi{i-1}}(s,\jntact)\right] \right|
        \\
        =    & \left|
        \J{\jntinterpi{n}} - \J{\jntinterpi{n-1}} + \cdots + \J{\jntinterpi{1}} - \J{\jntinterpi{0}} - \frac{1}{1-\gamma} \sum_{i=1}^{n}\E_{(s, \jntact)\sim (d^{\jntcurpi}, \jntinterpi{i})}
        \left[A^{\jntcurpi, \jntinterpi{i-1}}(s,\jntact)\right]
        \right| \nonumber                                                                                                                                                                                 \\
        \leq & \sum_{i=1}^{n} \left| \J{\jntinterpi{i}} - \J{\jntinterpi{i-1}} - \frac{1}{1-\gamma}\E_{(s, \jntact)\sim (d^{\jntcurpi}, \jntinterpi{i})}
        \left[A^{\jntcurpi, \jntinterpi{i-1}}(s,\jntact)\right]  \right| \nonumber                                                                                                                        \\
        \leq & 4 \epsilon \sum_{i=1}^{n} \alpha^{i} \left(\frac{1}{1-\gamma} - \frac{1}{1-\gamma(1-\sum_{j \in (e^{i} \cup \{i\})}\alpha^{j})}\right) +  \frac{\sum_{i=1}^{n}\xi^{i}}{1-\gamma} \nonumber \\
        \leq & \frac{4 \gamma \epsilon}{(1-\gamma)^{2}} \sum_{i=1}^{n} \left(\alpha^{i}\sum_{j \in (e^{i} \cup \{i\})}\alpha^{j} \right) +  \frac{\sum_{i=1}^{n}\xi^{i}}{1-\gamma}~.
    \end{align*}
\end{proof}

\subsection{Proofs of Incrementally Tightened Bound of A2PO}
\label{app:incre}

Assume agent $k$ is updated with order $k$ in the sequence $1, \ldots, n$, since $\jntinterpi{k-1}$ is known, we have

\begin{align*}
     &  \left|
    \J{\jnttarpi}-\G
   \right| \nonumber \\
\leq & \sum_{i=1}^{k-1} \left| \J{\jntinterpi{i}} - \Lo{\jntinterpi{i-1}}{\jntinterpi{i}} \right| + 4 \epsilon \sum_{i=k}^{n}\alpha^{i} \left(\frac{1}{1-\gamma} - \frac{1}{1-\gamma(1-\sum_{j \in (e^{i} \cup \{i\})}\alpha^{j})}\right)+\frac{\sum_{i=k}^{n}\xi^{i}}{1-\gamma} \\
\leq & \sum_{i=1}^{k-2} \left| \J{\jntinterpi{i}} - \Lo{\jntinterpi{i-1}}{\jntinterpi{i}} \right| + 4 \epsilon \sum_{i=k-1}^{n} \alpha^{i} \left(\frac{1}{1-\gamma} - \frac{1}{1-\gamma(1-\sum_{j \in (e^{i} \cup \{i\})}\alpha^{j})}\right) +  \frac{\sum_{i=k-1}^{n}\xi^{i}}{1-\gamma} \\
& \vdots \\
\leq &  4 \epsilon \sum_{i=1}^{n} \alpha^{i} \left(\frac{1}{1-\gamma} - \frac{1}{1-\gamma(1-\sum_{j \in (e^{i} \cup \{i\})}\alpha^{j})}\right) +  \frac{\sum_{i=1}^{n}\xi^{i}}{1-\gamma} 
\end{align*}

Thus the condition for improving $\J{\jnttarpi}$ is relaxed during updating agents at a stage.

\subsection{Proofs of Monotonic Policy Improvement of MAPPO, CoPPO and HAPPO}
\label{app:mono_baselines}

In this section, we give proof of the monotonic policy improvement of MAPPO, and unify the formats of the monotonic bounds of CoPPO and HAPPO, without considering the parameter-sharing method.

\textbf{MAPPO}. For MAPPO, $\Lo{\jntcurpi}{\jnttarpi} = \sum_{i=1}^{n}\J{\jntcurpi} +
    \frac{1}{1-\gamma}\left[\E_{(s,\jntact)\sim (d^{\jntcurpi}, \jntcurpi)}\left[\frac{\tarpi^{i}}{\curpi^{i}}A^{\jntcurpi}\right]\right]$. We first prove that for agent $i$, $\J{\jnttarpi}-\J{\jntcurpi}-\frac{1}{1-\gamma}\left[\E_{(s,\jntact)\sim (d^{\jntcurpi}, \jntcurpi)}\left[\frac{\tarpi^{i}}{\curpi^{i}}A^{\jntcurpi}\right]\right]$ is bounded.

\begin{align*}
         & \left|\J{\jnttarpi}-\J{\jntcurpi}-\frac{1}{1-\gamma}\left[\E_{(s,\jntact)\sim (d^{\jntcurpi}, \jntcurpi)}\left[\frac{\tarpi^{i}}{\curpi^{i}}A^{\jntcurpi}\right]\right]\right|                                                      \\
    =    & \frac{1}{1-\gamma} \left|\E_{(s,\jntact)\sim(d^{\jnttarpi}, \jnttarpi)}\left[A^{\jntcurpi}\right]-\E_{(s,\jntact)\sim(d^{\jntcurpi}, \jntcurpi)}\left[\frac{\tarpi^{i}}{\curpi^{i}}A^{\jntcurpi}\right]\right|                      \\
    =    & \sum_{t=0}^{\infty}\gamma^{t} \left| \E_{(s_{t},\jntact_{t})\sim(Pr^{\jnttarpi}, \jnttarpi)} A^{\jntcurpi} - \E_{(s_{t},\jntact_{t})\sim(Pr^{\jntcurpi}, \jntcurpi)}\left[\frac{\tarpi^{i}}{\curpi^{i}}A^{\jntcurpi}\right] \right| \\
    \leq & \sum_{t=0}^{\infty} 2\gamma^{t} \left(\left(\sum_{j=1}^{n}\alpha^{j}\right)\cdot \epsilon^{\jntcurpi}+\alpha^{i} \cdot\epsilon^{\jntcurpi} \right)                                                                                  \\
    =    & \frac{2\epsilon^{\jntcurpi}}{1-\gamma}\left(\alpha^{i} + \sum_{j=1}^{n}\alpha^{j} \right)
\end{align*}

Sum the bounds for all agents and take the average, we get

\begin{align*}
         & \left|\J{\jnttarpi}-\J{\jntcurpi}-\frac{1}{n}\frac{1}{1-\gamma}\sum_{i=1}^{n}\left[\E_{(s,\jntact)\sim (d^{\jntcurpi}, \jntcurpi)}\left[\frac{\tarpi^{i}}{\curpi^{i}}A^{\jntcurpi}\right]\right]\right| \\
    \leq & \frac{2\epsilon^{\jntcurpi}}{1-\gamma} \frac{n+1}{n} \sum_{j=1}^{n}\alpha^{j}
\end{align*}

Finally, the monotonic bound for MAPPO is

\begin{align*}
         & \left|\J{\jnttarpi}-\J{\jntcurpi}-\frac{1}{1-\gamma}\sum_{i=1}^{n}\left[\E_{(s,\jntact)\sim (d^{\jntcurpi}, \jntcurpi)}\left[\frac{\tarpi^{i}}{\curpi^{i}}A^{\jntcurpi}\right]\right]\right|            \\
    \leq & \left|\J{\jnttarpi}-\J{\jntcurpi}-\frac{1}{n}\frac{1}{1-\gamma}\sum_{i=1}^{n}\left[\E_{(s,\jntact)\sim (d^{\jntcurpi}, \jntcurpi)}\left[\frac{\tarpi^{i}}{\curpi^{i}}A^{\jntcurpi}\right]\right]\right| \\
         & + \frac{n-1}{n}\frac{1}{1-\gamma} \left|\sum_{i=1}^{n}\left[\E_{(s,\jntact)\sim (d^{\jntcurpi}, \jntcurpi)}\left[\frac{\tarpi^{i}}{\curpi^{i}}A^{\jntcurpi}\right]\right]\right|                        \\
    \leq & \frac{2\epsilon^{\jntcurpi}}{1-\gamma} \frac{n+1}{n} \sum_{j=1}^{n}\alpha^{j} + \frac{n-1}{n}\sum_{i=1}^{n} \frac{1}{1-\gamma}\alpha^{i} \cdot 2\epsilon^{\jntcurpi}                                    \\
    =    & \frac{4\epsilon^{\jntcurpi}}{1-\gamma} \sum_{i=1}^{n}\alpha^{i}
\end{align*}

\textbf{CoPPO}. We prove the results of CoPPO in a unified and convenient form. For CoPPO, $\Lo{\jntcurpi}{\jnttarpi}=\J{\jntcurpi}+\frac{1}{1-\gamma}\E_{(s, \jntact)\sim (d^{\jntcurpi}, \jnttarpi)}[A^{\jntcurpi}(s,\jntact)]$, we prove the bound using \Cref{lemma:genralized_adv_discrepancy_bound}.

\begin{align*}
         & \left|\J{\jnttarpi} - \J{\jntcurpi} - \frac{1}{1-\gamma}\E_{(s, \jntact)\sim (d^{\jntcurpi}, \jnttarpi)}[A^{\jntcurpi}] \right|                                                                          \\
    \leq & \frac{1}{1-\gamma} \left| \E_{(s, \jntact) \sim (d^{\jnttarpi}, \jnttarpi)}[A^{\jntcurpi}] - \E_{(s, \jntact)\sim (d^{\jntcurpi}, \jnttarpi)}[A^{\jntcurpi}] \right|                                     \\
    \leq & \sum_{t=0}^{\infty} \gamma^{t} \left| \E_{(s, \jntact) \sim (Pr^{\jnttarpi}, \jnttarpi)}\left[A^{\jntcurpi}\right] -\E_{(s, \jntact) \sim (Pr^{\jntcurpi}, \jnttarpi)}\left[A^{\jntcurpi}\right] \right| \\
    \leq & 4\epsilon^{\jntcurpi} \sum_{t=0}^{\infty} \gamma^{t} \sum_{i=1}^{n}\alpha^{i} \left(1-(1-D_{TV}^{max}(\jntcurpi\|\jnttarpi))^{t}\right)                                                                  \\
    \leq & 4\epsilon^{\jntcurpi} \sum_{i=1}^{n}\alpha^{i} \left(\frac{1}{1-\gamma}-\frac{1}{1-\gamma(1-\sum_{j=1}^{n}\alpha^{j})}\right)
\end{align*}

\textbf{HAPPO}. Following the proof of Lemma 2 in \citet{kuba2022trust}, we know that HAPPO has the same monotonic improvement bound as that of CoPPO. For the monotonic improvement of a single agent, we formulate the surrogate objective of agent $i$ using HAPPO as $\J{\jntinterpi{i-1}}+\frac{1}{1-\gamma}\E_{(s,\jntact) \sim (d^{\jntcurpi}, \jntinterpi{i})}[A^{\jntcurpi}(s,\jntact)] - \frac{1}{1-\gamma}\E_{(s,\jntact) \sim (d^{\jntcurpi}, \jntinterpi{i-1})}[A^{\jntcurpi}(s,\jntact)]$, as shown in Proposition 3 of \citet{kuba2022trust}. Following the proof of \Cref{thm:single_agent_bound}, we get the following inequality.


\begin{align*}
    & \left|\J{\jntinterpi{i}}-\J{\jntinterpi{i-1}}-\frac{1}{1-\gamma}\E_{(s,\jntact)\sim (d^{\jntcurpi}, \jntinterpi{i})}\left[A^{\jntcurpi}\right] + \frac{1}{1-\gamma}\E_{(s,\jntact) \sim (d^{\jntcurpi}, \jntinterpi{i-1})}[A^{\jntcurpi}(s,\jntact)]\right|                                                                                                             \\
\leq   & \frac{1}{1-\gamma}\left| \E_{(s,\jntact)\sim (d^{\jntinterpi{i}}, \jntinterpi{i})}\left[A^{\jntinterpi{i-1}}\right] -\E_{(s,\jntact)\sim (d^{\jntcurpi}, \jntinterpi{i})}\left[A^{\jntcurpi}\right] \right| + \frac{1}{1-\gamma}\left|\E_{(s,\jntact) \sim (d^{\jntcurpi}, \jntinterpi{i-1})}[A^{\jntcurpi}(s,\jntact)]\right|                                                       \\
\leq    & \frac{1}{1-\gamma}\left| \E_{(s,\jntact)\sim (d^{\jntinterpi{i}}, \jntinterpi{i})}\left[A^{\jntinterpi{i-1}}\right] - \frac{1}{1-\gamma}\E_{(s,\jntact)\sim (d^{\jntcurpi}, \jntinterpi{i})}\left[A^{\jntinterpi{i-1}}\right]\right|                                                                  \\
    & + \frac{1}{1-\gamma}\left|\E_{(s,\jntact)\sim (d^{\jntcurpi}, \jntinterpi{i})}\left[A^{\jntinterpi{i-1}}\right] -\E_{(s,\jntact)\sim (d^{\jntcurpi}, \jntinterpi{i})}\left[A^{\jntcurpi}\right] \right| + 2\frac{1}{1-\gamma}\sum_{j\in e^{i}}\alpha^{j} e^{\jntcurpi}                                                          \\
\leq & 4 \epsilon^{\jntinterpi{i-1}} \alpha^{i}\sum_{t=0}^{\infty} \gamma^{t} (1-(1-\sum_{j \in (e^{i} \cup \{i\})}\alpha^{j})^{t})  \\
& + \frac{1}{1-\gamma}\E_{(s,\jntact)\sim (d^{\jntcurpi}, \jntinterpi{i})}\left[\left|A^{\jntinterpi{i-1}}-A^{\jntcurpi}\right|\right] + 2\frac{1}{1-\gamma}\sum_{j\in e^{i}}\alpha^{j} e^{\jntcurpi} \\
\leq & 4 \epsilon^{\jntinterpi{i-1}} \alpha^{i} (\frac{1}{1-\gamma} - \frac{1}{1-\gamma(1-\sum_{j \in (e^{i} \cup \{i\})}\alpha^{j})})   + \frac{1}{1-\gamma} \left[4\alpha^{i}\epsilon^{\jntinterpi{i-1}}+4\sum_{j \in e^{i}}\alpha^{j}\epsilon^{\jntcurpi}\right] \\
\end{align*}

The right side of the last inequality is not a monotonic improvement bound, or it does not provide a guarantee for improving the expected performance $\J{\jntinterpi{i}}$ since the term $\sum_{j\in e^{i}}\alpha^{j}\epsilon^{\jntcurpi}$ is not controllable for agent $i$, whether through policy improvement or value learning. The uncontrollable term means the expected performance may not be improved even if the total variation distances of consecutive policies are well constrained.

\subsection{Comparisons on Monotonic Improvement Bounds}

CoPPO and HAPPO have the same monotonic bound that is tighter than that of MAPPO. A2PO achieves the tightest monotonic bound given mild assumptions about the errors of \trace{}, which is valid and easy to achieve since \trace{} is a contraction operator.
A sufficient condition that A2PO has the tightest bound is that $\xi^{i} < \frac{\gamma(1-\gamma)\sum_{j \in \mathcal{N} - e^{i} - \{i\}} \alpha^{j}}{(1-\gamma(1-\sum_{j \in e^{i}\cup \{i\}}\alpha^{j}))(1-\gamma(1-\sum_{j=1}^{n}\alpha^{j}))}$, for all $i \in \mathcal{N}$.

\subsection{Preceding-agent Off-policy Correction}
\label{app:preopc}

In Retrace($\lambda$) \citep{Munos2016}, consider the current policy as $\jntinterpi{i=1}$ and base policy as $\jntcurpi$, we have the following definition:

\begin{equation*}
    \mathcal{R}_{t} = r_{t} + \gamma Q_{t+1} + \sum_{k\geq 1}\gamma^{k}
    \big(
    \prod_{j=1}^{k}\lambda\min\big(1.0, \frac{\jntinterpi{i-1}(\jntact_{t+j}|s_{t+j})}{\jntcurpi(\jntact_{t+j}|s_{t+j})}\big)
    \big)
    (r_{t+k}+\gamma Q_{t+k+1} - Q_{t+k})~,
\end{equation*}

Following that same structure, we have: 

\begin{equation*}
    \mathcal{R}_{t} = r_{t} + \gamma V_{t+1} + \sum_{k\geq 1}\gamma^{k}
    \big(
    \prod_{j=1}^{k}\lambda\min\big(1.0, \frac{\jntinterpi{i-1}(\jntact_{t+j}|s_{t+j})}{\jntcurpi(\jntact_{t+j}|s_{t+j})}\big)
    \big)
    (r_{t+k}+\gamma V_{t+k+1} - V_{t+k})~,
\end{equation*}

By subtracting $V_{t}$, we get the definition of PreOPC. Or one can get $\gamma A^{\jntcurpi, \jntinterpi{i-1}}$ by substituting $r_{t}+\gamma V_{t+1}$ for $Q_{t}$ and subtracting $r_{t}+\gamma V_{t+1}$.

\subsection{Why Off-policyness is More Serious in Sequential Update Scheme?}

As shown in \Cref{fig:app_more_abl_trace}, the off policy correction in sequential update algorithms improves the performance significantly while similar performance gaps are not observed when used in simultaneous update algorithms. We attribute the difference to the influence of the clipping mechanism on the total variation distance.

From \Cref{cor:2}, $D_{TV}(\jntcurpi\|\jnttarpi) < \sum_{i=1}^{n} D_{TV}(\curpi^{i}\|\tarpi^{i})$. Although we can not prove exact relations, clipping the agents independently tends to larger total variation distances between the current and future policies of the agents, leading to more `off-policyness' in sequential update algorithms.
\newpage
\section{Experimental Details}
\label{app:exp}

\subsection{Implementation}
\label{app:implementation}
For a fair comparison, we (re)implement \alg{} and the baselines based on the implementation of MAPPO. 
We keep the same structures for all the algorithms and tune all the algorithms following the same process, i.e., a grid search over a small collection of hyper-parameters, to avoid the influence of different implementation details on the results. The grid search is performed on three hyper-parameters: the learning rate, $\lambda$ and the agent block num in the tasks with numerous agents.

The algorithms, including \alg{} and baselines, are implemented into both parameter sharing and parameter independent versions. \alg{} in the parameter sharing version is implemented as in \Cref{alg:a2poframework_ps}. The main modifications are colored in blue. We rearrange the loops of agents and ppo epochs. The number of ppo epochs is divided by $n$ for comparable updating times with the simultaneous algorithms. The approximated advantage is estimated by correcting the action probabilities of all the agents given such $e^{i}$.

\begin{algorithm}[htbp]
    \caption{Agent-by-agent Policy Optimization (Parameter Sharing)}
    \label{alg:a2poframework_ps}
    Initialize the shared joint policy $\jntcurpi_{0}=\{\curpi_{0}^{1}, \ldots, \curpi_{0}^{n}\}$ with $\curpi^{1}_{0}=\cdots=\curpi^{n}_{0}$, and the global value function $\V$.\\

    \For{iteration $m=1,2,\ldots$} {
    Collect data using $\jntcurpi_{m-1}$.\\

    \textcolor{blue}{Policy $\jntcurpi_{m}=\jntcurpi_{m-1}$}.\\

    \For{\textcolor{blue}{$\lceil \frac{P}{n} \rceil$ epochs}} {
    \For{\textcolor{blue}{$k=1,\ldots,n$}} {

    Agent $i=\mathcal{R}(k)$, \textcolor{blue}{preceding agents $e^{i}=\{\mathcal{R}(1), \ldots, \mathcal{R}(n-1)\}$}.

    \textcolor{blue}{Joint policy $\jntinterpi{i}=\jntcurpi_{m}$}.\\
    Compute the advantage approximation as $A^{\jntcurpi, \jntinterpi{i-1}}(s, \jntact)$ via \cref{eq:trace}. \label{alg_line:ps_adv}

    Compute the value target $v(s_t) = A^{\jntcurpi, \jntinterpi{i-1}}(s, \jntact) + \V(s)$.

    $\curpi^{i}_{m} = \argmax_{\curpi^{i}_{m}} \tilde{\mathcal{L}}_{\jntinterpi{i-1}}(\jntinterpi{i})$ as in \cref{eq:obj}.

    $\V = \argmin_{\V} \E_{s\sim d^{\jntcurpi}}\|v(s)-V(s)\|^{2}$.\\
    }

    }

    }
\end{algorithm}

Practically, each agent is equipped with a value function, we generate the agent order at once to avoid estimating the advantage function $\frac{n(n-1)}{2}$ times. The order becomes $[1, \ldots, i, \ldots, j, \ldots, n]$ in which $\E|A^{i}|>=\E|A^{j}|$.

\subsection{Experimental Setup and Additional Results}
\label{app:exp_setup_result}

\subsubsection{StarCraftII Multi-agent Challenge}
\label{app:exp_setup_result_smac}

StarCraftII Multi-agent Challenge (SMAC) \citep{Samvelyan2019} provides a wide range of multi-agent tasks in the battle scenarios of StarCraftII. Algorithms adopting parameter sharing have shown superior performance in SMAC, so all the algorithms are implemented as parameter sharing. As shown in \Cref{tab:app_smac}, we evaluate the algorithms in 12 maps of SMAC with various difficulties, in which the baselines can not achieve 100\% win rates easily. We use the results of Qmix in \citet{Yu2022}. The learning curves for episode return are summarized in \Cref{fig:app_smac}.

\begin{table}[htb]
    \centering
    \small
    \caption{Median win rates and standard deviations on SMAC tasks. `w/ PS' means the algorithm is implemented as parameter sharing}
    \begin{tabular}{llllll|l}
        \toprule
        Map            & Difficulty & MAPPO w/ PS         & CoPPO w/ PS          & HAPPO w/ PS         & A2PO w/ PS     & Qmix w/ PS     \\
        \midrule
        MMM            & Easy       & 96.9(0.988)         & 96.9(1.25)           & 95.3(2.48)          & \textbf{100(1.07)}  & 95.3(2.5) \\
        3s\_vs\_5z     & Hard       & \textbf{100(1.17)}  & \textbf{100(2.08)}   & \textbf{100(0.659)} & \textbf{100(0.534)} & 98.4(2.4) \\
        2c\_vs\_64zg   & Hard       & \textbf{98.4(1.74)} & 96.9(0.521)          & 96.9(0.521)         & 96.9(0.659) & 92.2(4.0)     \\
        3s5z           & Hard       & 84.4(4.39)          & 92.2(2.35)           & 92.2(1.74)          & \textbf{98.4(1.04)} & 88.3(2.9) \\
        5m\_vs\_6m     & Hard       & 84.4(2.77)          & 84.4(2.12)           & 87.5(2.51)          & \textbf{90.6(3.06)} & 75.8(3.7) \\
        8m\_vs\_9m     & Hard       & 84.4(2.39)          & 84.4(2.04)           & 96.9(3.78)          & \textbf{100(1.04)} & 92.2(2.0)  \\
        10m\_vs\_11m   & Hard       & 93.8(18.7)          & 96.9(2.6)            & 98.4(2.99)          & \textbf{100(0.521)} & 95.3(1.0) \\
        6h\_vs\_8z     & Super Hard & 87.5(1.53)          & \textbf{90.6(0.765)} & 87.5(1.49)          & \textbf{90.6(1.32)} & 9.4(2.0) \\
        3s5z\_vs\_3s6z & Super Hard & 82.8(19.2)          & 84.4(2.9)            & 37.5(13.2)          & \textbf{93.8(19.8)} & 82.8(5.3) \\
        MMM2           & Super Hard & 90.6(8.89)          & 90.6(6.93)           & 51.6(9.01)          & \textbf{98.4(1.25)} & 87.5(2.6) \\
        27m\_vs\_30m   & Super Hard & 93.8(3.75)          & 93.8(2.2)            & 90.6(4.77)          & \textbf{100(1.55)} & 39.1(9.8) \\
        corridor       & Super Hard & 96.9(0)             & \textbf{100(0.659)}  & 96.9(0.96)          & \textbf{100(0)}  & 84.4(2.5)   \\
        \midrule
        overall        & /          & 91.1(5.46)          & 92.6(2.2)            & 85.9(3.68)          & \textbf{97.4(2.65)} & 78.4(3.6)\\
        \bottomrule
    \end{tabular}
    \label{tab:app_smac}
\end{table}

\begin{figure}[htbp]
    \centering
    \includegraphics[width=0.75\linewidth]{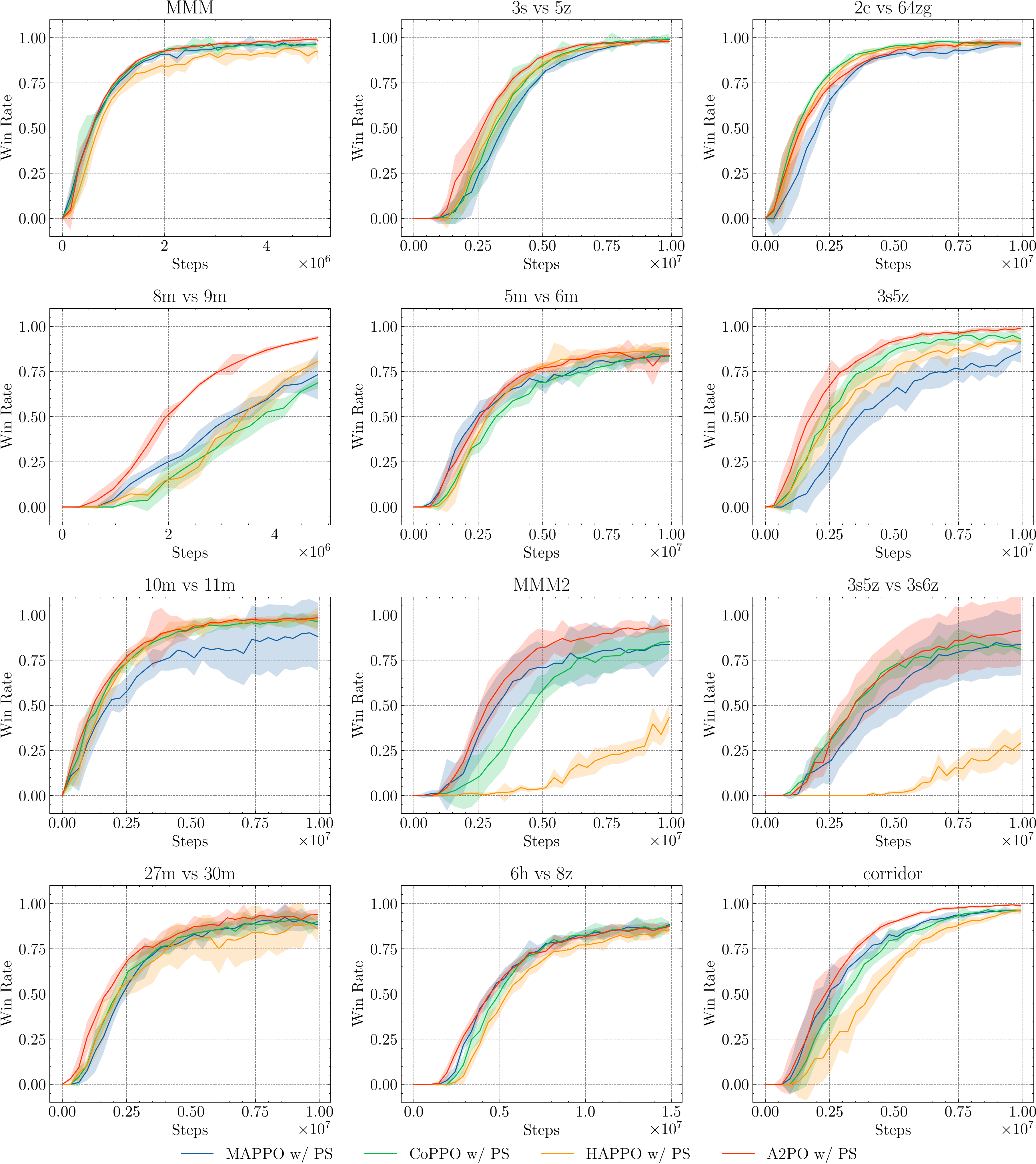}
    \caption{Comparisons of median win rate on SMAC.}
    \label{fig:app_smac}
\end{figure}
\subsubsection{Multi-agent MuJoCo}
\label{app:exp_setup_result_mujoco}

Multi-agent MuJoCo (MA MuJoCo) \citep{Peng2021} contains a range of multi-agent robot continuous control tasks, in which an agent controls the composition of robot joints. MA MuJoCo extends the high-dimensional single-agent locomotion tasks in MuJoCo \citep{Todorov2012}, a widely adopted benchmark for SARL algorithms \citep{Haarnoja2018,He2020}, into the multi-agent case. Agents must cooperate in their actions for robot locomotion, and different agents control different compositions of the robot joints.
We use the reward settings of the original paper but set the environment to be fully observable\footnote{Empirically, we find the fully observable setting does not make the tasks easier because of the information redundancy.}. The agents are heterogeneous and mostly asymmetric in MA-MuJoCo, so we implement the algorithms as parameter-independent. We test 14 tasks of 6 scenarios in MA MuJoCo, as illustrated in \Cref{fig:app_exp_mujoco}.

\begin{figure}[htbp]
    \centering
    \includegraphics[width=0.75\linewidth]{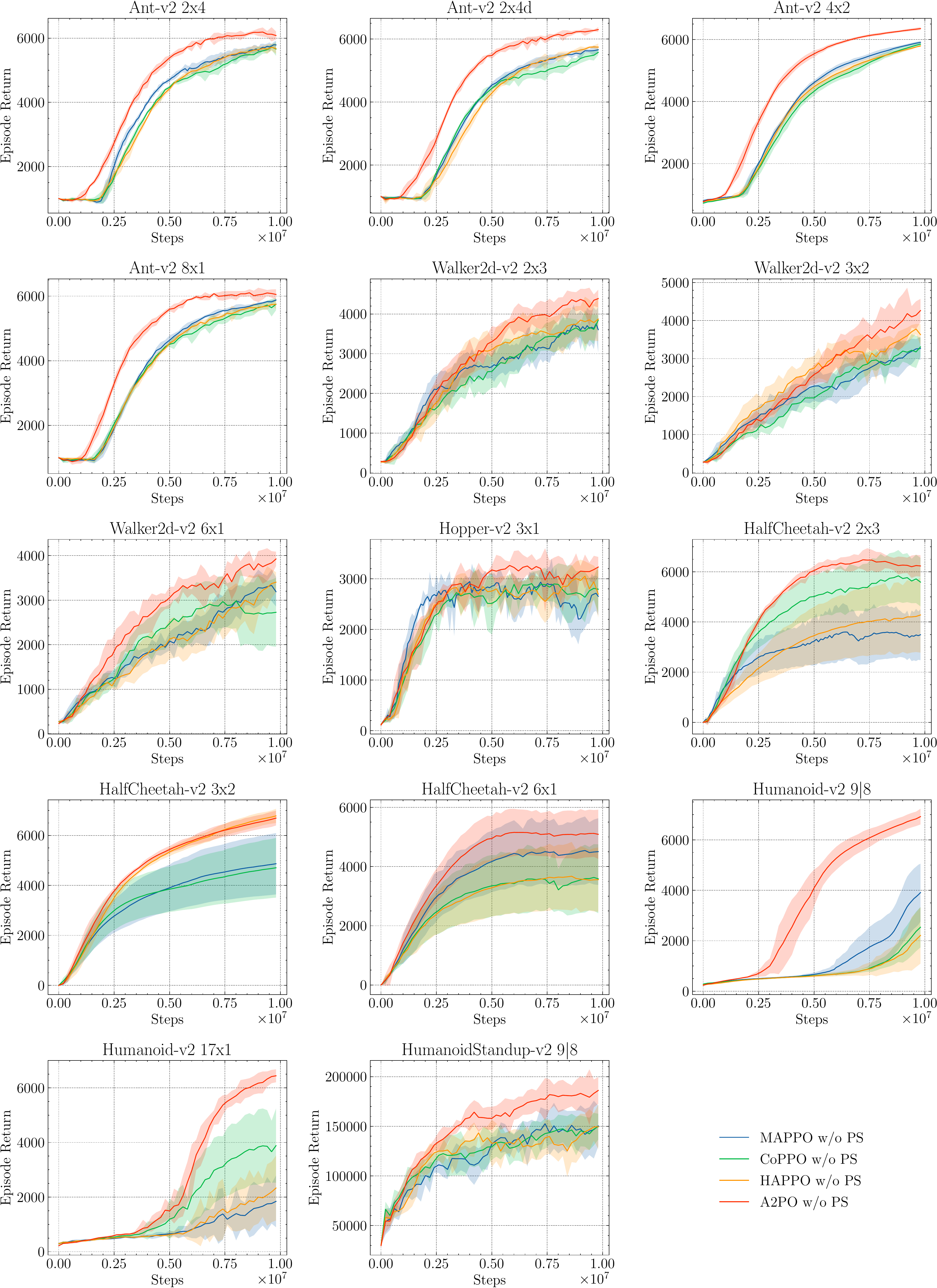}
    \caption{Comparisons of average episode return on MA-MuJoCo.}
    \label{fig:app_exp_mujoco}
\end{figure}

\subsubsection{Multi-agent Particle Environment}
\label{app:exp_setup_result_mpe}
We consider the Navigation task of the Multi-agent Particle Environment (MPE) \citep{Lowe2017} implemented in PettingZoo \citep{Terry2021} which implements MPE with minor fixes and provides convenience for customizing the number of agents and landmarks, and customizing the global and local rewards., with 3 and 5 agents and corresponding numbers of landmarks. The agents are rewarded based on the minimum distance to the landmarks and penalized for colliding with each other, meaning that the reward is entirely up to the coordination behavior. We adopted two different reward settings: Fully Cooperative and General-sum. In the Fully Cooperative setting, the agents share the same reward, while in the General-sum setting, the agents are additionally rewarded based on the local collision detection. The results in \Cref{fig:pettingzoo} show that \alg{} generally outperforms the baselines on the average return and the sample efficiency. Noted that \alg{} is developed in fully cooperative games, the results in the General-sum setting reveal the potential of extending \alg{} into general-sum games. Further, the performance gap between \alg{} and the baselines enlarges with the increasing number of agents.

\begin{figure}[htb]
    \centering
    \includegraphics[width=\linewidth]{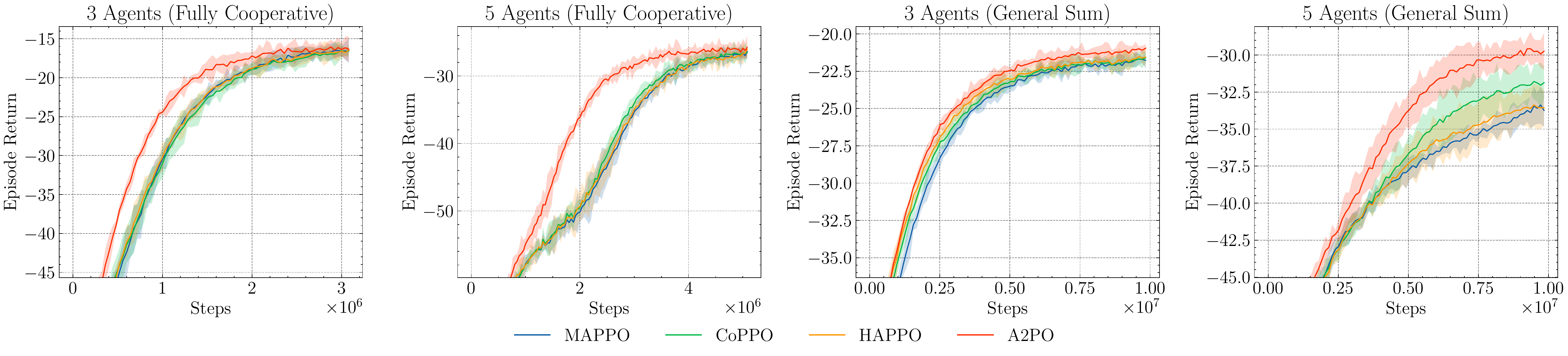}
    \caption{Comparisons of averaged return on the Multi-agent Particle Environment Navigation task. \textbf{Left}: The fully cooperative setting. \textbf{Right}: The general-sum setting.}
    \label{fig:pettingzoo}
\end{figure}

\subsubsection{Google Research Football}
\label{app:exp_setup_result_grf}

\begin{wrapfigure}{r}{0.25\linewidth}
    \centering
    \vspace{-10pt}
    \includegraphics[width=\linewidth]{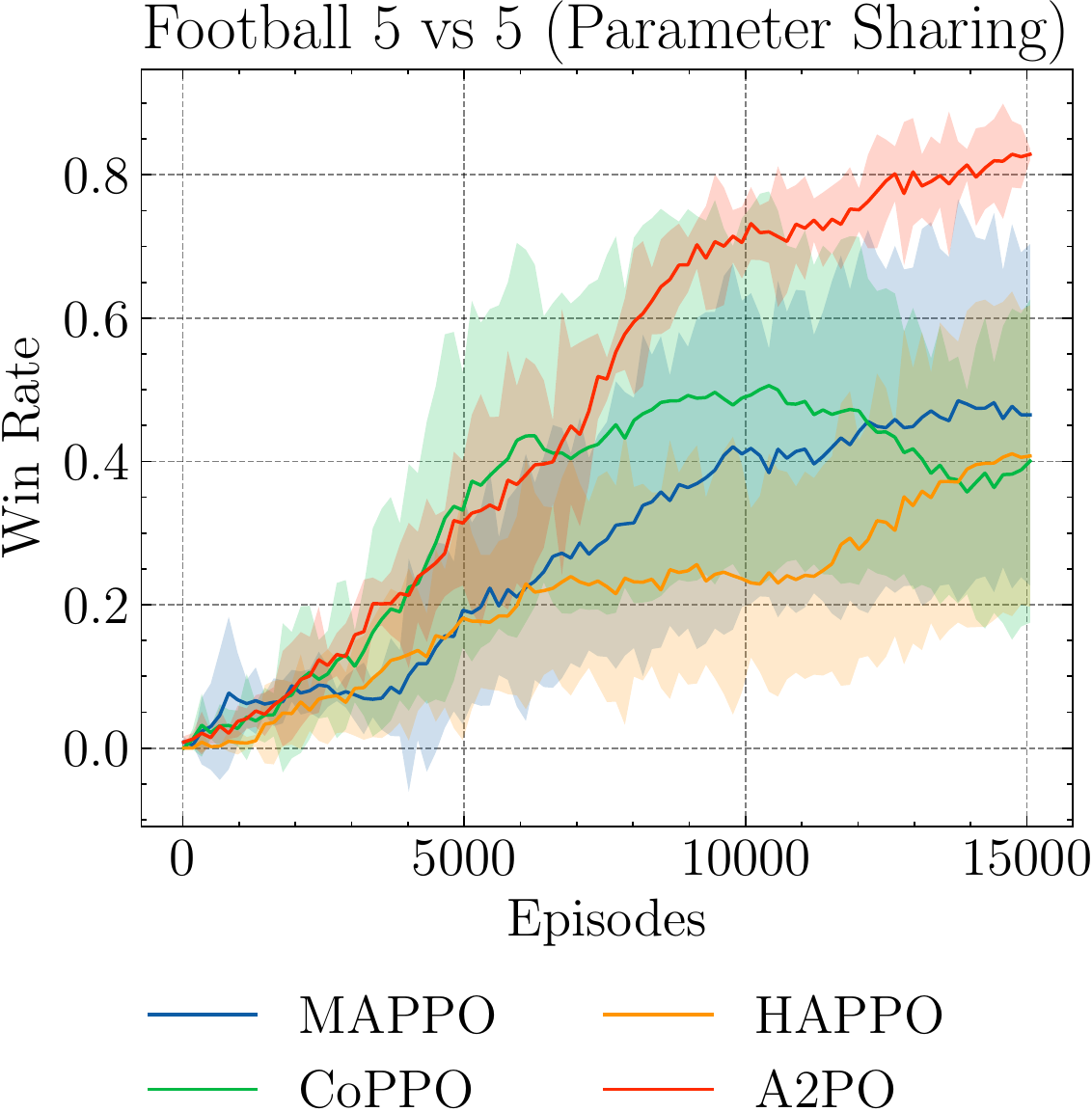}
    \caption{5-vs-5 scenario with Parameter sharing.}
    \label{fig:app_grf_ps}
    \vspace{-10pt}
\end{wrapfigure}
In the above experiments, we have evaluated \alg{} in tasks where agents can learn both their micro-operations and coordination behaviors (SMAC and MA-MuJoCo) and tasks where agents can only learn coordination behaviors (the Navigation task). However, the coordination behaviors in the above tasks are relatively easy to discover, e.g., agents learn to concentrate their fire to shoot the enemies and cover each other in SMAC.
Recent works \citep{Wen2022,Yu2022} have conducted experiments on Google Research Football academic scenarios with a small number of players and easily accessible targets, making the coordination behavior also easy to discover.
In contrast, we evaluate \alg{} in the full-game scenarios, where the players of the left team, except for the goalkeeper, are controlled to play a football match against the right team controlled by the built-in AI provided by GRF. The agents in the full-game scenarios have high-dimensional observations, complex action spaces, and a long-span timescale (3000 steps).
We reconstruct the observation space and design a dense reward to facilitate training in these scenarios based on Football-paris.
The observation is formed to be agent-specific. The reward function estimates the behaviors of the entire team, including scoring, and carrying the ball to the opponent's restricted area et al., but not the individual behaviors such as ball-passing \citep{Li2021}.
We implement all the algorithms for the 5-vs-5 scenario as both parameter sharing and parameter-independent. The additional results with algorithms implemented as parameter sharing are shown in \Cref{fig:app_grf_ps}, in which \alg{} gets free from the trouble that the controlled agents have similar behavior and compete for the ball \citep{Li2021}.

We implement all the algorithms on the 11-vs-11 scenario as parameter sharing using MALib \citep{Zhou2021} for acceleration and train the algorithms for 300M environment steps. We summarize the learned behaviors observed in the game videos:
\begin{itemize}
    \item \textbf{Basic Skills}. The agents trained by MAPPO and CoPPO perform unsatisfactorily in basic skills such as dribbling, shooting, and the agents even run out of bounds frequently. In contrast, the agents trained by HAPPO and \alg{} perform better in the basic skills. We attribute the problems to the non-stationarity issue that seriously influences the simultaneous updating algorithms. We also note that the agents trained by all the algorithms fail to understand the off-side mechanism and occasionally gather together on the opponent's bottom line.
    \item \textbf{Passing and Receiving Coordination}. We analyze the direct way for coordination: passing and receiving the ball. As illustrated in \Cref{tab:grf_coordination}, the agents trained by MAPPO have the lowest number of successful passes and the lowest successful pass rate, and we can hardly observe the agents passing the ball. Agents trained by CoPPO perform better on passing the ball but suffer from poor basic skills, and get tackled after receiving the ball. Agents trained by HAPPO prefer passing the ball without considering the teammates' situations, e.g., the receiver is marked by several opponents. Agents trained by \alg{} can pass the ball to their teammates in a way that leads to a score. We attribute the performance gain to the \trace{}, which means that agents estimate the teammates' situations and intentions better.
\end{itemize}

\begin{figure}[htb]
    \centering
    \vspace{-5pt}
    \includegraphics[width=\linewidth]{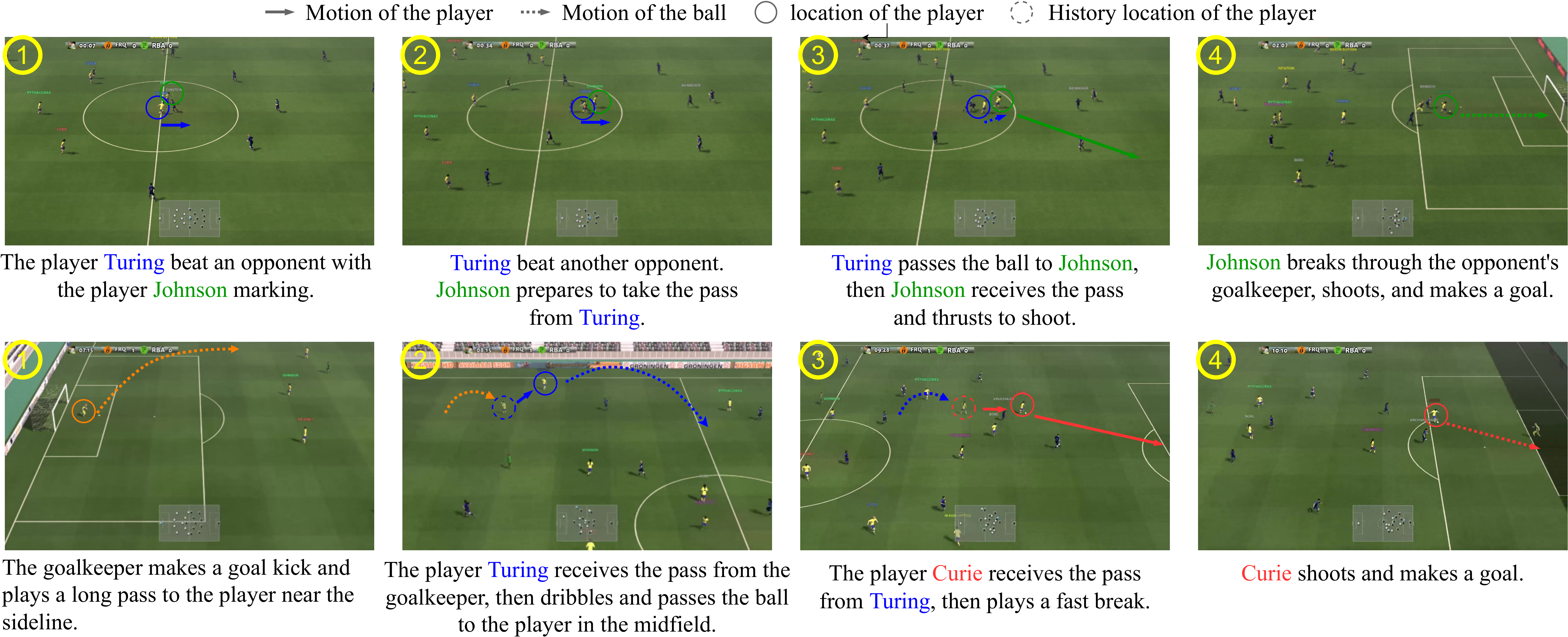}
    \caption{Visualization of trained \alg{} policies on the Google Research Football 11-vs-11 scenario, which shows that \alg{} encourages complex cooperation behaviors to make a goal. \textbf{Top}: Player Turing and Johnson cooperate to beat multiple opponents to break through the defense and make a goal. 
    \textbf{Bottom}: The goalkeeper, player Turing, and Curie achieve the pass and receive cooperation twice. A fast thrust is made by consecutively passing the ball.}
    \vspace{-5pt}
    \label{fig:case_grf}
\end{figure}

We further visualize the learned behaviors of \alg{} in \Cref{fig:case_grf}. In the top of \Cref{fig:case_grf}, two players cooperatively break through the opponent's defense and complete a passing and receiving coordination for scoring. In the bottom of \Cref{fig:case_grf}, three players make a fast thrust by two long passes: the goalkeeper passes the ball to the player at the edge, and the player at the edge passes the ball to the player behind the opponents. The complex coordination strategies are hardly observed in other baselines.

\subsubsection{Ablation}
\label{app:add_exp_ablation}

\textbf{Preceding agent off-policy correction}. More ablations on \trace{} are shown in \Cref{fig:app_more_abl_trace}. The baselines are:
\begin{itemize}
    \item MAPPO w/ V-trace, CoPPO w/ V-trace: Simultaneous update methods with advantage estimation as V-trace.
    \item HAPPO w/ PreOPC: HAPPO with advantage estimation as PreOPC.
\end{itemize}

In this ablation study, the baselines are equipped with off-policy correction methods. The experiment yields the following three conclusions:
\begin{itemize}
    \item The results firstly support the conclusion in \Cref{sec:preopc} that applying PreOPC to sequential update methods results in a greater performance improvement than applying V-trace to simultaneous update methods.
    \item Secondly, the primary distinction between A2PO and HAPPO with PreOPC is the clipping objective. The results demonstrate that the clipping objective derived from the single-agent improvement bound contributes to the performance improvement. 
    \item And thirdly, although we were unable to assess the error of PreOPC, we compare A2PO with RPISA-PPO, which can be viewed as A2PO algorithms with error-free off-policy correction methods (the advantage estimation is error-free) at the expense of sample inefficiency.  \alg{} reaches or outperforms the asymptotic performance of RPISA-PPO. \alg{} outperforms \seqrpippo{} since \seqrpippo{} suffers from performance degradation as a result of agents updating policies with separated data \citep{Taheri2022}.
\end{itemize}

We further analyze the sensitivity to the hyper-parameter $\lambda$. Results in \Cref{fig:app_more_sen} illustrate that \trace{} does not introduce more sensitivity.

\begin{figure}[htbp]
    \centering
    \includegraphics[width=\linewidth]{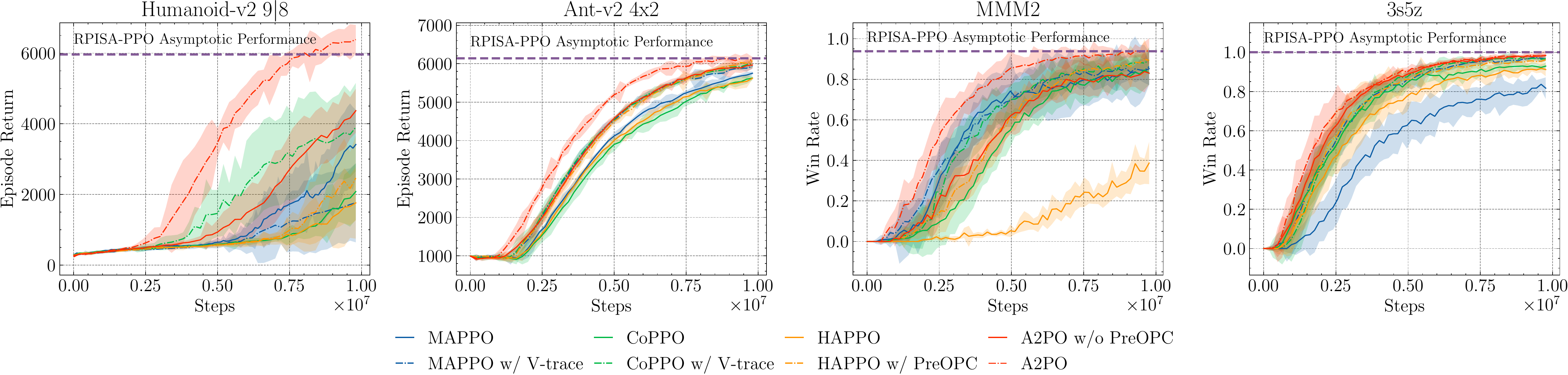}
    \caption{Ablation experiments on \trace{}.}
    \label{fig:app_more_abl_trace}
\end{figure}

\begin{figure}[htbp]
    \centering
    \includegraphics[width=\linewidth]{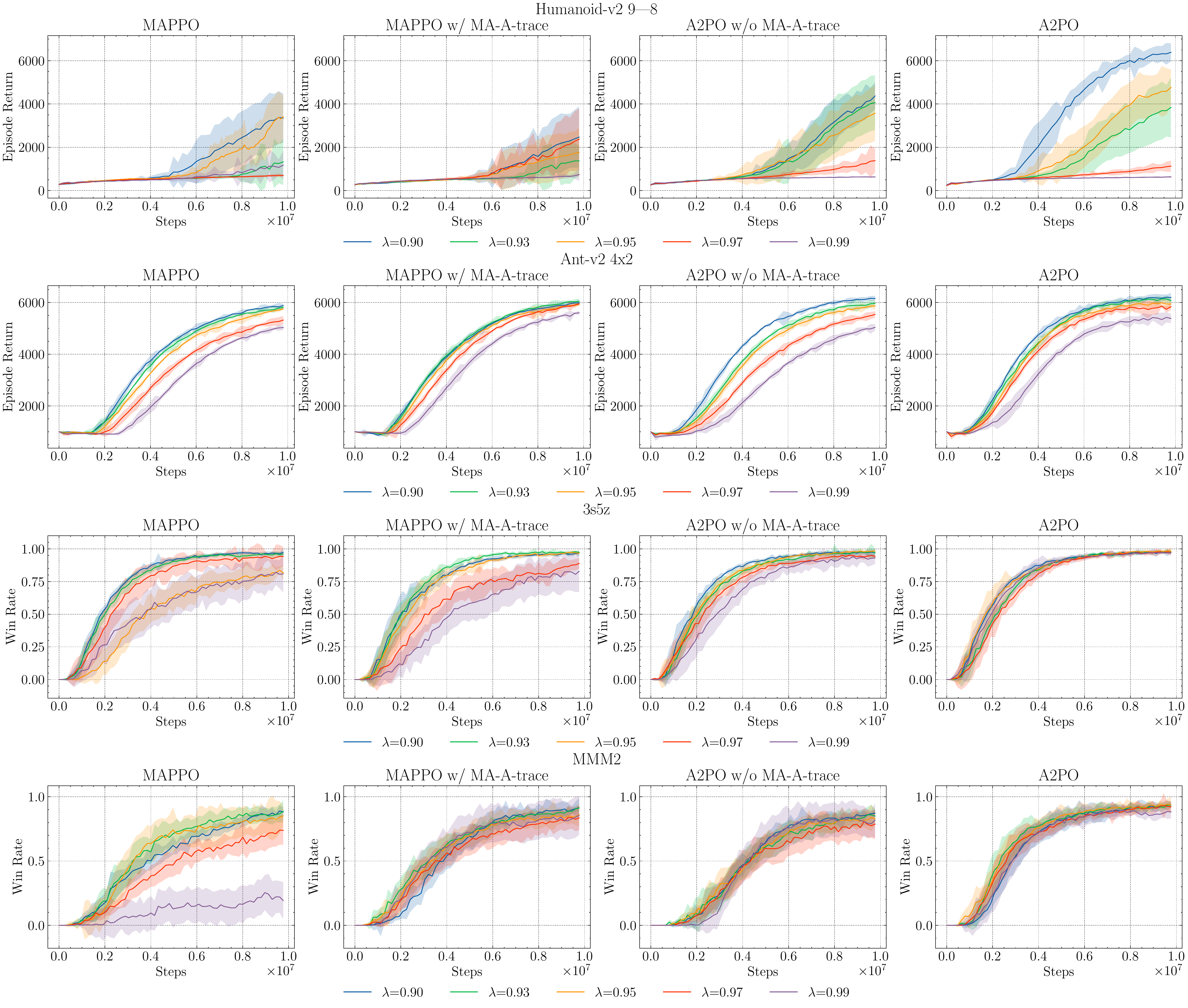}
    \caption{Sensitivity analysis of $\lambda$.}
    \label{fig:app_more_sen}
\end{figure}

\textbf{Agent Selection Rule}. More ablations on the agent selection rules are shown in \Cref{fig:app_more_abl_trace}. We compare two additional rules: `Reverse-greedy' and `Reverse-semi-greedy'. `Reverse' means selecting the agent with the minimal advantage first. While we observe that the effect of the selection rule becomes less significant in tasks with homogeneous or symmetric agents.

Going deeper into the effects of agent selection rules, we show that the agents with implicit guidance from the advantage estimation benefit from greedily selecting agents in \Cref{fig:fair,fig:more_seq_ana}. More even bars appear in one fig means the agents are more balanced in terms of the guidance from the advantage estimation. Take the agent 10 in \Cref{fig:fair} for example, under `Cyclic' and `Random' rules, agent 10 perform the worst with high proportions, while it has higher proportions in prior ranks under `Greedy' rule. 

\begin{figure}
    \centering
    \includegraphics[width=\linewidth]{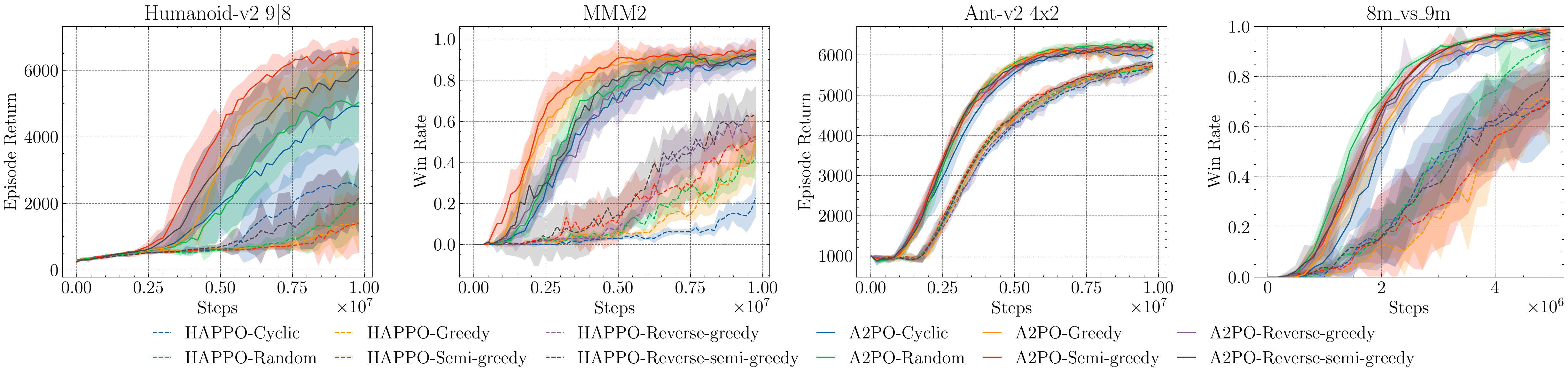}
    \caption{Ablation experiments on the agent selection rules. \textbf{Left}: Heterogeneous or asymmetric agents. \textbf{Right}: Homogeneous or symmetric agents.}
    \label{fig:app_more_abl_seq}
\end{figure}

\begin{figure}[htbp]
    \centering
    \begin{subfigure}[t]{0.72\linewidth}
        \centering
        \includegraphics[height=0.31\linewidth]{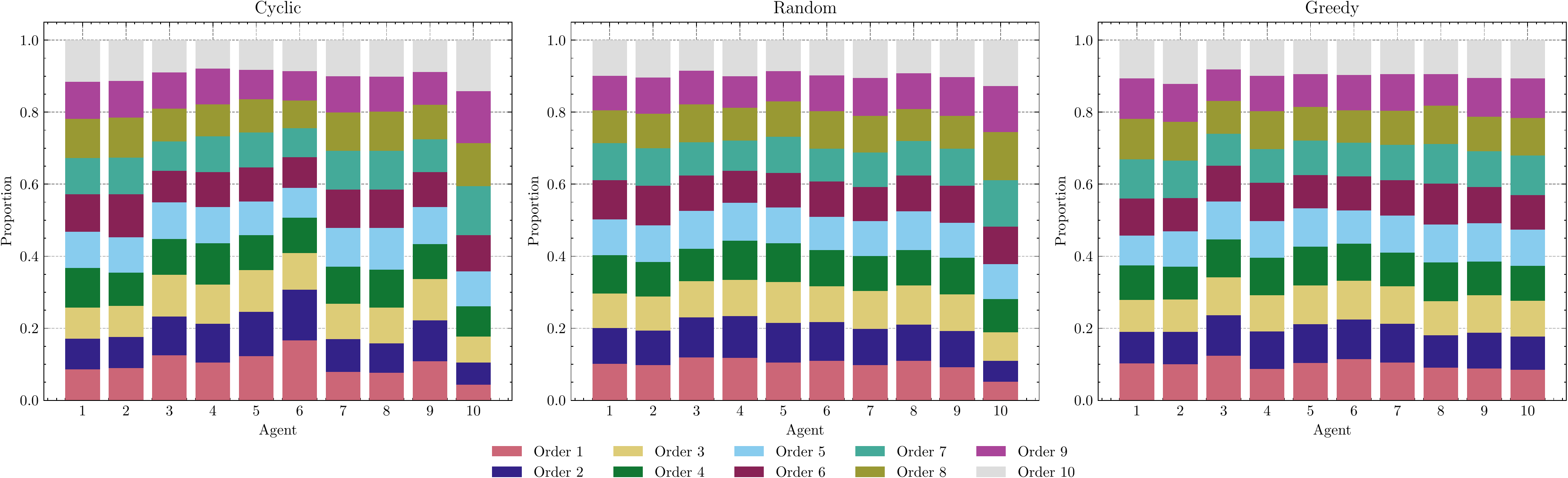}
        \caption{The imbalance of the agents. The bar of agent $i$ illustrates the proportion of its ranks in terms of $\E_{s,a^{i}}[|A^{\jntcurpi, \jntinterpi{i}}|]$. Especially, agent $10$ has implicit guidance, i.e., a small absolute value of advantage function when using Cyclic and Random selection rule, but is comparable with other agents with Greedy selection rule.}
        \label{fig:seq_ana_1}
    \end{subfigure}
    \hfill
    \begin{subfigure}[t]{0.24\linewidth}
        \centering
        \includegraphics[height=0.93\linewidth]{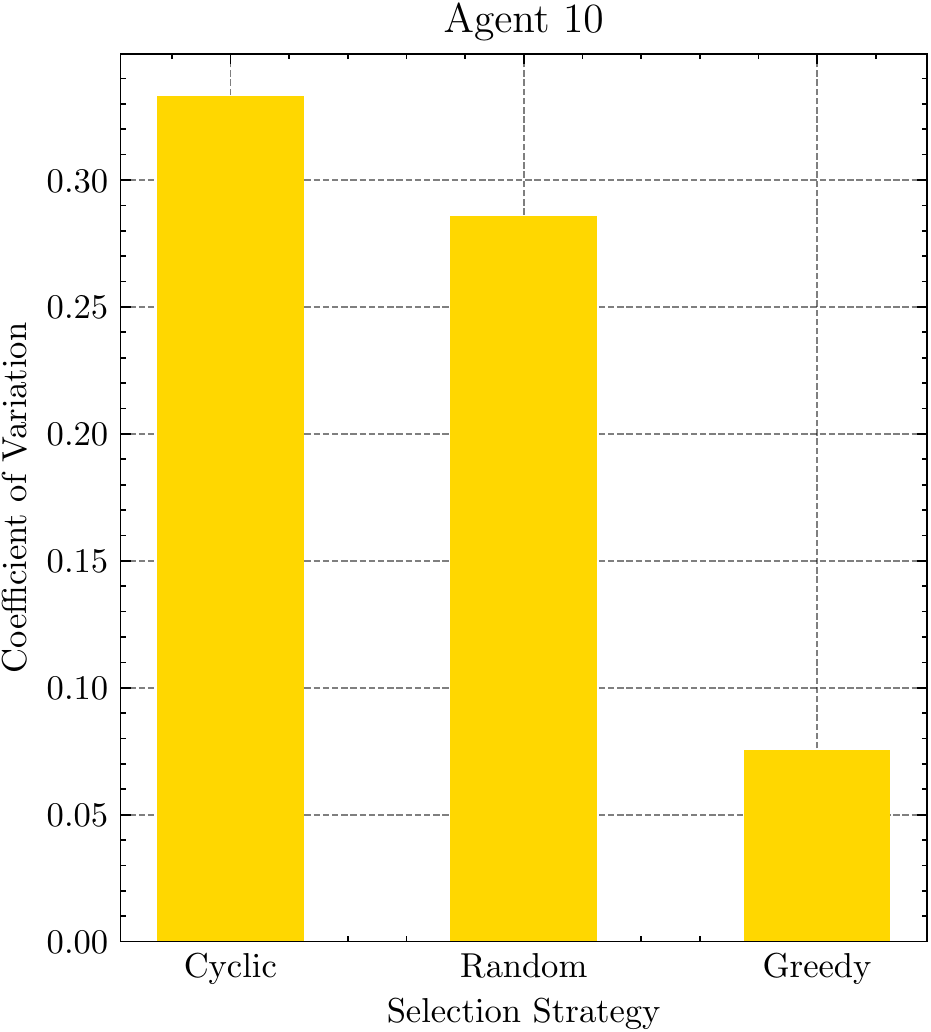}
        \caption{The coefficient of variance of agent $10$'s order proportions.}
        \label{fig:seq_ana_2}
    \end{subfigure}
    \caption{Agents' imbalance in terms of the estimated advantage. The experiment is conducted on the MMM2 task of SMAC.}
    \label{fig:fair}
\end{figure}

\begin{figure}
    \centering
    \includegraphics[width=\linewidth]{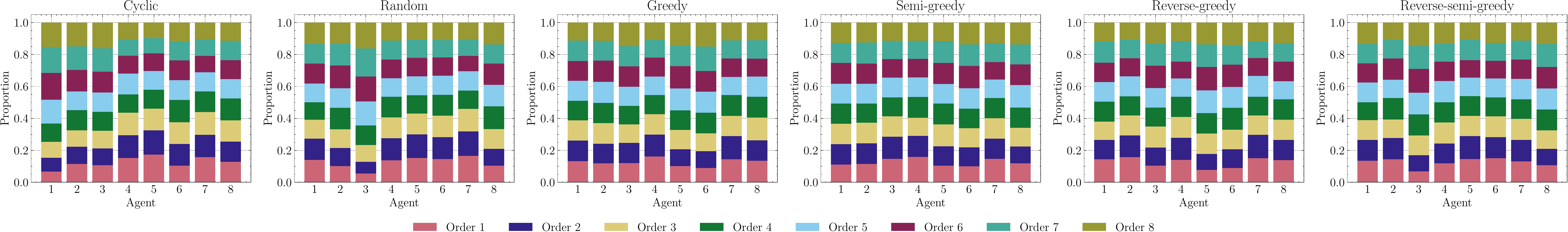}
    \includegraphics[width=\linewidth]{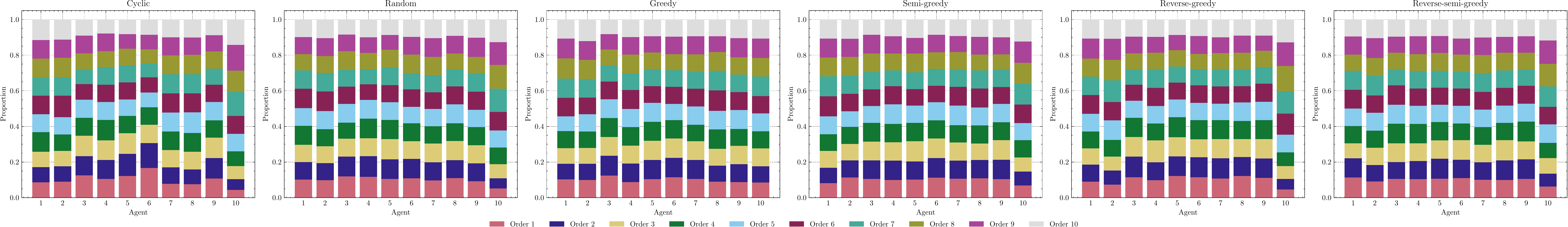}
    \caption{More experiments on the agents' imbalance in terms of the estimated advantage. \textbf{Top}: 3s5z task. \textbf{Bottom}: MMM2 task.}
    \label{fig:more_seq_ana}
\end{figure}

\textbf{Adaptive Clipping Parameter}. More ablations on the adaptive clipping parameter are shown in \Cref{fig:more_adapt}. Similarly, we observe that the effect of the adaptive clipping parameter becomes less significant in tasks with homogeneous or symmetric agents.

\begin{figure}
    \centering
    \includegraphics[width=\linewidth]{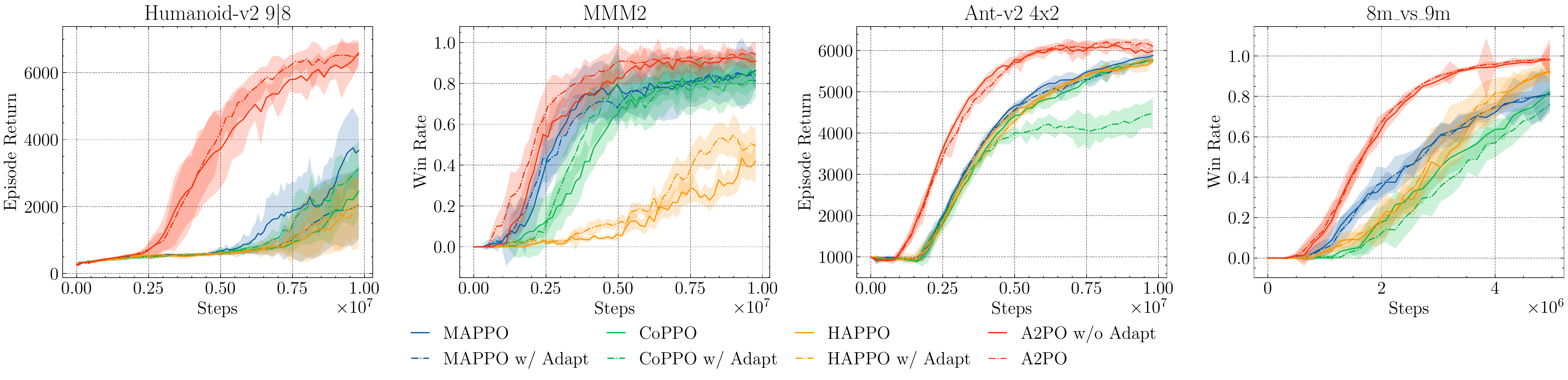}
    \caption{More ablation experiments on the adaptive clipping parameter. \textbf{Left}: Heterogeneous or asymmetric agents. \textbf{Right}: Homogeneous or symmetric agents.}
    \label{fig:more_adapt}
\end{figure}

\subsection{Wall Time Analysis}

Multiple updates in a stage may increase training time, and the need for more training time may impact the scalability of A2PO, which is a common concern regarding the sequential update scheme. Nevertheless, a sequential update scheme will increase training time less than might be expected. Before proceeding, we note that the majority of experiments in our work are synchronously implemented, and the training time consists of the time spent updating policies and collecting samples.

We have proposed a simple yet effective method for controlling training time in order to reduce training time. As a trade-off between performance and training time, we divide the agents into blocks to reduce the number of update iterations. For example, the tasks with 10 agents can be divided into 3 blocks, with sizes 3, 3, 4, respectively, and only 3 updates will be performed in a policy update iteration. From the implementation perspective, since the number of samples used in a single update decreases, the sequential update scheme requires less memory and less updating time when update policies. Therefore, it is possible to control the training time as less than 1.5 times the training time of the simultaneous update methods. In addition, assuming a good implementation, fewer update iterations will be performed if mini-batches are used in a single policy update, as the size of a mini-batch can be greater in sequential update methods under limited memory resources. In such a case, fewer mini-batches will be used, further decreasing the training time. Moreover, sampling consumes the majority of the training time, and the increased updating time appears less significant when analysing the wall time for on-policy algorithms with synchronized implementations.

The training time is depicted in \Cref{tab:time}. A2PO achieves significantly greater performance with only marginally more training time.
In addition, we illustrate the Humanoid $9|8$ comparisons regarding environment steps and training time in \Cref{fig:time_1}, and the comparisons on the GRF 11-vs-11 scenario in \Cref{fig:time_2}. A2PO maintains an advantage in terms of training time.

\begin{figure}
    \centering
    \begin{subfigure}[t]{0.64\linewidth}
    \centering
    \includegraphics[height=0.38\linewidth]{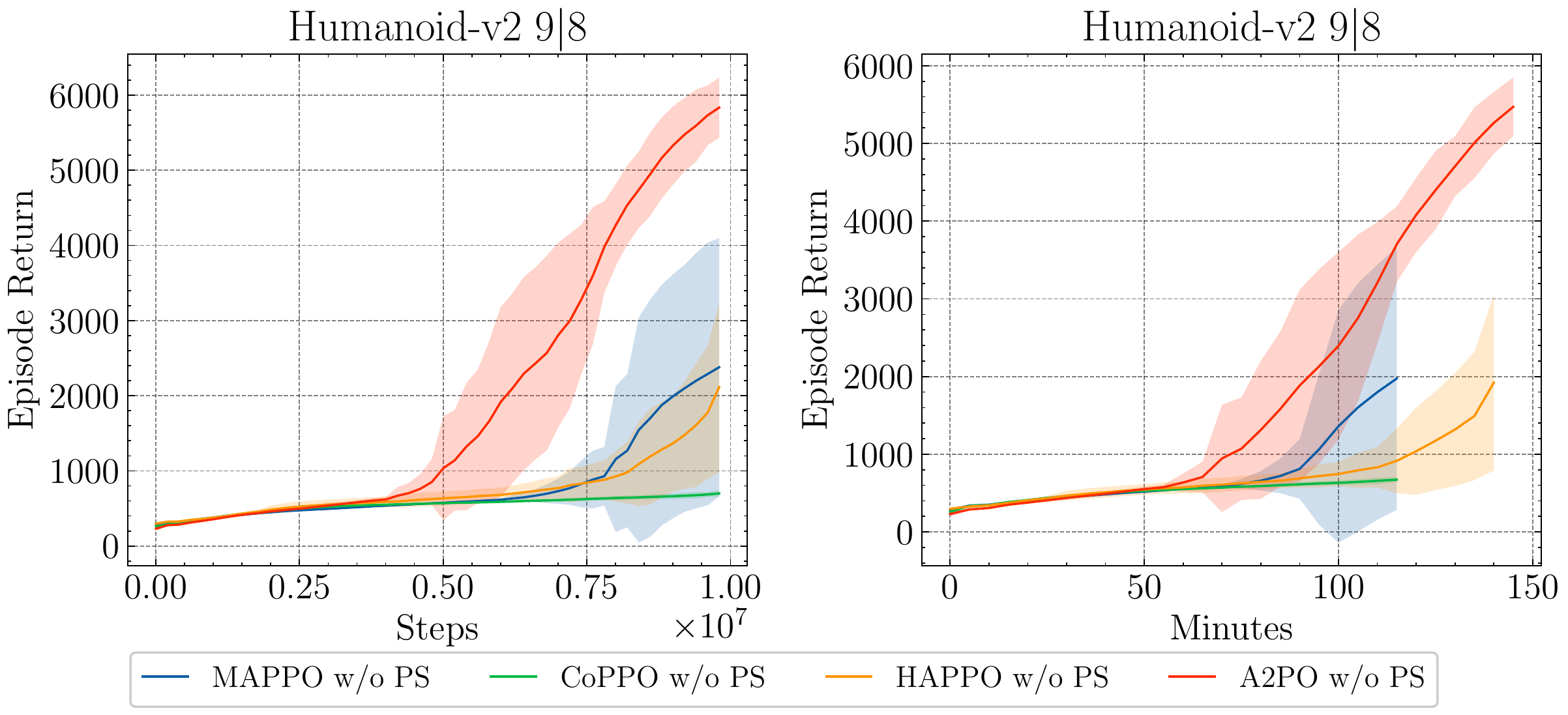}
    \caption{Comparison on Humanoid $9|8$ over both environment steps and training time.}
    \label{fig:time_1}
    \end{subfigure}
    \hfill
    \begin{subfigure}[t]{0.32\linewidth}
    \centering
    \includegraphics[height=0.76\linewidth]{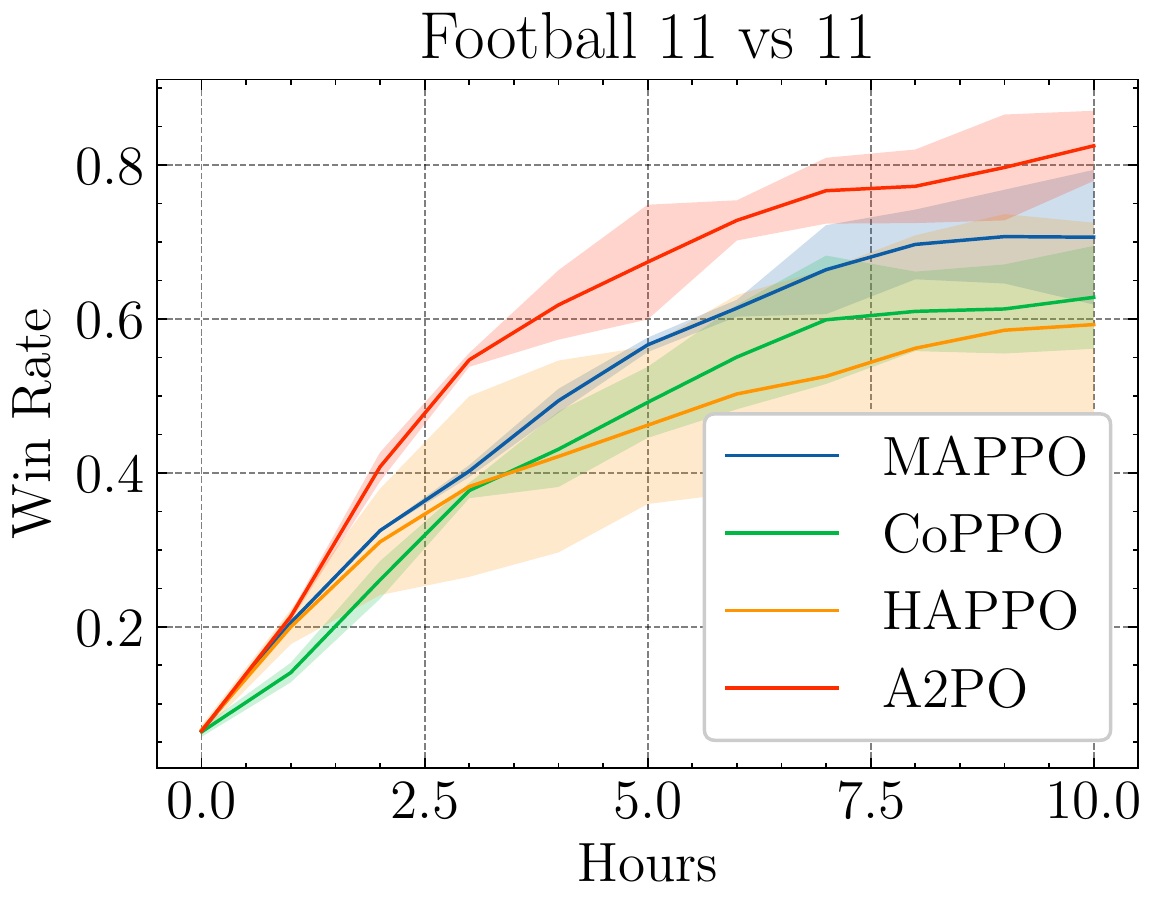}
    \caption{Comparison on GRF 11-vs-11 scenario.}
    \label{fig:time_2}
    \end{subfigure}
    \caption{Wall time Analysis.}
\end{figure}

\begin{table}[htb]
    \centering
    \caption{The comparison of training duration. The format of the first line in a cell is: Training time(Sampling time+Updating Time). The second line of a cell represents the time normalized.}
    \resizebox{\linewidth}{!}{
    \begin{tabular}{l|l|l|l|l}
        \toprule 
        Task & MAPPO & CoPPO & HAPPO & A2PO \\
        \midrule
        \multirow{2}{*}{3s5z} & 3h29m(3h3m+0h26m) & 3h33m(3h6m+0h27m) & 3h49m(3h7m+0h42m) & 4h32m(3h41m+0h51m) \\
        & 1.00(0.87 + 0.13) & 1.02(0.89 + 0.13) & 1.10(0.89 + 0.20) & 1.30(1.06 + 0.25) \\
        \midrule
        \multirow{2}{*}{27m vs 30m} & 13h23m(8h31m + 4h52m)  & 13h19m(8h24m + 4h55m) & 16h2m(8h20m + 7h42m) & 15h53m(8h7m + 7h46m) \\
        & 1.00(0.64 + 0.36) & 1.00(0.63 + 0.37) & 1.20(0.62 + 0.58) & 1.19(0.61 + 0.58)  \\
        \midrule
        \multirow{2}{*}{Humanoid 9$|$8} & 2h0m(1h45m + 0h15m) & 1h58m(1h43m + 0h15m) & 2h15m(1h45m + 0h30m) & 2h31m(2h0m + 0h31m) \\
        & 1.00(0.87 + 0.13) & 0.99(0.86 + 0.13) & 1.12(0.87 + 0.25) & 1.26(1.00 + 0.26)  \\
        \midrule
        \multirow{2}{*}{Ant 4x2} & 6h42m(6h16m + 0h26m) & 6h45m(6h19m + 0h26m) & 7h29m(6h5m + 1h24m) & 7h2m(5h34m + 1h28m) \\
        & 1.00(0.93 + 0.07) & 1.01(0.94 + 0.07) & 1.12(0.91 + 0.21) & 1.05(0.83 + 0.22)  \\
        \midrule
        \multirow{2}{*}{Humanoid 17x1} & 12h9m(10h6m + 2h3m) & 17h7m(15h5m + 2h2m) & 16h55m(11h2m + 5h53m) & 19h25m(11h59m + 7h26m) \\
        & 1.00(0.83 + 0.17) & 1.41(1.24 + 0.17) & 1.39(0.91 + 0.48) & 1.60(0.99 + 0.61) \\
        \midrule
        \multirow{2}{*}{Football 5vs5} & 34h46m(32h47m + 1h59m) & 32h46m(30h49m + 1h57m) & 39h26m(31h54m + 7h32m) & 37h26m(30h2m + 7h24m) \\
        & 1.00(0.94 + 0.06) & 0.94(0.89 + 0.06) & 1.13(0.92 + 0.22) & 1.08(0.86 + 0.21) \\
        \bottomrule
    \end{tabular}
    }
    \label{tab:time}
\end{table}

\subsection{Hyper-parameters}
\label{app:hypers}

We tune several hyper-parameters in all the benchmarks, other hyper-parameters refer to the settings used in MAPPO. $c_{\epsilon}$ are selected to be $0.5$ in all the tasks.

\subsubsection{StarCraftII Multi-agent Challenge}

We list the hyper-parameters used for each task of SMAC in \Cref{tab:app_hyper_smac}.

\begin{table}[h]
\centering
\caption{Hyper-parameters in SMAC.}
\label{tab:app_hyper_smac}
\begin{tabular}{@{}lllll@{}}
\toprule
Hyperparameters & agent block & ppo epoch & $\lambda$ & $\epsilon$ \\ \midrule
MMM             & 3            & 12        & 0.95      & 0.2        \\
3s\_vs\_5z      & 3            & 15        & 0.95      & 0.05       \\
2c\_vs\_64zg    & 2            & 5         & 0.95      & 0.2        \\
3s5z            & 3            & 8         & 0.95      & 0.2        \\
5m\_vs\_6m      & 2            & 10        & 0.93      & 0.05       \\
8m\_vs\_9m      & 5            & 15        & 0.95      & 0.05       \\
10m\_vs\_11m    & 2            & 10        & 0.97      & 0.2        \\
6h\_vs\_8z      & 2            & 8         & 0.99      & 0.2        \\
3s5z\_vs\_3s6z  & 2            & 5         & 0.90      & 0.2        \\
MMM2            & 2            & 5         & 0.95      & 0.2        \\
27m\_vs\_30m    & 3            & 5         & 0.95      & 0.2        \\
corridor        & 2            & 5         & 0.95      & 0.2        \\ \bottomrule
\end{tabular}
\end{table}

\subsubsection{Multi-agent MuJoCo}

For the model structure in MA MuJoCo, the output from the last layer is processed by a Tanh layer and the action distribution is modeled as a Gaussian distribution initialized with mean as 0 and log std as -0.5. The probability output of different actions are averaged when computing the policy ratio. The common hyper-parameters used in MA MuJoCo are listed in \Cref{tab:app_hyper_mujoco_1}.

\begin{table}[h]
\centering
\caption{Common hypermeters in MA MuJoCo.}
\label{tab:app_hyper_mujoco_1}
\begin{tabular}{@{}ll@{}}
\toprule
Hyperparameters & Values \\ \midrule
entropy         & 0      \\
gain            & 0.01   \\
batch size      & 4000   \\ \bottomrule
\end{tabular}
\end{table}

\begin{table}[h]
\centering
\small
\caption{Hypermeters for the scenarios in MA MuJoCo.}
\label{tab:app_hyper_mujoco_2}
\begin{tabular}{@{}lllllll@{}}
\toprule
Hyperparameters & Ant & HalfCheetah & Hopper & Humanoid & HumanoidStandup & Walker2d \\ \midrule
agent block & 8x1:4 & /    & /    & 17x1:5 & 17x1:4 & /    \\
ppo epoch   & 8     & 5    & 8    & 5      & 5      & 5    \\
actor lr    & 3e-4  & 3e-4 & 1e-4 & 3e-4   & 3e-4   & 3e-4 \\
critic lr   & 3e-4  & 3e-4 & 1e-4 & 3e-4   & 3e-4   & 3e-4 \\
$\lambda$   & 0.93  & 0.93 & 0.95 & 0.9    & 0.93   & 0.93 \\
$\epsilon$  & 0.2   & 0.2  & 0.1  & 0.2    & 0.2    & 0.2  \\
\bottomrule
\end{tabular}
\end{table}

\subsubsection{Multi-agent Particle Environment}

We list the hyper-parameters used in MPE in \Cref{tab:app_hyper_mpe}.

\begin{table}[h]
\centering
\caption{Hypermeters for the scenarios in MPE.}
\label{tab:app_hyper_mpe}
\begin{tabular}{@{}ll@{}}
\toprule
Hyperparameters & Values \\ \midrule
ppo epoch       & 8      \\
chunk length    & 5      \\
entropy         & 0.05   \\
actor lr        & 2e-4   \\
critic lr       & 2e-4   \\
$\lambda$       & 0.97   \\
$\epsilon$      & 0.2    \\ \bottomrule
\end{tabular}
\end{table}

\subsubsection{Google Research Football}

We list the hyper-parameters used in the GRF 5-vs-5 scenario in \Cref{tab:app_hyper_grf}.

\begin{table}[h]
\centering
\caption{Hypermeters for the scenarios in MPE.}
\label{tab:app_hyper_grf}
\begin{tabular}{@{}ll@{}}
\toprule
Hyperparameters & Values \\ \midrule
ppo epoch       & 10     \\
chunk length    & 10     \\
entropy         & 0.001  \\
actor lr        & 5e-4   \\
critic lr       & 5e-4   \\ 
$\lambda$       & 0.95   \\
$\epsilon$      & 0.25   \\
$\gamma$        & 0.995  \\
\bottomrule
\end{tabular}
\end{table}
\newpage

\section{The Related Work of Other MARL Methods}\label{app:others}

\textbf{Value decomposition methods}. The value decomposition methods such as VDN \citep{Sunehag2017} and Qmix \citep{rashid2018qmix}, factorize the joint value function and adopt the centralized training and decentralized execution paradigm. The Individual-Global-MAX (IGM) principle is proposed to ensure consistency between the joint and local greedy action selections in the joint $Q$-value function $Q_{tot}(\bm{\tau}, \bm{a})$ and the individual $Q$-value function $\{Q^{i}(\tau^{i},a^{i}\}_{i=1}^{n}$: $\forall \bm{\tau} \in \mathcal{T}, \arg\max_{\bm{a}\in \mathcal{A}} Q_{tot}(\bm{\tau}, \bm{a})=(\arg\max_{a^1 \in \mathcal{A}^{1}} Q^{1}(\tau^{1},a^{1}), \ldots, \arg\max_{a^n \in \mathcal{A}^{n}} Q^{n}(\tau^{n},a^{n}))$. Two sufficient conditions, the additivity and the monotonicity, to satisfy IGM are proposed in \citet{Sunehag2017} and \citet{rashid2018qmix} respectively. In addition to the $V$ function and $Q$ function decomposition, QPLEX \citep{DBLP:conf/iclr/WangRLYZ21} considers implementing IGM in the dueling structure where $Q=V+A$. QPLEX only constrains the advantage functions to satisfy the IGM principle. The global advantage function is decomposed as $A_{tot}(\bm{\tau}, \bm{a})=\sum_{i=1}^{n}\lambda_{i}(\bm{\tau}, \bm{a})A_{i}(\bm{\tau}, a_{i})$, where $\lambda_{i}(\bm{\tau}, \bm{a}) > 0$. We evaluate the performance of Qmix in \Cref{tab:smac} and \Cref{tab:app_smac}. Integrating the IGM principle into A2PO without compromising the monotonic improvement guarantee is a desirable extension. Specifically, the advantage-based IGM establishes a connection between the global advantage function and the local advantage functions, and the advantage decomposition $A_{tot}(\bm{\tau}, \bm{a})=\sum_{i=1}^{n}\lambda_{i}(\bm{\tau}, \bm{a})A_{i}(\bm{\tau}, a_{i})$ will not jeopardize the derivation of the monotonic improvement guarantee.

\textbf{Convergence and optimality of MARL}. T-PPO \citep{Ye2022} firstly introduce a framework called Generalized Multi-Agent Actor-Critic with Policy Factorization (GPF-MAC), which consists of methods with factorized local policies and may become stuck in sub-optimality. To address this problems, T-PPO transforms a multi-agent MDP into a special "single-agent" MDP with a sequential structure. T-PPO transforms a multi-agent MDP into a "single-agent" MDP with a sequential structure to address this issue. T-PPO has been shown to produce an optimal policy if implemented properly. Theoretically, sequential update methods, such as A2PO and HAPPO, are also instances of GPF-MAC and may be stuck into sub-optimal policies. The main differences between A2PO and T-PPO include that A2PO updates the factorized policies sequentially and makes decisions simultaneously, while T-PPO makes decisions sequentially, and that A2PO does not introduce the virtual state and the sequential transformation framework network. And theoretically, T-PPO may compromise the monotonic improvement guarantee. In \Cref{tab:smac_t_ppo}, we compare A2PO, MAPPO and T-PPO on SMAC tasks empirically. A2PO is superior to T-PPO in the majority of tasks.

\begin{table}[htb]
    \centering
    \small
    \caption{Comparisons of A2PO, MAPPO and T-PPO.}
    \begin{tabular}{l|l|l|l|l}
        \toprule
        Map & Difficulty & MAPPO w/ PS & T-PPO w/ PS & A2PO w/ PS \\
        \midrule
        1c3s5z & Easy & $\bm{100(0.0)}$ & $99.8((0.0)$ & $\bm{100(0.0)}$ \\
        MMM2 & Super Hard & $90.6(8.9)$ & $81.6( 7.7)$ & $\bm{98.4(1.3)}$ \\
        3s5z\_vs\_3s6z & Super Hard & $82.8(19.2)$ & $85.5(5.2)$ & $\bm{93.8(19.8)}$ \\
        6h\_vs\_8z & Super Hard & $87.5(1.5)$ & $\bm{91.8(1.1)}$ & $90.6(1.3)$ \\
        corridor & Super Hard & $99.1(0.3)$ & $96.9(0.0)$ & $\bm{100(0.0)}$ \\
        \bottomrule
    \end{tabular}
    \label{tab:smac_t_ppo}
\end{table}

\section{The Related Work of Coordinate Descent}\label{app:rw}

Realizing the similarity between the sequential policy update scheme and the block coordinate descent algorithms, we borrow the optimization techniques in the coordinate descent algorithms to accelerate the optimization and amplify the convergence advantage over the simultaneous update scheme \citep{Gordon2015,Shi2017}. One of the critical questions in the coordinate descent algorithms is selecting the coordinate for the next-step optimization. \citet{Glasmachers2013,Lu2018} provided analyses of the convergence rate advantage of the Gauss-Southwell rule, i.e., greedily selecting the coordinate with the maximal gradient, over the random selection rule. 
We recognize the optimization of our surrogate objective \citep{Schulman2017} agent-by-agent as a block coordinate descent problem. Therefore the agent selection rule plays a crucial role in accelerating the optimization. Inspired by the coordinate selection rules, we propose greedy and semi-greedy agent selection rules and empirically show that the underperforming agents benefit from the greedily selecting agents.

\end{document}